\newtheorem{theorem}{Theorem}[section]
\newtheorem{claim}[theorem]{Claim}
\newtheorem{remark}[theorem]{Remark}
\newtheorem{example}[theorem]{Example}
\newtheorem{lemma}[theorem]{Lemma}
\newtheorem{assumption}{Assumption}
\newtheorem{discussion}[theorem]{Discussion}
\newtheorem{definition}[theorem]{Definition}
\newtheorem{observation}[theorem]{Observation}
\newtheorem{problem}{Problem}
\newcommand{\eps}{\varepsilon}
\renewcommand{\epsilon}{\varepsilon}
\newcommand{\eat}[1]{}
\newcommand{\I}{\mathbf{I}}
\newcommand{\R}{\mathbb{R}}
\renewcommand{\eqref}[1]{\ref{#1}}
\newcommand{\specialcell}[2][c]{%
  \begin{tabular}[#1]{@{}c@{}}#2\end{tabular}}
\newcommand{\calD}{{\mathcal{D}}}
\newcommand{\calF}{{\mathcal{F}}}
\newcommand{\calG}{{\mathcal{G}}}
\newcommand{\calQ}{{\mathcal{Q}}}
\newcommand{\calX}{{\mathcal{X}}}
\providecommand{\set}[1]{{\{#1\}}}
\renewcommand*{\@opargbegintheorem}[3]{\trivlist
      \item[\hskip \labelsep{\bfseries #1\ #2}] \textbf{(#3)}\ \itshape}
\title{\bf Fair Classification with Noisy Protected Attributes: \\ A Framework with Provable Guarantees}
\author{L. Elisa Celis \\ Yale University \and Lingxiao Huang \\ Huawei \and Vijay Keswani \\ Yale University \and Nisheeth K. Vishnoi \\ Yale University}
\begin{document}

\maketitle
	\begin{abstract}
		We present an optimization framework for learning a fair classifier in the presence of noisy perturbations in the protected attributes.
        Compared to prior work,	our framework can be employed with a very general class of linear and {\em linear-fractional}  fairness constraints, can handle multiple,  {\em non-binary} protected attributes, and outputs a  classifier that comes with provable guarantees on {\em both}  accuracy and fairness.
        Empirically, we show that our framework can be used to attain either statistical rate or false positive rate fairness guarantees with a minimal loss in accuracy, even when the noise is large, in two real-world datasets. 
	\end{abstract}
	
\clearpage
\tableofcontents
\clearpage

\section{Introduction}
	\label{sec:introduction}

	Fair classification has been a topic of intense study  due to the growing importance of addressing social biases in automated prediction.
	Consequently, a host of fair classification algorithms have been proposed that learn from data \cite{bellamy2018ai,zafar2017fairness,zhang2018mitigating,menon2018the,goel2018non,celis2019classification,hardt2016equality,menon2018the,fish2016confidence,goh2016satisfying,pleiss2017on, woodworth2017learning,dwork2018decoupled}.

    Fair classifiers 
    need metrics that capture the extent of similarity in performance for different groups.
    The performance of a classifier $f$ for  group $z$ can be defined in many ways and a general definition that captures the most common group performance metrics in literature is the following: given events $\xi$ and $\xi'$ (that depend on classifier and/or class label $Y$), the performance for group $z$ can be quantified as $\Pr[\xi \mid \xi', z]$ (Defn~\ref{def:performance}).
    For instance, we get group-specific statistical rate (a linear metric) by setting $\xi := (f{=}1)$ and $\xi' := \emptyset$ and group-specific false discovery rate (a ``linear-fractional metric'' \cite{celis2019classification}) by setting $\xi := (Y{=}0)$ and $\xi' := (f{=}1)$.
    Then, given a performance function, fairness constraints in classification impose a feasible classifier to have similar performance for all groups,
    by constraining the performance difference to be within $\tau$ of each other either multiplicatively or additively.
    For any fixed group performance function, multiplicative constraints imply additive constraints and, hence, traditionally studied \cite{calders2010three,zafar2017fair,zafar2017fairness,menon2018cost,celis2019classification} 
    (see also Remark~\ref{remark:add_multi}).

    {The choice of fairness metric depends on the context and application.
    For instance, in a lending setting, statistical rate metric can capture the disparity in loan approval rate across gender/race \cite{fairlending2010}.
    In a recidivism assessment setting, false positive rate metric is more relevant as it captures the disparity in proportion to defendants falsely assigned high-risk across racial groups \cite{angwin2016machine}.
    In other settings, e.g., healthcare, where the costs associated with positive classification are large, false discovery rate is alternately employed to assess the disparity in proportion to the treated patients who didn't require treatment across protected attribute types \cite{srivastava2019mathematical}.
    }
	
    Most of the aforementioned fair classification algorithms crucially assume that one has access to the protected attributes (e.g., race, gender) for training and/or deployment.
	Data collection, however, is a complex process and may contain recording and reporting errors, unintentional or otherwise~\cite{saez2013tackling}.
	Cleaning the data also requires making difficult and political decisions along the way, yet is often necessary especially when it comes to questions of race, gender, or identity \cite{melissa2000shades}.
    {Further, information about protected attributes may be missing entirely \cite{Data2004EliminatingHD}, or legally prohibited from being used directly, as in the case of lending applications for non-mortgage products in US \cite{federal1993closing}.}
	In such cases, protected attributes can be predicted from other data, however, we know that this process is itself contains errors and biases \cite{muthukumar2018understanding,buolamwini2018gender}. 
    The above scenarios raise a  challenge for existing fair classifiers as they may not achieve the same fairness as they would if the data were perfect. 
    This raises the question of learning fair classifiers in the presence of noisy protected attributes, and has attracted recent attention \cite{awasthi2020equalized,lamy2019noise,wang2020robust}. 

    \subsection{Our contributions} We study the setting of ``flipping noises'' where a protected type $Z=i$ may be flipped to $\hat{Z} = j$ with some known fixed probability $H_{ij}$ (Definition~\ref{def:flippingnoise}).
	We present an optimization framework for learning a fair classifier that can  handle: 
    1)	\textbf{flipping noises} in the train, test, and future samples,
	2) multiple, {\bf non-binary} protected attributes, and
	{
	3)	{multiple} fairness metrics, including the general class of {\bf linear-fractional} metrics (e.g., statistical parity, false discovery rate) in a multiplicative sense.
	}
	Our framework can learn a near-optimal fair classifier on the underlying dataset with high probability and comes with provable guarantees on both accuracy and fairness.
	
	We implement our framework using the logistic loss function \cite{freedman2009statistical} and examine it on 
	\textbf{Adult} and \textbf{COMPAS} datasets (Section~\ref{sec:empirical}).
	We consider sex and race as the protected attribute and generate noisy datasets varying flipping noise parameters. 
	For \textbf{COMPAS} dataset, the race protected attribute is non-binary.
    {We use statistical rate, false positive rate, and false discovery rate fairness metrics}, and compare against natural baselines and  existing noise-tolerant fair classification algorithms \cite{lamy2019noise, awasthi2020equalized, wang2020robust}.
	The empirical results show that, for most combinations of dataset and protected attribute (both binary and non-binary), our framework attains better fairness than an unconstrained classifier, with a minimal loss in accuracy.
    Further, in most cases, the fairness-accuracy tradeoff of our framework, for statistical and false positive rate, is also better than the baselines and other noise-tolerant fair classification algorithms,
    which either do not always achieve high fairness levels or suffer a larger loss in accuracy for achieving high fairness levels compared to our framework 
    (Table~\ref{tab:sr_results}).
	For false discovery rate (linear-fractional metric), our approach  has better fairness-accuracy tradeoff than baselines for \textbf{Adult} dataset and similar tradeoff as the best-performing baseline for \textbf{COMPAS} dataset (Table~\ref{tab:fdr_results}).

    \noindent
    \subsection{Techniques}	Our framework starts by designing \textit{denoised} constraints to achieve the desired fairness guarantees which take into account the noise in the protected attribute (Program \eqref{eq:progdenoised}).
	{The desired fairness is governed using an input parameter $\tau\in [0,1]$.}
	The key is to estimate each group-specific performance on the underlying dataset, which enables us to handle non-binary protected attributes.
	Concretely, we represent a group-specific performance as a ratio and estimate its numerator and denominator separately, which enables us to handle linear-fractional constraints.
    Subsequently, we show that an optimizer $f^\Delta$ of our program is provably {\bf both} approximately optimal and fair on the underlying dataset (Theorem~\ref{thm:denoised}) with high probability under a mild assumption
	that an optimizer $f^\star$ of the underlying program (Program \eqref{eq:progtarget}) has a non-trivial lower bound on the group-specific prediction rate (Assumption~\ref{assumption:ratio}).

	The constraints in our program enable us to capture the range of alteration in the probability of any classifier prediction for different protected attribute types due to flipping noises and, consequently, allow us to provide  guarantees on $f^\Delta$ (Theorem~\ref{thm:denoised}).
	The guarantee on accuracy uses the fact that an optimal fair classifier $f^\star$ for the underlying uncorrupted dataset is \textit{likely} to be feasible for Program \eqref{eq:progdenoised} as well, which ensures that the empirical risk of $f^\Delta$ is less than $f^\star$ (Lemma~\ref{lm:denoised1}).
    The guarantee on the fairness of $f^\Delta$ is attained by arguing that classifiers that considerably violate the desired fairness guarantee are infeasible for Program~\eqref{eq:progdenoised} with high probability (Lemma~\ref{lm:denoised2}).
    The key technical idea is to discretize the  space of unfair classifiers by carefully chosen multiple $\eps$-nets with different violation degrees to our denoised program, and upper bound the capacity of the union of all nets via a VC-dimension bound.

    \subsection{Related work}	
    \paragraph{Noise-tolerant fair classification. }
    \cite{lamy2019noise} consider binary protected attributes, linear fairness metrics including statistical rate (SR) and equalized odds constraints \cite{donini2018empirical}.
    They give a provable algorithm that achieves an approximate optimal fair classifier by down-scaling the ``fairness tolerance'' parameter in the constraints to adjust for the noise.
    In contrast, our approach estimates the altered form of fairness metrics in the noisy setting, and hence, can also handle linear-fractional metrics and non-binary attributes.
    \citet{awasthi2020equalized} study the performance of the equalized odds post-processing method of \citet{hardt2016equality} for a single noisy binary protected attribute. 
    {However, their analysis assumes that the protected attributes of test/future samples are uncorrupted.
	Our framework, instead, can handle multiple, non-binary attributes and noise in test/future samples.}
	\citet{wang2020robust} propose a robust optimization approach to solve the noisy fair classification problem.
	By proposing an iterative procedure to solve the arising min-max problem, they can only guarantee a stochastic classifier that is near-optimal w.r.t. accuracy and near-feasible w.r.t. fairness constraints on the underlying dataset {\em in expectation}, but not with high probability.
    Moreover, their iterative procedure relies on a minimization oracle, which is not always computationally tractable and their  practical algorithm does not share the guarantees of their theoretical algorithm for the output classifier. 
    In contrast, our denoised fairness program ensures that the optimal classifier is deterministic, near-optimal w.r.t. {\bf both} accuracy and near-feasible w.r.t. fairness constraints on the underlying dataset, with high probability.
    Additionally, we define performance disparity across protected attribute values as the ratio of the ``performance'' for worst and best-performing groups (multiplicative constraints), while existing works~\cite{lamy2019noise,awasthi2020equalized,wang2020robust} define the disparity using the additive difference across the protected attribute values (additive constraints); see Remark~\ref{rem:metric_comparison}.

	\paragraph{Fair classification.}  
	Many works have focused on formulating fair classification problems as  constrained optimization problems, 
	\cite{zafar2017fairness,zhang2018mitigating,menon2018the,goel2018non,celis2019classification},
	\cite{hardt2016equality,zafar2017fair,menon2018the,celis2019classification}, and developing algorithms for it.
	Another class of algorithms first learn an unconstrained optimal classifier and then shift the decision boundary according to the fairness requirement, e.g.,~\cite{fish2016confidence,hardt2016equality,goh2016satisfying,pleiss2017on, woodworth2017learning,dwork2018decoupled}.
	In contrast to our work, the assumption in all of these approaches is that the algorithm is given perfect information about the protected class.

	\noindent
	\paragraph{Data correction.} 
	Cleaning raw data is a significant step in the pipeline, and efforts to correct for missing or inaccurately coded attributes have been studied in-depth for protected attributes, e.g., in the context of the census \cite{melissa2000shades}.
	An alternate approach considers changing the composition of the dataset itself to correct for known biases in representation
	\cite{calders2009building,kamiran2009classifying,kamiran2012data}, 
	\cite{del2018obtaining,wang2019repairing}, 
	\cite{calmon2017optimized, celisdata}.
	In either case, the correction process, while important, can be imperfect and our work can help by starting with these improved yet imperfect datasets in order to build fair classifiers. 
	
	\noindent
	\paragraph{Unknown protected attributes.} 
	{	A related setting  is  when the information of some protected attributes is unknown.
	\cite{gupta2018proxy,chen2019fairness,kallus2020assessing} considered this setting of unknown protected attributes and designed algorithms to improve fairness or assess disparity.
	In contrast, our approach aims to derive necessary information from the observed protected attributes to design alternate fairness constraints using the noisy attribute.
	}
	
	\noindent
	\paragraph{Classifiers robust to the choice of datasets.}
	{
    \cite{friedler2018comparative} observed that fair classification algorithms may not be stable with respect to variations in the training dataset.
	\cite{hashimoto2018fairness} proved that empirical risk minimization amplifies representation disparity over time.
	}
	Towards this, certain variance reduction or stability techniques have been introduced; see e.g., 
	\cite{huang2019stable}.
	However, their approach cannot be used to learn a classifier that is provably fair over the underlying dataset.

	\noindent
	\paragraph{Noise in labels.}
	\citet{blum2020recovering,Biswas2020EnsuringFU}  study fair classification when the label in the input dataset is noisy. 
	The main difference of these from our work is that they consider noisy {\em labels} instead of noisy protected attributes, which makes our denoised algorithms very different 
    {since the accuracy of protected attributes mainly relates to the fairness of the classifier but the accuracy of labels primarily affect the empirical loss}. 

\section{The model}
	\label{sec:problem}
	
	Let $\calD=\calX\times [p]\times\left\{0,1\right\}$ denote the underlying domain ($p\geq 2$ is an integer).
	Each sample $(X,Z,Y)$ drawn from $\calD$ contains a protected attributes $Z$, a class label $Y\in \left\{0,1\right\}$, and non-protected features $X \in \calX$.
	Here, we discuss a single protected attribute,  $Z=[p]$, and generalize our model and results to multiple protected attributes in Section~\ref{sec:complete_generalization}.
	We  assume that $X$ is a $d$-dimensional vector, for a given $d \in \mathbb{N}$, i.e.,  $\calX\subseteq \R^d$.
	Let $S=\left\{s_a=\left(x_a,z_a,y_a\right)\in \calD\right\}_{a\in [N]}$ be the (underlying, uncorrupted) dataset.
	Let $\calF\subseteq \left\{0,1\right\}^\calX$ denote a family of all possible allowed classifiers. 
	Given a loss function $L : \calF \times \calD \rightarrow \R_{\geq 0}$, the goal of unconstrained classification is to find a classifier $f\in \calF$ that minimizes \emph{the empirical risk} $\frac{1}{N}\sum_{a\in [N]} L(f, s_a)$.

	\noindent
	\paragraph{Fair classification and fairness metrics.}
	We consider the problem of classification for a general class of fairness metrics.
	Let $D$ denote the empirical distribution over $S$, i.e., selecting each sample $s_a$ with probability $1/N$.

	\begin{definition}[\bf{Linear/linear-fractional group performance functions~\cite{celis2019classification}}]
		\label{def:performance}
		Given a classifier $f\in \calF$ and $i\in [p]$, we call $q_i(f)$ the group performance of $Z=i$ if $\textstyle{q_i(f)=\Pr_D\left[\xi(f)\mid \xi'(f), Z=i\right]}$ for some events $\xi(f), \xi'(f)$ that might depend on the choice of $f$.
		If $\xi'$ does not depend on the choice of $f$, $q$ is said to be \textbf{linear}; otherwise, $q$ is said to be \textbf{linear-fractional}.
	\end{definition}
	
	\noindent
	At a high level, a classifier $f$ is considered to be fair w.r.t. $q$ if $q_1(f)\approx \cdots \approx q_p(f)$.
	Definition~\ref{def:performance} is general and contains many fairness metrics.
	For instance, if $\xi := (f=1)$ and $\xi':= \emptyset$, we have $q_i(f)= \Pr_D\left[f=1\mid Z=i\right]$ which is linear and called the statistical rate.
	If $\xi:=(Y=0)$ and $\xi':= (f=1)$, we have $q_i(f)= \Pr_D\left[Y=0\mid f=1, Z=i \right]$ which is linear-fractional and called the false discovery rate.
	See Table 1 in~\cite{celis2019classification} for a comprehensive set of special cases.
	Given a group performance function $q$, we define $\Omega_q: \calF\times \calD^\star \rightarrow [0,1]$ to be
	$
	\textstyle \Omega_q (f, S) := \min_{i\in [p]} q_i(f)/ \max_{i\in [p]} q_i(f)
	$
	as a specific fairness metric.
	Then we define the following fair classification problem: Given a group performance functions $q$ and a threshold $\tau\in [0,1]$, the goal is to learn an (approximate) optimal fair classifier $f\in \calF$ of the following program:
			\begin{equation} \tag{\bf TargetFair}
			\label{eq:progtarget}
			\begin{split}
			& \textstyle{\min_{f\in \calF} \frac{1}{N}\sum_{a\in [N]} L(f, s_a) \quad s.t.} \\
			& ~ \textstyle{\Omega_{q}(f, S)\geq \tau.}
			\end{split}
			\end{equation}
	\noindent
	For instance, we can set $q$ to be the statistical rate and $\tau = 0.8$ to encode the 80\% disparate impact rule \cite{biddle2006adverse}.
	Note that $\Omega_q(f, S)\geq 0.8$ is usually non-convex for certain $q$.
    Often, one considers a convex function as an estimate of $\Omega_q(f, S)$, for instance $\Omega_q(f, S)$ is formulated as a covariance-type function 
	in \cite{zafar2017fairness}, and as the weighted sum of the logs of the empirical estimate of favorable bias in \cite{goel2018non}.

\begin{remark}[\bf{Multiplicative v.s. additive fairness constraints}]
\label{remark:add_multi}
We note that	the fairness constraints mentioned above ($\Omega_q$) are \textbf{multiplicative} and appear in \cite{calders2010three,zafar2017fair,zafar2017fairness,menon2018cost,celis2019classification}.
Multiplicative fairness constraints control disparity across protected attribute values by ensuring that the ratio of the ``performance'' for the worst and best-performing groups are close.
	In contrast, related prior work for noisy fair classification \cite{lamy2019noise,awasthi2020equalized,wang2020robust} usually consider \textbf{additive} fairness constraints, i.e., of the form $\Omega'_q(f,S):= \max_{i\in [p]} q_i(f) - \min_{i\in [p]} q_i(f)\leq \tau'$ for some $\tau'\in [0,1]$ (difference instead of ratio).
	Specifically, letting $\tau' = 0$ in the additive constraint is equivalent to letting $\tau = 1$ in the multiplicative constraint with respect to the same group performance function $q$.
Note that \textbf{multiplicative implies additive}, i.e., given $\tau\in [0,1]$, we have that
	$\Omega(f,S)\geq \tau$ implies that $\Omega'(f,S)\leq 1-\tau$. 
However, the converse is not true: for instance, given arbitrary small $\tau' > 0$, we may learn a classifier $f^\star$ under additive constraints such that $\min_{i\in [p]} q_i(f^\star) = 0$ and $\max_{i\in [p]} q_i(f^\star) = \tau'$; however, such $f^\star$ violates the 80\% rule \cite{biddle2006adverse} that is equivalent to $\Omega_q(f,S)\geq 0.8$.
\end{remark}
	
\noindent
\paragraph{Noise model.} If $S$ is observed, we can directly use Program~\eqref{eq:progtarget}.
	However, as discussed earlier, the protected attributes in $S$ may be imperfect and we may only observe a noisy dataset $\widehat{S}$ instead of $S$.
	We consider the following noise model on the protected attributes  \cite{lamy2019noise,awasthi2020equalized,wang2020robust}.
	\begin{definition}[\bf{Flipping noises}]
	\label{def:flippingnoise}
		Let $H\in [0,1]^{p\times p}$ be a stochastic matrix with $\sum_{j\in [p]}H_{ij}=1$ and $H_{ii}>0.5$ for any $i\in [p]$.
		Assume each protected attribute $Z=i$ ($i\in [p]$) is observed as 
		$\widehat{Z} = j$ with probability $H_{ij}$, for any $j\in [p]$.
	\end{definition}

\eat{
	\begin{definition}[Flipping noises for binary $Z$]
		\label{def:flippingnoise}
		Suppose $p=2$, i.e., $Z\in \left\{0,1\right\}$.
		Let $\eta_0, \eta_1\in (0,0.5)$ be noise parameters.
		For each $a\in [N]$, we assume that the $a$th \emph{noisy} sample $\widehat{s}_a = (x_a, \widehat{z}_a, y_a)$ is realized as follows:
		\begin{itemize}
			\item If $z_a=0$, then $\widehat{z}_a=0$ with probability $1-\eta_0$ and $\widehat{z}_a=1$ with probability $\eta_0$.
			\item If $z_a=1$, then $\widehat{z}_a=0$ with probability $\eta_1$ and $\widehat{z}_a=1$ with probability $1-\eta_1$.
		\end{itemize}
	\end{definition}
	
}

	\noindent
	Note that $H$ can be \textbf{non-symmetric}.
	The assumption that $H_{ii}> 0.5$ ensures that the total flipping probability of each protected attribute is strictly less than a half. 
	Consequently, $H$ is a {diagonally-dominant} matrix, which is always non-singular \cite{horn2012matrix}.
    {Due to noise, directly applying the same fairness constraints on $\widehat{S}$ may introduce bias on $S$ and, hence, modifications to the constraints are necessary; see Section~\ref{sec:discussion} for a discussion. 
	}
	
	\begin{remark}[\bf{Limitation of Definition~\ref{def:flippingnoise}}]
		In practice, we may not know $H$ explicitly, and can only estimate them by, say, finding a small appropriate sample of the data for which ground truth is known (or can be found), and computing estimates for $H$ accordingly \cite{kallus2020assessing}.
		{For instance, $H$ could be inferred from prior data that contains both true and noisy (or \textit{proxy}) protected attribute values; e.g., existing methods, such as Bayesian Improved Surname Geocoding method \cite{elliott2009using}, employ census data to construct conditional race membership probability models given surname and location.}
		In the following sections, we assume $H$ is given.
		For settings in which the estimates of $H$ may not be accurate, we analyze the influences of the estimation errors at the end of Section~\ref{sec:thm}. 
	\end{remark}

	\begin{problem}[\bf{Fair classification with noisy protected attributes}]
		\label{problem:general}
		Given a group performance functions $q$, a threshold $\tau\in [0,1]$, and a noisy dataset $\widehat{S}$ with noise matrix $H$, the goal is to learn an (approximate) optimal fair classifier $f\in \calF$ of Program~\eqref{eq:progtarget}.
	\end{problem}
	
\eat{	
	\begin{problem}[\bf{Fair classification with  noisy protected attributes}]
		\label{problem:simple}
		Given a binary protected attribute ($p=2$), a fairness constraint of the form $\gamma(f,S)\geq \tau$, a noisy dataset $\widehat{S}$ drawn from the underlying dataset $S$ with flipping noise parameters $\eta_0, \eta_1\in (0,0.5)$, and $\lambda \in (0,0.5)$ for which Assumption \ref{assumption:ratio} holds, the goal is to learn an (approximately) optimal fair classifier $f\in \calF$ of Program~\eqref{eq:progtarget}.
	\end{problem}
}

\section{Framework and theoretical results}
	\label{sec:denoised}
    We show how to learn an approximately fair classifier w.h.p. for Problem~\ref{problem:general} (Theorem~\ref{thm:denoised}).
	This result is generalized to multiple protected attributes/fairness metrics in Section~\ref{sec:complete_generalization}.
	The approach is to design \emph{denoised fairness constraints} over $\widehat{S}$ (Definition~\eqref{def:denoised}) that estimate the underlying constraints of Program~\eqref{eq:progtarget}, and solve the  constrained optimization problem (Program~\eqref{eq:progdenoised}).
	Let $f^\star\in \calF$ denote an optimal classifier of Program~(\eqref{eq:progtarget}).
	Our result relies on a natural assumption on $f^\star$.

	\begin{assumption}
		\label{assumption:ratio}
		There exists a constant $\lambda\in (0, 0.5)$ such that $\textstyle{\min_{i\in [p]}\Pr_D\left[\xi(f^\star), \xi'(f^\star), Z=i \right]\geq \lambda}$.
	\end{assumption}

	\noindent 
	Note that $\lambda$ is a lower bound for $\min_{i\in [p]}q_i(f^\star)$.
	In many applications we expect this assumption to hold.  
	For instance, $\lambda\geq 0.1$ if there are at least 20\% of samples with $Z=i$ and $\Pr_D\left[\xi(f^\star), \xi'(f^\star), Z=i\right]\geq 0.5$ for each $i\in [p]$.
	In practice, exact $\lambda$ is unknown but we can set $\lambda$ according to the context.
	This assumption is not strictly necessary, i.e., we can simply set $\lambda = 0$, but the scale of $\lambda$ decides certain capacity of classifiers that we do not want to learn, which affects the performance of our approaches; see Remark~\ref{remark:2}.

	\subsection{Our optimization framework}
	\label{sec:program}
	
    Let $\widehat{D}$ denote the empirical distribution over $\widehat{S}$.
	Let  $\widehat{u}(f) := \left(\Pr\left[\xi(f), \xi'(f), \widehat{Z}=i\right]\right)_{i\in [p]} $ and $	\textstyle \widehat{w}(f) := \left(\Pr\left[\xi'(f), \widehat{Z}=i\right]\right)_{i\in [p]}$.
    If $D$ and $\widehat{D}$ are clear from the context, we denote $\Pr_{D,\widehat{D}}\left[\cdot\right]$ by $\Pr\left[\cdot\right]$.
	Let $M:= \max_{i\in [p]} \|(H^\top)^{-1}_i\|_1$. 
	%
	Define the denoised fairness constraints and the induced program as follows.

	\begin{definition}[\bf{Denoised fairness constraints}]
		\label{def:denoised}
		Given a classifier $f\in \calF$, for $i\in [p]$ let
		$
        \textstyle	\Gamma_i(f):= \frac{(H^\top)^{-1}_i \widehat{u}(f)}{(H^\top)^{-1}_i \widehat{w}(f)}$. 
        Let $\delta\in (0,1)$ be a fixed constant and $\tau\in [0,1]$ be a threshold.
		We define our denoised fairness program to be
	{
		\begin{equation} \tag{\bf DFair}
		\label{eq:progdenoised}
		\begin{split}
		&\min_{f\in \calF} \textstyle{\frac{1}{N}\sum_{a\in [N]} L(f, \widehat{s}_a) ~ s.t.} 
		\textstyle{~(H^\top)^{-1} \widehat{u}(f) \geq (\lambda-M\delta) \mathbf{1}}, \\
		&\textstyle{ ~ \min_{i\in [p]} \Gamma_i(f)  \geq (\tau-\delta)\cdot \max_{i\in [p]} \Gamma_i(f).}
		\end{split}
		\end{equation}
	}
	\end{definition}
	{$\delta$ is used as a relaxation parameter depending on the context.
	By definition, we can regard $M$ as a metric that measures how noisy $H$ is.
	Intuitively, as diagonal elements $H_{ii}$ increases, eigenvalues of $H$ increase, and hence, $M$ decreases.
	Also note that $M\geq 1$ since $M$ is at least the largest eigenvalue of $H^{-1}$ and $H$ is a non-singular stochastic matrix whose largest eigenvalue is $1$.
	Intuitively, $\Gamma_i(f)$ is designed to estimate $\Pr\left[\xi(f)\mid \xi'(f), Z=i\right]$: its numerator approximates $\Pr\left[\xi(f),\xi'(f), Z=i\right]$ and its denominator approximates $\Pr\left[\xi'(f), Z=i\right]$.
	For the denominator,  since Definition~\ref{def:flippingnoise} implies $\Pr\left[\widehat{Z}=j \mid \xi'(f),  Z=i\right]\approx H_{ij}$,
	we can estimate $\Pr\left[\xi'(f), Z=i\right]$ by a linear combination of $\Pr\left[\xi'(f), \widehat{Z}=j\right]$, i.e., $(H^\top)^{-1}_i\widehat{w}(f)$. 
	Similar intuition is behind the estimate of the numerator $(H^\top)^{-1}_i\widehat{u}(f)$.
	}
	Due to how $\Gamma_i$s are chosen, the first constraint is designed to estimate Assumption~\eqref{assumption:ratio}, and the last constraint is designed to estimate $\Omega_q(f,S)\geq \tau$.

	This design ensures that an optimal fair classifier $f^\star$ satisfies our denoised constraints w.h.p.  (Lemma~\ref{lm:denoised1}), and hence, is a feasible solution to Program~\eqref{eq:progdenoised}.
	Consequently, the empirical risk of an optimal classifier $f^\Delta$ of Program~\eqref{eq:progtarget} is at most that of $f^\star$.
	The main difficulty is to prove that $f^\Delta$ achieves fairness on the underlying dataset $S$, since an unfair classifier may also satisfy our denoised constraints and is output as a feasible solution of Program~(\eqref{eq:progdenoised}).
	To handle this, we show all unfair classifiers that are infeasible for Program~\eqref{eq:progcon} should violate our denoised constraints (Lemma~\ref{lm:denoised2}).
	For this, we verify that the probability of each unfair classifier being feasible is exponentially small, and bound certain “capacity” of unfair classifiers (Definition~\ref{def:capacity}) using Assumption~\ref{assumption:ratio}.

	\subsection{Main theorem: Performance of  Program~\textbf{\eqref{eq:progdenoised}}}
	\label{sec:thm}

	Our main theorem shows that solving Program~\eqref{eq:progdenoised} leads to a classifier that does not increase the empirical risk (compared to the optimal fair classifier) and only slightly violates the fairness constraint.
	Before we state our result, we need the following definition that measures the complexity of $\calF$. 

	\begin{definition}[\bf{VC-dimension of $(S,\calF)$~\cite{har-peled2011geometric}}]
		\label{def:vc}
		Given a subset $A\subseteq [N]$, we define 
		$
		\textstyle \calF_A:=\left\{\left\{a\in A: f(s_a)=1 \right\} \mid f\in \calF \right\}
		$
		to be the collection of subsets of $A$ that may be shattered by some $f\in \calF$.
		The VC-dimension of $(S,\calF)$ is the largest integer $t$ such that there exists a subset $A\subseteq [N]$ with $|A|=t$ and $|\calF_A|=2^t$.
	\end{definition}
	
	\noindent
	Suppose $\calX\subseteq \R^d$ for some integer $d\geq 1$.
		If $\calF = \left\{0,1\right\}^\calX$, we observe that the VC-dimension is $t=N$.
		Several commonly used families $\calF$ have VC-dimension $O(d)$, including linear threshold functions~\cite{har-peled2011geometric}, kernel SVM and gap tolerant classifiers~\cite{burges1998a}.
		Using this definition, the main theorem in this paper is as follows.
	
		\begin{theorem}[\bf{Performance of Program~\eqref{eq:progdenoised}}]
		\label{thm:denoised}	
		Suppose the VC-dimension of $(S,\calF)$ is $t\geq 1$.
		Given any flipping noise matrix $H\in [0,1]^{p\times p}$, $\lambda\in (0,  0.5)$ and $\delta\in (0, 1)$, let $f^\Delta\in \calF$ denote an optimal fair classifier of Program~\eqref{eq:progdenoised}.
		With probability at least $1-O(p e^{-\frac{\lambda^2 \delta^2 n}{60000 M^2}+ t\ln(50M/\lambda \delta)})$, we have $\frac{1}{N} \sum_{a\in [N]} L(f^\Delta, s_a) \leq \frac{1}{N} \sum_{a\in [N]} L(f^\star, s_a)$ and $\Omega_q(f^\Delta,S)\geq \tau-3\delta$.
	\end{theorem}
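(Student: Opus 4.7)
The plan is to follow the two-lemma decomposition hinted at in the excerpt: (i) show that the optimal fair classifier $f^\star$ of Program~\eqref{eq:progtarget} is feasible for Program~\eqref{eq:progdenoised} with high probability, which will deliver the accuracy bound, and (ii) show that every classifier $f\in\calF$ with $\Omega_q(f,S)<\tau-3\delta$ is infeasible for Program~\eqref{eq:progdenoised} with high probability, which will deliver the fairness bound for the optimizer $f^\Delta$. Combining these two events with the observation that $L(f,\cdot)$ does not depend on the protected coordinate (classifiers in $\calF\subseteq\{0,1\}^\calX$ read only $X$), so that $L(f,s_a)=L(f,\widehat{s}_a)$, converts the risk inequality on the noisy sample into the desired inequality on the true sample.

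The analytic core is a uniform concentration inequality for the denoised estimators $(H^\top)^{-1}_i\widehat{u}(f)$ and $(H^\top)^{-1}_i\widehat{w}(f)$ over $f\in\calF$. First I would check the Fisher-consistency identities $\mathbb{E}[\widehat{u}_j(f)]=\sum_i H_{ij}\Pr[\xi(f),\xi'(f),Z=i]$ and the analogue for $\widehat{w}$, so that multiplication by $(H^\top)^{-1}$ recovers the true joint probabilities in expectation. For a fixed $f$, Hoeffding gives $|\widehat{u}_j(f)-\mathbb{E}\widehat{u}_j(f)|\leq O(\sqrt{\log(1/\beta)/n})$ coordinate-wise with probability at least $1-\beta$, and applying $(H^\top)^{-1}$ inflates the deviation by a factor of at most $M=\max_i\|(H^\top)^{-1}_i\|_1$; this explains the $M^2$ in the denominator of the exponent. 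I would then promote the pointwise bound to a uniform bound over $\calF$ via Sauer--Shelah: the restriction of $\calF$ to the sample admits at most $O(n^t)$ distinct labellings, and since $\xi(f),\xi'(f)$ depend on $f$ only through $(f(x_a))_a$, the same bound applies to the number of distinct realisations of $\widehat{u}(f),\widehat{w}(f)$. This is the source of the additive $t\ln(50M/\lambda\delta)$ term in the exponent.

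Plugging this into the feasibility of $f^\star$ is then routine and establishes Lemma~\ref{lm:denoised1}. Assumption~\ref{assumption:ratio} guarantees $\Pr[\xi(f^\star),\xi'(f^\star),Z=i]\geq\lambda$, so with high probability $(H^\top)^{-1}_i\widehat{u}(f^\star)\geq\lambda-M\delta$, verifying the first denoised constraint. Moreover each $\Gamma_i(f^\star)$ differs from $q_i(f^\star)$ by an $O(\delta)$ additive error in both numerator and denominator; combined with $\min_i q_i(f^\star)\geq\lambda$, a short ratio calculation converts this additive slack into at most a multiplicative $\delta$-slack, yielding $\min_i\Gamma_i(f^\star)\geq(\tau-\delta)\max_i\Gamma_i(f^\star)$. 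Hence $f^\star$ is feasible for Program~\eqref{eq:progdenoised}, so $\sum_a L(f^\Delta,\widehat{s}_a)\leq\sum_a L(f^\star,\widehat{s}_a)$, which is the desired accuracy bound.

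The main obstacle is Lemma~\ref{lm:denoised2}. A naive union bound over $\calF$ is too expensive because classifiers whose fairness violation barely exceeds $3\delta$ fail the denoised constraint only by a small slack, yet they can be numerous. The plan is to stratify the unfair set $\{f:\Omega_q(f,S)\leq\tau-3\delta\}$ by violation degree into geometrically spaced buckets, build a separate $\eps$-net in each bucket whose cardinality is controlled by the VC-dimension (Definition~\ref{def:vc}), and match the larger failure slack in deeper buckets with its correspondingly larger concentration exponent. The first denoised inequality $(H^\top)^{-1}\widehat{u}(f)\geq(\lambda-M\delta)\mathbf{1}$ is used crucially here to keep the denominator of $\Gamma_i$ bounded below, ruling out pathological classifiers with vanishing positive-prediction rate on some group that could otherwise spuriously satisfy the ratio constraint. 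A union bound first over the nets (costing the $e^{t\ln(50M/\lambda\delta)}$ capacity factor) and then over the $p$ protected groups yields the claimed probability $1-O(pe^{-\lambda^2\delta^2 n/(60000M^2)+t\ln(50M/\lambda\delta)})$, completing the proof.
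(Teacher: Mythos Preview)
Your proposal is essentially correct and follows the paper's approach: the two-lemma decomposition (Lemmas~\ref{lm:denoised1} and~\ref{lm:denoised2}), pointwise concentration to show $f^\star$ is feasible, and the stratified $\eps$-net argument over the unfair classifiers are exactly what the paper does. Two small corrections. First, the $t\ln(50M/\lambda\delta)$ capacity term does not come from Sauer--Shelah counting of labellings on the sample (that would contribute $t\ln n$, a different and for large $n$ weaker bound); it comes from Haussler's packing bound $M_\eps(\calF)=O(\eps^{-t})$ (Theorem~\ref{thm:vc_net}) applied at scale $\eps$ of order $\lambda\delta/M$, which you do invoke correctly in your final paragraph. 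Second, Lemma~\ref{lm:denoised1} is a statement about a single fixed classifier and needs no uniform bound at all---the paper proves feasibility of $f^\star$ with failure probability $O(pe^{-\lambda^2\delta^2 n/(2400M^2)})$ using only pointwise Chernoff; all of the VC machinery lives in Lemma~\ref{lm:denoised2}.
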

	
	\noindent
	Theorem~\ref{thm:denoised} indicates that $f^\Delta$ is an approximate fair classifier for Problem~\ref{problem:general} with an exponentially small failure probability to the data size $n$.
	A few remarks are in order.

	\begin{remark}
		\label{remark:2}
		Observe that the success probability depends on $1/M$, $\delta$, $\lambda$ and the VC-dimension $t$ of $(S,\calF)$.
		If $1/M$ or $\delta$ is close to $0$, i.e., the protected attributes are very noisy or there is no relaxation for $\Omega_q(f,S)\geq \tau$ respectively, the success probability guarantee naturally tends to be $0$.
		Next, we discuss the remaining parameters $\lambda$ and $t$.

	\noindent	
		\textbf{Discussion on $\lambda$.}
		Intuitively, the success probability guarantee tends to $0$ when $\lambda$ is close to $0$.
		For instance, consider $q$ to be the statistical rate (Eq.~(\eqref{eq:gamma})). 
		Suppose there is only one sample $s_1$ with $Z=1$ for which $f^\star(s_1)=1$, i.e., $\Pr_D\left[f^\star=1, Z=1\right]=1/N$ and, therefore, $\lambda \leq 1/N$.
		To approximate $f^\star$, we may need to label $f(s_1)=1$.
		However, due to the flipping noises, it is likely that we can not find out the specific sample $s_1$ to label $f(s_1)=1$, unless we let the classifier prediction be $f = 1$ for all samples, which leads to a large empirical risk (see discussion in Section~\ref{sec:alg1}).
		In other words, the task is tougher for smaller values of $\lambda$.
		
	\noindent	
		\textbf{Discussion on $t$.}
		The success probability also depends on $t$ which captures the complexity of $\calF$.
		Suppose $\calX\subseteq \R^d$ for some integer $d\geq 1$.
		The worst case is $\calF = \left\{0,1\right\}^\calX$ with $t=N$, which takes the success probability guarantee to 0.
		On the other hand, if the VC-dimension does not depend on $N$, e.g., only depends on $d \ll N$, the failure probability is exponentially small on $N$.
		For instance, if $\calF$ is the collection of all linear threshold functions, i.e., each classifier $f\in \calF$ has the form $f(s_a) = \I\left[\langle x_a, \theta \rangle \geq r \right]$ for some vector $\theta\in \R^d$ and threshold $r\in \R$.
		We have $t\leq d+1$ for an arbitrary dataset $S$~\cite{har-peled2011geometric}.
	\end{remark}
	
	\begin{remark}
	\label{remark:ex_prob}
	   The $f^\Delta$ guaranteed by our theorem is \textbf{both} approximately fair and  optimal w.h.p.
	   This is in contrast to learning a stochastic classifier $\tilde{f}\sim \Lambda$ over $\calF$, that is in expectation near-optimal for both accuracy and fairness, e.g., $\mathbb{E}_{\tilde{f}\sim \Lambda}\left[\frac{1}{N} \sum_{a\in [N]} L(\tilde{f}, s_a) \right] \leq \frac{1}{N} \sum_{a\in [N]} L(f^\star, s_a)$ and $\mathbb{E}_{f\sim \Lambda}\left[\Omega_q(\tilde{f},S)\right]\geq \tau-3\delta$.
		For instance, suppose $f_1, f_2\in \calF$ such that the empirical risk of $f_1$ is $\frac{3}{2N} \sum_{a\in [N]} L(f^\star, s_a)$ and $\Omega(f_1,S)= \tau/2$, while the empirical risk of $f_2$ is $\frac{1}{2N} \sum_{a\in [N]} L(f^\star, s_a)$ and $\Omega(f_2,S)= 3\tau/2$.
If $\Lambda$  is uniform over $f_1$ and $f_2$,  it satisfies the above two  inequalities,
		But, neither of $f_i$s 
		is near-optimal for  accuracy and fairness.
	\end{remark}
	
	\noindent
	\paragraph{Estimation errors.}	In practice, we can use prior work on noise parameter estimation \cite{menon2015learning, liu2015classification, northcutt2017learning} to obtain estimates of $H$, say $H'$.
		The scale of estimation errors also affects the performance of our denoised program.
		In Appendix~\ref{sec:influence}, we provide a technical discussion on the effect of the estimation errors on the performance.
		Concretely, we consider a specific setting that $p=2$ and $q$ is the statistical rate.
		Define $\zeta := \max_{i,j\in [p]}|H_{ij}- H'_{ij}|$ to be the additive estimation error.
		We show there exists constant $\alpha > 0$ such that $\Omega_q(f^\Delta, S) \geq \tau - 3\delta - \zeta \alpha$ holds.
		Compared to Theorem~\ref{thm:denoised}, the estimation errors introduce an additive $\zeta \alpha$ error term for the fairness guarantee of our denoised program.

\subsection{Proof Overview of Theorem~\ref{thm:denoised} for $p=2$ and statistical rate}
	\label{sec:proof}
	
	For ease of understanding, we consider a specific case in the main body: a binary sensitive attribute $Z\in \left\{0,1\right\}$ and statistical rate constraints, i.e., 
	\begin{eqnarray}
	\label{eq:gamma}
	\begin{split}
	\textstyle \gamma(f, S):=\frac{\min_{i\in \left\{0,1\right\}}\Pr_{D}\left[f=1\mid Z=i\right]}{\max_{i\in\left\{0,1\right\}}\Pr_{D}\left[f=1\mid Z=i\right]} \geq \tau.
	\end{split}
	\end{eqnarray}
	Consequently, we would like to prove $\gamma(f^\Delta, S)\geq \tau-3\delta$ to obtain Theorem~\ref{thm:denoised}.
	The proof for the general Theorem~\ref{thm:denoised} can be found in Section~\ref{sec:complete_generalization}.
	We denote $\eta_0 = H_{01}$ to be the probability that $\widehat{Z}=1$ conditioned on $Z=0$, and $\eta_1 = H_{10}$ to be the probability that $\widehat{Z}=0$ conditioned on $Z=1$. 
	By Assumption~\ref{assumption:ratio}, we have $\eta_0, \eta_1 < 0.5$.
	Combining with Definition~\ref{def:flippingnoise}, we have $\footnotesize \textstyle{H = \begin{bmatrix}
	1-\eta_0 & \eta_0 \\
	\eta_1 & 1-\eta_1
	\end{bmatrix}}$,
	which implies that $M = \frac{1}{1-\eta_0-\eta_1}$.
	Consequently, Assumption~\ref{assumption:ratio} is equivalent to
	$
    \textstyle		\min_{i\in \left\{0,1\right\}} \Pr_{D}\left[ f^\star=1, Z=i \right] \geq \lambda,
	$
	and for $i\in \left\{0,1\right\}$,
	$
    \textstyle		\Gamma_i(f):= \frac{(1-\eta_{1-i})\Pr\left[f=1, \widehat{Z}=i\right]-\eta_1\Pr\left[f=1, \widehat{Z}=1-i\right]}{(1-\eta_{1-i})\widehat{\mu}_i-\eta_{1-i} \widehat{\mu}_{1-i}}.
    $
    We define the denoised statistical rate to be 
	$
	\gamma^\Delta(f,\widehat{S}) := \min\left\{\frac{\Gamma_0(f)}{\Gamma_1(f)}, \frac{\Gamma_1(f)}{\Gamma_0(f)}\right\},
	$
	and our denoised constraints become
	\begin{eqnarray}
	\label{eq:denoised}
	\scriptsize
	\begin{cases}
	&(1-\eta_{1-i})\Pr\left[f=1, \widehat{Z}=i\right]-\eta_{1-i}\Pr\left[f=1, \widehat{Z}={1-i}\right] \\
	&\geq  (1-\eta_0-\eta_1)\lambda-\delta,  \quad i\in \left\{0,1\right\}\\
	&\gamma^\Delta(f,\widehat{S}) \geq \tau-\delta,
	\end{cases}
	\end{eqnarray}

	\noindent
	\paragraph{Proof overview.}
	The proof of Theorem \ref{thm:denoised} relies on two lemmas: 1) The first shows that $f^\star$ is a feasible solution for  Constraints~(\eqref{eq:denoised}) (Lemma~\ref{lm:denoised1}). 
	The feasibility of $f^\star$ for the first constraint of  (\eqref{eq:denoised}) is guaranteed by Assumption~\ref{assumption:ratio} and for the second constraint of (\eqref{eq:denoised}) follows from the fact that $\Gamma_i(f)$ ($i\in \left\{0,1\right\}$) is a good estimation of $\Pr\left[\xi(f),\xi'(f)\mid Z=i\right]$ by the Chernoff bound.
	2) The second lemma shows that w.h.p. ($1-F$ for small $F$), all unfair classifiers $f\in \calF$ that are either not feasible for Program~\eqref{eq:progcon} or violate Assumption~\ref{assumption:ratio}, violate Constraint~(\eqref{eq:denoised}) (Lemma~\ref{lm:denoised2}).
	Since the space of unfair classifiers is continuous, the main difficulty is  to upper bound the (violating) probability $F$.
	Towards this, we first divide the collection of all unfair classifiers into multiple groups depending on how much they violate  Constraint~(\eqref{eq:denoised}) (Definition~\ref{def:bad}).
	Then, for each group $G_i$, we construct an $\eps_i$-net $\calG_i$ (Definition~\ref{def:net}) such that ensuring all classifiers $f\in \calG_i$ violate Constraint~(\eqref{eq:denoised}) is sufficient to guarantee that all classifiers in $\calG_i$ violate Constraint~(\eqref{eq:denoised}). 
	Here, $\eps_i$ is chosen to depend on the degree of violation of $G_i$.
	Using Chernoff bounds, we show that the probability each unfair classifier on the net $\calG_i$ is feasible to Constraint~(\eqref{eq:denoised}) is 
	$\exp(-O(1-\eta_0-\eta_1)^2\lambda^2 n))$.
	Hence, as $\lambda$ decreases, it is more likely that an unfair classifier is feasible for Constraint~(\eqref{eq:denoised}).
To bound the total violating probability,	it remains to bound the number of classifiers in the union of these nets (Definition~\ref{def:capacity}).
	The idea is to apply the relation between VC-dimension and $\eps$-nets (Theorem~\ref{thm:vc_net}).

	The two lemmas imply that the empirical risk of $f^\Delta$ is guaranteed to be at most that of $f^\star$ and $f^\Delta$ must be fair over $S$ (Theorem~\ref{thm:denoised}).
{Overall, the main technical contribution is to discretize the  space of unfair classifiers by carefully chosen multiple $\eps$-nets with different violation degrees to our denoised program, and upper bound the capacity of the union of all nets via a VC-dimension bound.}

	We now present the formal statements of the two main lemmas: Lemmas~\ref{lm:denoised1} and~\ref{lm:denoised2}, and defer all proofs to Section~\ref{sec:complete_proof}.

	\begin{lemma}[\bf{Relation between Program~\eqref{eq:progtarget} and~\eqref{eq:progdenoised}}]
		\label{lm:denoised1}
		Let $f\in \calF$ be an arbitrary classifier and $\eps\in (0,0.5)$.
		With probability at least $1-2e^{-\eps^2 n/6}$,
		\[
		\textstyle  (1-\eta_{1-i})\Pr\left[f=1,\widehat{Z}=i\right] - \eta_{1-i} \Pr\left[f=1, \widehat{Z}=1-i\right] 
		\in  (1-\eta_0-\eta_1)\Pr\left[f=1,Z=i\right] \pm \eps, 
		\]
		\sloppy
	    for $i\in \left\{0,1\right\}$.
		Moreover, if $\min_{i\in \left\{0,1\right\}}\Pr\left[f=1,Z=i\right]\geq \frac{\lambda}{2}$, then with probability at least $1-4e^{-\frac{\eps^2(1-\eta_0-\eta_1)^2\lambda^2 n}{2400}}$,
		$
		\textstyle \gamma^\Delta(f,\widehat{S}) \in (1\pm \eps)\gamma(f,S).
		$
	\end{lemma}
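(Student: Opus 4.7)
The plan is to prove the two parts in order and reduce the multiplicative statement of Part 2 to repeated applications of the additive statement of Part 1.

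For Part 1, fix $f\in\calF$ and $i\in\{0,1\}$ and treat the underlying dataset $S$ as deterministic, so that the only randomness is the flipping noise, which is independent across samples. I would introduce the per-sample random variable
\[Y_a^{(i)} \;:=\; (1-\eta_{1-i})\,\mathbf{1}[f(x_a)=1,\widehat{z}_a=i] \;-\; \eta_{1-i}\,\mathbf{1}[f(x_a)=1,\widehat{z}_a=1-i]\]
and compute $\E[Y_a^{(i)}]$ by cases on $z_a$: if $z_a=i$ the coefficient collapses to $(1-\eta_{1-i})(1-\eta_i)-\eta_{1-i}\eta_i = 1-\eta_0-\eta_1$, whereas if $z_a=1-i$ the two terms cancel. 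Summing over $a$ gives $\E[\tfrac{1}{N}\sum_a Y_a^{(i)}] = (1-\eta_0-\eta_1)\Pr_D[f=1,Z=i]$. Since the $Y_a^{(i)}$ are mutually independent and each lies in an interval of length at most $1$, Hoeffding's inequality delivers a tail of the form $2\exp(-\Theta(\eps^2 n))$, matching the bound claimed in the lemma.

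For Part 2, the numerator of $\Gamma_i(f)$ is exactly $\tfrac{1}{N}\sum_a Y_a^{(i)}$ from Part 1, and the denominator $(1-\eta_{1-i})\widehat{\mu}_i - \eta_{1-i}\widehat{\mu}_{1-i}$ has the same algebraic form with the predicate $f(x_a)=1$ dropped; the same derivation shows it concentrates additively around $(1-\eta_0-\eta_1)\Pr_D[Z=i]$. Under the hypothesis $\min_i\Pr_D[f=1,Z=i]\geq\lambda/2$ (which also forces $\Pr_D[Z=i]\geq\lambda/2$), both target quantities are at least $(1-\eta_0-\eta_1)\lambda/2$. I would then invoke Part 1 with auxiliary additive slack $\eps' := c\,\eps(1-\eta_0-\eta_1)\lambda$ for a small absolute constant $c$, and union-bound over the four invocations (numerator and denominator for each of $i\in\{0,1\}$), yielding the stated $4\exp(-\eps^2(1-\eta_0-\eta_1)^2\lambda^2 n/2400)$. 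The routine ratio estimate that $|a'-a|, |b'-b|\leq\eta\min(a,b)$ implies $a'/b'\in(1\pm 3\eta)\,a/b$ then upgrades these additive errors to $\Gamma_i(f)\in(1\pm\eps/2)\Pr_D[f=1\mid Z=i]$, and taking $\min\{\Gamma_0/\Gamma_1,\Gamma_1/\Gamma_0\}$ preserves the multiplicative factor to yield $\gamma^\Delta(f,\widehat{S})\in(1\pm\eps)\gamma(f,S)$.

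The main obstacle is the bookkeeping that turns additive concentration for the four noisy sums into a multiplicative statement about their ratio of ratios: $\eps'$ must be small enough that, divided by the $\Theta((1-\eta_0-\eta_1)\lambda)$ lower bound on the denominator, it inflates to only an $O(\eps)$ multiplicative error, yet large enough that plugging it into Part 1's exponent still reproduces the target constants $\lambda^2$ and $(1-\eta_0-\eta_1)^2$. The $\lambda/2$ hypothesis is precisely what keeps both the numerator and denominator of $\Gamma_i(f)$ bounded away from zero (no such lower bound is available from Assumption~\ref{assumption:ratio} alone), after which the propagation through the ratio and the final min is mechanical.
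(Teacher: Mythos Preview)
Your proposal is correct and follows essentially the same route as the paper: Part~1 via additive concentration of the noisy linear combination around $(1-\eta_0-\eta_1)\Pr[f=1,Z=i]$, then Part~2 by rescaling to $\eps'=\Theta(\eps(1-\eta_0-\eta_1)\lambda)$, applying Part~1 to the four numerator/denominator quantities (the paper's Claim~\ref{claim:lm}), using the $\lambda/2$ hypothesis to lower-bound the denominators by $0.45(1-\eta_0-\eta_1)\lambda$, and propagating through the ratio. The only cosmetic difference is that the paper first bounds the conditional flip rates $\Pr[\widehat{Z}=j\mid f=1,Z=i]$ by Chernoff and then substitutes into the decomposition of $\Pr[f=1,\widehat{Z}=j]$, whereas you apply Hoeffding directly to the combined per-sample variable $Y_a^{(i)}$; the paper's explicit choice $\eps'=\eps(1-\eta_0-\eta_1)\lambda/20$ is exactly your constant $c$.
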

	
	\noindent
	The first part of this lemma shows how to estimate $\Pr\left[f=1, Z=i\right]$ ($i\in \left\{0,1\right\}$) in terms of $\Pr\left[f=1, \widehat{Z}=0\right]$ and $\Pr\left[f=1, \widehat{Z}=1\right]$, which motivates the first constraint of~(\eqref{eq:denoised}).
	The second part of the lemma motivates the second constraint of~(\eqref{eq:denoised}).
	Then by Assumption~\ref{assumption:ratio}, $f^\star$ is likely to be feasible for Program~\eqref{eq:progdenoised}.
	Consequently, $f^\Delta$ has empirical loss at most that of $f^\star$.
For our second main lemma,	we first define the collection of classifiers that are expected to violate. Constraint~(\eqref{eq:denoised}).

	\begin{definition}[\bf{Bad classifiers}]
		\label{def:bad}
		Given a family $\calF\subseteq \left\{0,1\right\}^\calX$, we call $f\in \calF$ a bad classifier if $f$ belongs to at least one of the following sub-families:
		\begin{itemize}
		{ \footnotesize	\item \sloppy $ \calG_0:= \big\{f\in \calF: \min\left\{\Pr\left[f=1,Z=0\right], \Pr\left[f=1,Z=1\right]\right\} < \frac{\lambda }{2} \big\}$;
			\item Let $  T=\lceil 232\log\log \frac{2(\tau-3\delta)}{\lambda} \rceil$. 
			For $i\in [T]$, define
			$ \calG_i:= \left\{f\in \calF\setminus \calG_0: \gamma(f,S) \in \left[\frac{\tau-3\delta }{1.01^{2^{i+1}-1}}, \frac{\tau-3\delta }{1.01^{2^i-1}} \right) \right\}.
			$	
			%
			}
		\end{itemize}
	\end{definition}
	
	\noindent
	Intuitively, classifier $f\in \calG_0$ is likely to violate the first or the second of Constraint~(\eqref{eq:denoised});
	and for $f\in \calG_i$ for some $i\in [T]$ it is likely that $\gamma^\Delta(f, \widehat{S})< \tau-\delta$.
	Thus, any bad classifier is likely to violate Constraint~(\eqref{eq:denoised}) (Lemma~\ref{lm:denoised1}).
	Then we lower bound the total violating probability for all bad classifiers by the following lemma.

\eat{
	\begin{proofsketch}
		The first part of Lemma~\ref{lm:denoised1} follows from the fact that for $i\in \left\{0,1\right\}$,
		\begin{eqnarray*}
		\begin{split}
		  \Pr &\left[f=1, \widehat{Z}=i\right]
		= \\  &\Pr\left[\widehat{Z}=i\mid f=1,Z=i\right] \cdot \Pr\left[f=1, Z=i\right] \\
		& +\Pr\left[\widehat{Z}=i\mid f=1,Z=1-i\right] \cdot \Pr\left[f=1, Z=1-i\right].
		\end{split}
		\end{eqnarray*}
		Then by the additive form of Chernoff bound \cite{hoeffding1994probability}, we have that for $i\in \left\{0,1\right\}$
		\[
		\textstyle \Pr\left[\widehat{Z}=i\mid f=1,Z=1-i\right] \in \eta_{1-i} \pm \frac{\eps}{2\Pr\left[f=1,Z=1-i\right]}
		\]
		with probability at least $1-2e^{-2\eps^2 n}$, which implies the first part.

		For the second part, let $\eps' = \frac{\eps (1-\eta_0-\eta_1) \lambda}{20}$ and assume that the first part holds. 
		This implies that
		\begin{align*}
		 (1-\eta_1)\Pr\left[f=1,\widehat{Z}=0\right] &- \eta_1 \Pr\left[f=1, \widehat{Z}=1\right]  \\
		&\geq 0.45 (1-\eta_0-\eta_1)\lambda,
		\end{align*}
		i.e., $\eps'$ is negligible compared to $(1-\eta_1)\Pr\left[f=1,\widehat{Z}=0\right] - \eta_1 \Pr\left[f=1, \widehat{Z}=1\right]$.
		By a similar argument, we also have that $\eps'$ is negligible compared to $(1-\eta_1) \widehat{\mu}_0 - \eta_1 \widehat{\mu}_1 $.
		These properties ensure that $\Gamma_0(f)$ estimates $\Pr\left[f=1\mid Z=0\right]$: its numerator approximates $(1-\eta_0-\eta_1)\Pr\left[f=1, Z=0\right]$ and its denominator approximates $(1-\eta_0-\eta_1)\mu_0$, which leads to the second part of Lemma~\ref{lm:denoised1}.
	\end{proofsketch}
}

	\begin{lemma}[\bf{Bad classifiers are not feasible for Constraint~(\eqref{eq:denoised})}]
		\label{lm:denoised2}
		Suppose the VC-dimension of $(S,\calF)$ is $t$; then with probability at least $1-O\left(e^{-\frac{(1-\eta_0-\eta_1)^2 \lambda^2 \delta^2 n}{5000}+ t\ln(\frac{50}{(1-\eta_0-\eta_1)\lambda\delta})}\right)$, any bad classifier violates Constraint~(\eqref{eq:denoised}).
	\end{lemma}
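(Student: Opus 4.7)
The plan is to handle the bad classifiers group-by-group (as partitioned in Definition~\ref{def:bad}), show that with high probability every classifier in each group fails Constraint~(\eqref{eq:denoised}), and take a union bound over $\calG_0,\calG_1,\ldots,\calG_T$. For a fixed $f\in\calG_0$, some $i^\star\in\{0,1\}$ has $\Pr[f=1,Z=i^\star]<\lambda/2$, so the first part of Lemma~\ref{lm:denoised1}, applied with $\eps$ of order $(1-\eta_0-\eta_1)\lambda\delta$, pushes the denoised magnitude $(1-\eta_{1-i^\star})\Pr[f=1,\widehat{Z}=i^\star]-\eta_{1-i^\star}\Pr[f=1,\widehat{Z}=1-i^\star]$ strictly below the threshold $(1-\eta_0-\eta_1)\lambda-\delta$ with probability at least $1-2e^{-\Omega(\eps^2 n)}$, falsifying the first inequality of~(\eqref{eq:denoised}). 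For a fixed $f\in\calG_i$ with $i\in[T]$, membership in $\calF\setminus\calG_0$ activates the magnitude hypothesis of the second part of Lemma~\ref{lm:denoised1}, so $\gamma^\Delta(f,\widehat{S})\leq(1+\eps_i)\gamma(f,S)\leq(1+\eps_i)(\tau-3\delta)/1.01^{2^i-1}$; choosing $\eps_i$ just large enough to keep the right-hand side below $\tau-\delta$ falsifies the second inequality of~(\eqref{eq:denoised}) with probability at least $1-4e^{-\Omega(\eps_i^2(1-\eta_0-\eta_1)^2\lambda^2 n)}$.

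The main obstacle is turning these pointwise bounds into uniform bounds over each (uncountable) sub-family $\calG_i$. Here, following the outline in the proof overview, I would equip $\calF$ with the pseudometric $\Pr[f\neq f']$ and construct an $\eps_i$-net $\widetilde{\calG}_i\subseteq\calG_i$ (Definition~\ref{def:net}) of cardinality at most $(O(1/\eps_i))^t$ via the VC-dimension/$\eps$-net machinery (Theorem~\ref{thm:vc_net}). The quantities $\Pr[f=1,\widehat{Z}=j]$ entering Constraint~(\eqref{eq:denoised}) are $1$-Lipschitz in this pseudometric; since $f\notin\calG_0$ keeps us bounded away from the singularity in the denominator of $\Gamma_j$, each ratio $\Gamma_j(f)$ is Lipschitz on $\calG_i$ for $i\geq 1$ as well. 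Consequently, if every $f'\in\widetilde{\calG}_i$ fails an $O(\eps_i)$-relaxed version of Constraint~(\eqref{eq:denoised}), then every $f\in\calG_i$ fails the original one, so union-bounding over $\widetilde{\calG}_i$ contributes at most $(O(1/\eps_i))^t\cdot e^{-\Omega(\eps_i^2(1-\eta_0-\eta_1)^2\lambda^2 n)}$ to the failure probability.

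The delicate bookkeeping is to pick the $\eps_i$'s so that every contribution fits under the claimed bound. The finest required resolution is $\eps\asymp(1-\eta_0-\eta_1)\lambda\delta$ (coming from $\calG_0$ and the $i=1$ bucket, which sits right at the fairness boundary), and this resolution alone already produces the exponent $-\Omega((1-\eta_0-\eta_1)^2\lambda^2\delta^2 n)+t\ln(1/((1-\eta_0-\eta_1)\lambda\delta))$ claimed in the lemma. For larger $i$, the buckets $[(\tau-3\delta)/1.01^{2^{i+1}-1},(\tau-3\delta)/1.01^{2^i-1})$ are far enough from the boundary that the admissible $\eps_i$ may grow doubly-exponentially in $i$, shrinking the corresponding nets so fast that those contributions are dominated by the $i\leq 1$ terms. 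Since $T=\lceil 232\log\log(2(\tau-3\delta)/\lambda)\rceil$ is precisely what is needed to exhaust the range $[\lambda/2,\tau-3\delta)$ of possible unfair ratios through the $1.01^{2^i}$ bucketing, the sum of the $T+1$ failure probabilities is a constant multiple of the worst single term, yielding the claimed $O\bigl(e^{-\frac{(1-\eta_0-\eta_1)^2\lambda^2\delta^2 n}{5000}+t\ln(50/((1-\eta_0-\eta_1)\lambda\delta))}\bigr)$ bound.
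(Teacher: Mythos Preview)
Your proposal is correct and follows essentially the same route as the paper: handle $\calG_0$ and each $\calG_i$ separately, apply Lemma~\ref{lm:denoised1} pointwise, pass to $\eps$-nets via the VC bound of Theorem~\ref{thm:vc_net}, use Lipschitz-ness of the denoised quantities (leveraging $f\notin\calG_0$ to keep the $\Gamma_j$'s bounded away from zero) to extend from net points to all of $\calG_i$, and union-bound. The only minor discrepancy is that the paper takes $\eps_0=\frac{(1-\eta_0-\eta_1)\lambda-2\delta}{5}$ for $\calG_0$ (no extra $\delta$ factor), so the $\calG_0$ term is actually dominated by the $i=1$ bucket; your finer choice still works and lands under the same final bound.
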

	
	\noindent
	Theorem~\ref{thm:denoised} for $p=2$ and statistical rate is almost a direct corollary of Lemmas~\ref{lm:denoised1} and~\ref{lm:denoised2} (see Section \ref{sec:complete_proof}) except that we need to verify that any classifier violating Program (\eqref{eq:progdenoised}) is a bad classifier in the sense of Definition \ref{def:bad}.

\eat{	
	\begin{proofsketch}
		We discuss the cases of $\calG_0$ and $\calG_i$ ($i\in [T]$) separately.
		For $\calG_0$, let $G_0$ be an $\eps_0$-net of $\calG_0$ of size $M_{\eps_0}(\calG_0)$.
		By Lemma~\ref{lm:denoised1}, we can prove that with probability at least $1-2e^{-2\eps_0^2 n} M_{\eps_0}(\calG_0)$, all classifiers $g\in G_0$ violate either the first or the second constraint in (\eqref{eq:denoised}) by at least an additive $\frac{(1-\eta_0-\eta_1)\lambda}{2}$ term.
		Conditioned on this event, we can verify that all classifiers $f\in \calG_0$ violate at least one of the first two constraints of (\eqref{eq:denoised}) since there must exist a classifier $g \in G_0$ such that $\Pr\left[f\neq g\right]\leq \eps_0$.

		For $\calG_i$ ($i\in [T]$), the argument is similar: first construct an $\eps_i$-net $G_i$ of $\calG_i$, then show all classifiers in $G_i$ are not feasible with probability at least $1-4e^{-\frac{\eps_i^2(1-\eta_0-\eta_1)^2\lambda^2 n}{2400}}$ by Lemma~\ref{lm:denoised1}, and finally extend to all classifiers in $\calG_i$.
		The only difference is that each $f\in \calG_i$ violates the third constraint of (\eqref{eq:denoised}), say $\gamma^\Delta(f,\widehat{S})< \tau - \delta$ holds with high probability.
		The lemma is a direct corollary by the union bound.
	\end{proofsketch}
}

	\section{Empirical results}
	\label{sec:empirical}
	\setlength\fboxsep{0pt}
    \setlength{\tabcolsep}{4pt}

	\begin{table*}[t]
		\centering
		\caption{\small The performance on accuracy and fairness metrics of all algorithms over the test datasets; we report the average and standard error (in parenthesis). %
		When the protected attribute is binary, the fairness metrics (SR, FPR) are $\min_{i\in \set{0,1}} q_i(f){/}\max_{i\in \set{0,1}} q_i(f)$.
		For the non-binary protected attribute (COMPAS-race), we report the performance for all groups; i.e., SR$_j$, FPR$_j$ denote $ q_j(f)/ \max_{i\in [p]} q_i(f)$, for all $j{\in}[p]$.
		By definition, SR${=}\min\set{\text{SR}_j}$ and FPR${=}\min\set{\text{FPR}_j}$.
		The full accuracy-fairness tradeoffs when varying $\tau$ can be found in Appendix~\ref{sec:other}.
		For each dataset and protected attribute, the metrics of the method that achieves the largest sum of mean accuracy and mean statistical rate (one way to measure fairness-accuracy tradeoff) has also been colored in green, and the method that achieves the largest sum of mean accuracy and mean false positive rate  has been colored in yellow. Our method \textbf{DLR} achieves the best tradeoff or is within one standard deviation of the best tradeoff, as measured in this manner, in 6 out of 8 settings.
		}
		\label{tab:sr_results}
		\scriptsize
		
		\begin{tabular}{lp{0.6cm}p{0.6cm}p{0.6cm}|p{0.6cm}p{0.6cm}p{0.6cm}|p{0.6cm}p{0.6cm}p{0.6cm}|p{0.6cm}p{0.6cm}p{0.6cm}p{0.6cm}p{0.6cm}p{0.6cm}p{0.6cm}}
		
			\toprule
			\multirow{3}{*}{} & \multicolumn{6}{c|}{\textbf{Adult}} & \multicolumn{10}{c}{\textbf{COMPAS}} \\
			& \multicolumn{3}{c|}{sex (binary)} & \multicolumn{3}{c|}{race (binary)} &\multicolumn{3}{c|}{sex (binary)} &\multicolumn{7}{c}{race (non-binary)} \\
			& acc & \multicolumn{1}{c}{SR} & FPR  & acc  & \multicolumn{1}{c}{SR} & FPR & acc & SR & FPR  & acc & SR$_0$ & SR$_1$ & SR$_2$  & FPR$_0$ & FPR$_1$ & FPR$_2$   \\
			\midrule
			\scriptsize{\textbf{Unconstrained}} & \textbf{.80} (0) & .31 (.01) & .45 (.03) & \textbf{.80} (0) & .68 (.02) &   .81 (.09) & \textbf{.67} (.01) & .78 (.04) & .70 (.08)  & \textbf{.67} (0) & .66 (.02) & .96 (.01) & 1.0 (.0) & .57 (.02) & 1.0 (0) & .94 (.01)   \\
			\scriptsize{\textbf{LR-SR}} & .76 (.01) & .68 (.24)  & .68 (.21)  & .76 (.01) & .69 (.27) & .71 (.26) & .67 (.01) & .79 (.04) & .72 (.08) & .58 (.06) & .86 (.09) & .98 (.03) & .98 (.02) & .85 (.11) & .98 (.04) & .96 (.04) \\
			\scriptsize{\textbf{LR-FPR}} & .76 (.01) & .82 (.21) & .78 (.25) & .76 (0) & {.83} (.29)  & .84 (.29) & .67 (.02) & .80 (.04) & .72 (.08) & .56 (.05) & .87 (.08) & .97 (.06) & .97 (.03) & .86 (.09) & .96 (.07) & .95 (.09)   \\
			\scriptsize{\specialcell{\textbf{LZMV} $\varepsilon_L{=}.01$}} & .35 (.01)  & \textbf{.99} (0) & \textbf{.99} (0) & .37 (.05)  & \textbf{.98} (0) &\textbf{.99} (0)  & \colorbox{lime}{.5}\colorbox{yellow}{5} (.01)  & \colorbox{lime}{\textbf{.98}} (.04) & \colorbox{yellow}{\textbf{.98}} (.09)  & - & - &  - & - & - &  - & - \\
			\scriptsize{\specialcell{\textbf{LZMV} $\varepsilon_L{=}.04$}} &  .67 (.04) & .85 (.06) & \textbf{.99} (.01)  & .77 (.03)  & .79 (.10) & .85 (.09)  &  .58 (.01) & .94 (.02) & .94 (.03)  & - &  -  & - & - & - &  - & -\\
			\scriptsize{\specialcell{\textbf{LZMV} $\varepsilon_L{=}.10$}} & .78 (.02)  & .69 (.09)  & .79 (.11)   & .80 (0)  & .70 (.01)  & .82 (.08)    & .64 (.02)  & .85 (.05)  &  .81 (.07)  &  - & - & - & - & - &  - & - \\
			\scriptsize{\textbf{AKM}} &  .77 (0) & .66 (.05) & .89 (.04) &  \colorbox{yellow}{\textbf{.80}} (0) & .72 (.02)  & \colorbox{yellow}{.90} (.08) & .66 (.01) & .83 (.04) & .77 (.09) & -  & -  &  - & - & - &  - & - \\
			\scriptsize{\textbf{WGN+}} & .70 (.05) & .73 (.12) & .76 (.05)  & {.76 (.01)}  & .84 (.05) & {.92 (.05)} &  .59 (.01) & .90 (.02) & .84 (.01)    &  .56 (.02)  &  .89  (.14) &  .91 (.18) &  .96  (.13) & .85 (.16)  &  .87  (.23) &  .94  (.16) \\
			\midrule
			\scriptsize{\textbf{DLR-SR} $\tau{=}.7$} & .77 (.01) &  .74 (.14) & .87 (.17)   & .79 (.01)  & .80 (.12)  & .90 (.10)  & .67 (.01) & .79 (.04) & .72 (.08)  & .66 (.01) & .73 (.04) & .99 (.01) & 1.0 (0) & .66 (.05) & 1.0 (.0) & .92 (.03)  \\
			\scriptsize{\textbf{DLR-SR} $\tau{=}.9$} &  \colorbox{lime}{.76} (.01) & \colorbox{lime}{.85} (.15)  & .80 (.12) & \colorbox{lime}{.76} (.01)  & \colorbox{lime}{{.88}} (.18)  & .90 (.19)  & .63 (.04) & .86 (.05) & .83 (.08)  & \colorbox{lime}{.55} (.04) & \colorbox{lime}{.91} (.06) & .97 (.04) & .97 (.03) & .89 (.09) & .97 (.04) & .93 (.1) \\
			\scriptsize{\textbf{DLR-FPR} $\scriptsize \tau{=}.7$} &  .77 (.02) &  .73 (.14)  & .85 (.17)   & {.78 (.02)} &  .77 (.11) & {.88 (.11)}   & .66 (.01) & .80 (.04) & .73 (.08)   & .64 (.02) & .76 (.05) & .99 (.01) & .98 (.02) & .72 (.06) & 1.0 (.0) & .89 (.06)     \\
			\scriptsize{\textbf{DLR-FPR} $\tau{=}.9$} & \colorbox{yellow}{.77} (.02) &  .77 (.12)  & \colorbox{yellow}{.91} (.11)   & .77 (.02)& .80 (.15)   & .88 (.14)  & .60 (.06) & .86 (.07) & .82 (.10) & \colorbox{yellow}{.53} (.04) & \textbf{.92} (.06) & .97 (.06) & .95 (.06) & \colorbox{yellow}{\textbf{.93}} (.08) & .94 (.09) & .93 (.07) \\
			\bottomrule
		\end{tabular}	
	\end{table*}

	{

	We implement our denoised program, for binary and non-binary protected attributes, and compare the performance with baseline algorithms on real-world datasets. 

	\paragraph{Datasets.} 
	We perform simulations on the \textbf{Adult}~\cite{Asuncion+Newman:2007} and \textbf{COMPAS} \cite{compas} benchmark datasets, as pre-processed in AIF360 toolkit~\cite{aif360-oct-2018}. 
	The \textbf{Adult} dataset consists of rows corresponding to 48,842 individuals, with 18 binary features and a label indicating whether the income is greater than 50k USD or not. 
	We use binary protected attributes sex (``male'' ($Z{=}1$) vs ``female'' ($Z{=}0$)) and race (``White'' ($Z{=}1$) vs ``non-White'' ($Z{=}0$)) for this dataset.
	The \textbf{COMPAS} dataset consists of rows corresponding to 6172 individuals, with 10 binary features and a label that takes value 1 if the individual does not reoffend and 0 otherwise.
	We take sex (coded as binary) and race (coded as non-binary - ``African-American'' ($Z{=}1$), ``Caucasian'' ($Z{=}2$), ``Other'' ($Z{=}3$))
	to be the protected attributes.

	\paragraph{Metrics and baselines.} 
	We implement our program using logistic loss with denoised constraints with respect to the statistical rate and false positive rate metrics; we refer to our algorithm with statistical rate constraints as \textbf{DLR-SR} and with false positive rate constraints as \textbf{DLR-FPR}.
	\footnote{{We use the (noisy) protected attribute to construct the constraints, but not for classification. However, if necessary, the protected attribute can also be used as a feature for classification.}}
	{To obtain computationally feasible formulations of our optimization problem (\eqref{eq:denoised}), we expand the constraint on the fairness metrics by forming constraints on relevant (empirical) rates of all groups, and solve the nonconvex program using SLSQP; the details of the constraints are presented in Section~\ref{sec:other}.}
	We compare against state-of-the-art noise-tolerant fair classification algorithms: \textbf{LZMV}~\cite{lamy2019noise}, \textbf{AKM}~\cite{awasthi2020equalized}, and \textbf{WGN+}~\cite{wang2020robust}.
	\textbf{LZMV} takes as input a parameter, $\varepsilon_{L}$, to control the fairness of the final classifier; for statistical rate, this parameter represents the desired absolute difference between the likelihood of positive class label across the two protected groups and \textbf{LZMV} is, therefore, the primary baseline for comparison with respect to statistical rate.
	We present the results of \cite{lamy2019noise} for different $\varepsilon_{L}$ values.\footnote{{{github.com/AIasd/noise\textunderscore fairlearn}.}}
	\textbf{AKM}\footnote{{{github.com/matthklein/equalized\textunderscore odds\_under\_perturbation}.}} and \textbf{WGN+}\footnote{{{github.com/wenshuoguo/robust-fairness-code}.}} are the primary baseline for comparison with respect to false positive rate metric.
	As discussed earlier, the algorithm \textbf{AKM} is the post-processing algorithm of \citet{hardt2016equality}.
	\footnote{Equalized odds fairness metric aims for parity w.r.t false positive and true positive rates.
	For clarity of presentation, we present the empirical analysis with respect to false positive rate only. 
	}
	For \textbf{WGN+}, we use the algorithm that employs soft-group assignments \cite{kallus2020assessing} to form false positive rate constraints; it is the only prior algorithm that can handle non-binary noisy protected attributes and, hence, it is also the main baseline for the COMPAS dataset with race protected attribute.

	Additionally, we implement the baseline which minimizes the logistic loss with fairness constraints ($\tau={0.9}$) over the noisy protected attribute as described in Section~\ref{sec:alg2}.
	When the fairness metric is the statistical rate, we will refer to this program as \textbf{LR-SR}, and when the fairness metric is the false positive rate, we will refer to it as \textbf{LR-FPR}.
	Finally, we also learn an unconstrained optimal classifier as a baseline.

	\paragraph{Implementation details.}
	We first shuffle and partition the dataset into a train and test partition (70-30 split).
	Given the training dataset $S$, we generate a noisy dataset $\widehat{S}$. For binary protected attributes, we use $\eta_0 = 0.3$ and $\eta_1 = 0.1$. For non-binary protected attributes, we use the noise matrix {\scriptsize $H = \begin{bmatrix} 0.70 & 0.15 & 0.15 \\ 0.05 & 0.90 & 0.05 \\ 0.05 & 0.05 & 0.90 \end{bmatrix}$} (i.e., the minority group is more likely to contain errors, as would be expected in various applications \cite{melissa2000shades}).
	{Our algorithms, as well as the baselines, have access to the known $\eta$ and $H$ values.}
	We consider other choices of noise parameters and impact of error in estimates of noise parameter in Appendix~\ref{sec:other}.
	We train each algorithm on $\widehat{S}$ and vary the fairness constraints (e.g., the choice of $\tau\in [0.5, 0.95]$ in \textbf{DLR}), learn the corresponding fair classifier, and record its accuracy (\textrm{acc}) and fairness metric (either statistical rate or false positive rate) $\gamma$ over the noisy version of the test dataset. 
	We perform 50 repetitions and report the mean and standard error of fairness and accuracy metrics across the repetitions.
	For the COMPAS, we use $\lambda = 0.1$ as a large fraction (47\%) of training samples have class label 1, while for Adult, we use $\lambda = 0$ as the fraction of positive class labels is small (24\%).
	\footnote{Alternately, one could use a range of values for $\lambda$ to construct multiple classifiers, and choose the one which satisfies the program constraints and has the best accuracy over a separate validation partition. We find that these $\lambda$ are sufficient to obtain fair classifiers for the considered datasets.}

	\paragraph{Results.}
	Table~\ref{tab:sr_results} summarizes the fairness and accuracy achieved by our methods and baseline algorithms over the {Adult} and {COMPAS} test datasets.
	The first observation is that our approach, \textbf{DLR-SR} and \textbf{DLR-FPR}, achieve higher fairness than the unconstrained classifier, showing its effectiveness in noise-tolerant fair classification. The extent of this improvement varies with the strength of the constraint $\tau$, but comes with a natural tradeoff with accuracy. 

	For {Adult} dataset, \textbf{DLR-SR} and \textbf{DLR-FPR} (with $\tau{=}0.9$) can attain a higher fairness metric value than \textbf{LR-SR} and \textbf{LR-FPR} respectively, and perform similarly with respect to accuracy.
    The statistical rate-accuracy tradeoff of \textbf{DLR-SR}, for this dataset, is also better than \textbf{LZMV}, \textbf{AKM}, and \textbf{WGN+}; in particular, high statistical rate for Adult dataset using \textbf{LZMV} (i.e., ${\geq}0.8$) is achieved only with a relatively larger loss in accuracy (for example, with $\epsilon_L{=}0.01$), whereas for \textbf{DLR-SR}, the loss in accuracy when using $\tau{=}0.9$ is relatively small (${\sim}0.03$) while the statistical rate is still high (${\sim}0.85$).	
	With respect to false positive rate, \textbf{AKM} can achieve a high false positive rate for the Adult dataset (${\sim}0.90$), while \textbf{WGN+} does not achieve high false positive rate when sex is the protected attribute.
	In comparison, \textbf{DLR-FPR} with $\tau{=}0.9$ can also achieve a high false positive rate at a small loss of accuracy for both protected attributes, and the best false positive rate and accuracy of \textbf{DLR-FPR} and \textbf{AKM} are within a standard deviation of each other. 
	Baseline \textbf{LZMV} attains a high false positive rate too for the Adult dataset, but the loss in accuracy is larger compared to \textbf{DLR-FPR}.

	For the COMPAS dataset, with sex as protected attribute, \textbf{LZMV} ($\varepsilon_L{=}0.01,0.04$) achieves high statistical rate and false positive rate, but at a large cost to accuracy. Meanwhile \textbf{DLR-SR} ($\tau{=}0.9$) returns a classifier with SR $\sim 0.86$ and FPR $\sim 0.83$ and significantly better accuracy (0.63) than \textbf{LZMV}($\varepsilon_L{=}0.01,0.04$).
	Further, our algorithm can achieve higher fairness as well, at the cost of accuracy, using a larger input $\tau$ (e.g., $\tau{=}1$; see Appendix~\ref{sec:other}).
	Note that in this case, the unconstrained classifier already has high fairness values.
    Hence, despite the noise in protected attribute, the task of fair classification is relatively easy and all baselines, as well as, our methods perform well for this dataset and protected attribute.

	For the COMPAS dataset with non-binary race protected attribute, we also present the complete breakdown of relative performance for each protected attribute value in Table~\ref{tab:sr_results}.
	Both \textbf{DLR-SR} and \textbf{DLR-FPR} (with $\tau=0.9$) reduce the disparity between group-performances $q_j(f)$ and $\max_{i \in [p]}$ $\forall j \in [p]$, for SR and FPR metrics, to a larger extent compared to the unconstrained classifier, baselines and \textbf{WGN+}.

	The tradeoff between the fairness metric and accuracy for all methods is also graphically presented in Appendix~\ref{sec:other}.
	Evaluation with respect to both metrics shows that our framework can handle binary and non-binary protected attributes, and	attain close to the user-desired fairness metric values (as defined using $\tau$).
	Comparison with baselines further shows that, unlike \textbf{AKM} and \textbf{WGN+}, our approach can always return classifiers with high fairness metrics values, and unlike \textbf{LZMV}, the loss in accuracy to achieve high fairness values is relatively small.
	We also present the performance of our approach using false discovery rate (linear-fractional metric) constraints in Section~\ref{sec:other}; in that setting, our approach has better fairness-accuracy tradeoff than baselines for \textbf{Adult} and similar tradeoff as the best-performing baseline for \textbf{COMPAS}.
 }

\section{Missing proofs in Section~\ref{sec:proof}}
\label{sec:complete_proof}

In this section, we complete the missing proofs in Section~\ref{sec:proof}.
Let $ \textstyle{\pi_{ij}:=\Pr_{D,\widehat{D}}\left[\widehat{Z}=i\mid Z=j\right]}$ for $i,j\in \left\{0,1\right\}$, $\mu_i:= \Pr_{D}\left[Z=i\right]$ and $\textstyle{\widehat{\mu}_i:= \Pr_{\widehat{D}}\left[\widehat{Z}=i\right]}$ for $i\in \left\{0,1\right\}$.

\subsection{Proof of Lemma~\ref{lm:denoised1}}
\label{sec:proof_lm_denoised1}

\begin{proof}
	We first have the following simple observation.

	\begin{observation}
		\label{observation:prob}
		1) $\mu_0+\mu_1 = 1$, $\widehat{\mu}_0 + \widehat{\mu}_1 = 1$, and $\pi_{0,i} + \pi_{1,i} = 1$ holds for $i\in \left\{0,1\right\}$;
		2) For any $i,j\in \left\{0,1\right\}$,
		$
		\Pr\left[Z=i\mid \widehat{Z}=j\right] = \frac{\pi_{ji}\cdot \mu_i}{\widehat{\mu}_j};
		$
		3) For any $i\in \left\{0,1\right\}$,
		$
		\widehat{\mu}_i = \pi_{i,i}\mu_i + \pi_{i,1-i}\mu_{1-i}.
		$
	\end{observation}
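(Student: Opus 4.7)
The plan is to prove all three identities by direct appeal to elementary probability rules (total probability, marginalization, and Bayes' rule), using only the definitions of $\mu_i$, $\widehat{\mu}_i$, $\pi_{ij}$ together with the fact that both $Z$ and $\widehat{Z}$ take values in the binary set $\{0,1\}$. Since these are purely definitional/manipulation statements with no quantitative content, there is no real obstacle; the task is just to verify each item in order.

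For item (1), I would observe that $\mu_0 + \mu_1 = \Pr_D[Z=0] + \Pr_D[Z=1] = 1$ because $Z \in \{0,1\}$ partitions the sample space, and identically $\widehat{\mu}_0 + \widehat{\mu}_1 = 1$ since $\widehat{Z} \in \{0,1\}$. For the conditional sum, I would fix $i \in \{0,1\}$ and write $\pi_{0,i} + \pi_{1,i} = \Pr[\widehat{Z}=0 \mid Z=i] + \Pr[\widehat{Z}=1 \mid Z=i] = 1$, which is again a total-probability statement (now for the conditional law of $\widehat{Z}$ given $Z=i$).

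For item (2), I would apply Bayes' rule directly:
\[
\Pr[Z=i \mid \widehat{Z}=j] \;=\; \frac{\Pr[\widehat{Z}=j \mid Z=i]\,\Pr[Z=i]}{\Pr[\widehat{Z}=j]} \;=\; \frac{\pi_{ji}\,\mu_i}{\widehat{\mu}_j},
\]
where the last equality just substitutes the defined shorthands. This holds for every $i,j \in \{0,1\}$ provided $\widehat{\mu}_j > 0$, which is guaranteed because the flipping-noise matrix $H$ from Definition~\ref{def:flippingnoise} has strictly positive diagonal entries.

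For item (3), I would use the law of total probability to marginalize over $Z$: for any $i \in \{0,1\}$,
\[
\widehat{\mu}_i \;=\; \Pr[\widehat{Z}=i] \;=\; \sum_{k \in \{0,1\}} \Pr[\widehat{Z}=i \mid Z=k]\,\Pr[Z=k] \;=\; \pi_{i,i}\,\mu_i + \pi_{i,1-i}\,\mu_{1-i}.
\]
Since all three parts follow from a single line of reasoning each, the only ``risk'' is notational: keeping the order of subscripts on $\pi_{ij}$ straight (first index is the observed value $\widehat{Z}$, second is the true value $Z$), so I would state this convention explicitly at the start of the proof to avoid any ambiguity.
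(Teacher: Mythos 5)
Your proof is correct and matches the paper's intent: the paper states this as a ``simple observation'' inside the proof of Lemma~\ref{lm:denoised1} without giving any derivation, and your elementary verification via normalization, Bayes' rule, and the law of total probability (with the subscript convention $\pi_{ij}=\Pr[\widehat{Z}=i\mid Z=j]$ made explicit) is exactly the argument being taken for granted. The remark about needing $\widehat{\mu}_j>0$ for item (2) is a reasonable extra precaution that the paper leaves implicit.
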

	
	\noindent
	Similar to Equation~\eqref{eq:gap2_1}, we have
	\begin{eqnarray}
	\small
	\label{eq:lm1}
	\begin{split}
    \Pr\left[f=1, \widehat{Z}=0\right] 
	& = && \Pr\left[\widehat{Z}=0\mid f=1,Z=0\right] \cdot \Pr\left[f=1, Z=0\right] \\
	& && +\Pr\left[\widehat{Z}=0\mid f=1,Z=1\right] \cdot \Pr\left[f=1, Z=1\right].
	\end{split}
	\end{eqnarray}
	Similar to the proof of Lemma~\ref{lm:gap2}, by the Chernoff bound (additive form) \cite{hoeffding1994probability}, both
	\begin{align}
	\small
	\label{ineq:lm1}
	\Pr\left[\widehat{Z}=1\mid f=1,Z=0\right] \in \eta_0 \pm \frac{\eps}{2\Pr\left[f=1,Z=0\right]}, 
	\end{align}
	and
	\begin{align}
	\small
	\label{ineq:lm2}
	\Pr\left[\widehat{Z}=0\mid f=1,Z=1\right] \in \eta_1 \pm \frac{\eps}{2\Pr\left[f=1,Z=1\right]},
	\end{align}
	hold with probability at least 
	\begin{align*}
	1-2e^{-\frac{\eps^2 n }{12\eta\Pr\left[f=1,Z=0\right]}}-2e^{-\frac{\eps^2 n }{12\eta\Pr\left[f=1,Z=1\right]}}
	\end{align*}
	which for $\eta\leq  0.5$, is at least $1-2e^{-\eps^2 n/6}$.
	Consequently, we have
	\begin{eqnarray}
	\small
	\label{eq:lm_denoised1_1}
	\begin{split}
	& && \Pr\left[f=1, \widehat{Z}=0\right] \\
	& = && \Pr\left[\widehat{Z}=0\mid f=1,Z=0\right] \cdot \Pr\left[f=1, Z=0\right] +\Pr\left[\widehat{Z}=0\mid f=1,Z=1\right] \cdot \Pr\left[f=1, Z=1\right] \\ 
	& && (\text{Eq.~\eqref{eq:lm1}}) \\
	& \in && \left(1-\eta_0 \pm \frac{\eps}{2\Pr\left[f=1,Z=0\right]}\right)\cdot \Pr\left[f=1, Z=0\right]\\ & + &&\left(\eta_1 \pm \frac{\eps}{2\Pr\left[f=1,Z=1\right]}\right)\cdot \Pr\left[f=1, Z=1\right] (\text{Ineqs.~\eqref{ineq:lm1} and~\eqref{ineq:lm2}}) \\
	&\in && (1-\eta_0)\Pr\left[f=1,Z=0\right]  + \eta_1\Pr\left[f=1,Z=1\right]\pm \eps, 
	\end{split}
	\end{eqnarray}
	and similarly,
	\begin{eqnarray}
	\small
	\label{eq:lm_denoised1_2}
	\begin{split}
	\Pr\left[f=1,\widehat{Z}=1\right] \in \eta_0\Pr\left[f=1,Z=0\right] + (1-\eta_1)\Pr\left[f=1,Z=1\right]\pm \eps.
	\end{split}
	\end{eqnarray}
	By the above two inequalities, we conclude that 
	\begin{align*}
	& &&\small (1-\eta_1)\Pr\left[f=1, \widehat{Z}=0\right] - \eta_1 \Pr\left[f=1, \widehat{Z}=1\right] \\
	& \in && (1-\eta_1) \big((1-\eta_0)\Pr\left[f=1, Z=0\right] + \eta_1 \Pr\left[f=1, Z=1\right] \pm \eps\big) - \eta_1 \big(\eta_0\Pr\left[f=1, Z=0\right] \\
	& &&+ (1-\eta_1) \Pr\left[f=1, Z=1\right] \pm \eps\big) \quad (\text{Ineqs.~\eqref{eq:lm_denoised1_1} and~\eqref{eq:lm_denoised1_2}}) \\
	&\in && (1-\eta_0 - \eta_1) \Pr\left[f=1, Z=0\right] \pm \eps. 
	\end{align*}
	Similarly, we have
	\begin{eqnarray*}
	\small
		(1-\eta_0)\Pr\left[f=1, \widehat{Z}=1\right] - \eta_0 \Pr\left[f=1, \widehat{Z}=0\right]
		\in (1-\eta_0-\eta_1) \Pr\left[f=1, Z=1\right] \pm \eps.
	\end{eqnarray*}
	This completes the proof of the first conclusion.

	Next, we focus on the second conclusion. 
	By assumption, $\min\left\{\Pr\left[f=1,Z=0\right], \Pr\left[f=1,Z=1\right] \right\}\geq \frac{\lambda}{2}$.
	Let $\eps' = \frac{\eps (1-\eta_0-\eta_1) \lambda}{20}$.
	By a similar argument as for the first conclusion, we have the following claim. 
	
	\begin{claim}
		\label{claim:lm}
		With probability at least $1- 4e^{-(\eps')^2 n/6}$, we have 
		\begin{eqnarray*}
	\small
			\begin{cases}
				&(1-\eta_1)\Pr\left[f=1,\widehat{Z}=0\right] - \eta_1 \Pr\left[f=1, \widehat{Z}=1\right] \\
				\in &(1-\eta_0-\eta_1)\Pr\left[f=1,Z=0\right] \pm \eps', \\
				&(1-\eta_0)\Pr\left[f=1,\widehat{Z}=1\right] - \eta_0 \Pr\left[f=1, \widehat{Z}=0\right] \\
				\in & (1-\eta_0-\eta_1)\Pr\left[f=1,Z=1\right] \pm \eps', \\
				& (1-\eta_1)\widehat{\mu}_0 - \eta_1 \widehat{\mu}_1 \in (1-\eta_0-\eta_1) \mu_0 \pm \eps', \\
				& (1-\eta_0)\widehat{\mu}_1 - \eta_0 \widehat{\mu}_0 \in (1-\eta_0-\eta_1) \mu_1 \pm \eps'.
			\end{cases}
		\end{eqnarray*}
	\end{claim}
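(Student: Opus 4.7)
The plan is to view this claim as a double application of the first conclusion of Lemma~\ref{lm:denoised1}: once applied directly to the classifier $f$, and once applied to the trivial ``classifier'' that ignores $x$ and always outputs $1$. The first application will give inequalities one and two (the ones involving $\Pr[f=1,\widehat{Z}=i]$), while the second application will give inequalities three and four (the ones involving the marginals $\widehat{\mu}_i, \mu_i$).

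First, I would apply the first conclusion of Lemma~\ref{lm:denoised1} with accuracy parameter $\eps'$ in place of $\eps$. That conclusion already asserts exactly the two inequalities
\[
(1-\eta_{1-i})\Pr[f=1,\widehat{Z}=i] - \eta_{1-i}\Pr[f=1,\widehat{Z}=1-i] \in (1-\eta_0-\eta_1)\Pr[f=1,Z=i] \pm \eps'
\]
for $i\in\{0,1\}$, holding simultaneously with probability at least $1-2e^{-(\eps')^2 n/6}$. This immediately yields the first two lines of the claim.

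Next, for inequalities three and four, I would rerun exactly the same proof strategy, but without introducing the event $f=1$. Concretely, decompose $\widehat{\mu}_i = \pi_{ii}\mu_i + \pi_{i,1-i}\mu_{1-i}$ using Observation~\ref{observation:prob}(3), and then invoke the additive Chernoff bound on the independent per-sample flipping noises to obtain $\Pr[\widehat{Z}=j\mid Z=i]\in H_{ij}\pm \eps'/(2\mu_i)$ with probability at least $1-2e^{-(\eps')^2 n/6}$ (the averaging here involves the full sample, so the bound is no worse than in the conditional case). Substituting these estimates into the decompositions of $\widehat{\mu}_0$ and $\widehat{\mu}_1$ and simplifying, just as in the derivation of Eqs.~\eqref{eq:lm_denoised1_1} and~\eqref{eq:lm_denoised1_2}, produces the two target inequalities on $(1-\eta_1)\widehat{\mu}_0 - \eta_1\widehat{\mu}_1$ and $(1-\eta_0)\widehat{\mu}_1 - \eta_0\widehat{\mu}_0$ simultaneously with probability at least $1-2e^{-(\eps')^2 n/6}$.

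Finally, a union bound combines the two events to give all four inequalities simultaneously with probability at least $1-4e^{-(\eps')^2 n/6}$, which is the claim. The main obstacle, such as it is, is simply recognizing that the marginal inequalities three and four are structurally the ``$f\equiv 1$'' specialization of the statement already proved for the conditional quantities — once that is noticed, no new technical work is needed beyond repeating the Chernoff estimate and assembling via a union bound.
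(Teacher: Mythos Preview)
Your proposal is correct and matches the paper's approach: the paper simply states that Claim~\ref{claim:lm} follows ``by a similar argument as for the first conclusion,'' which is exactly what you spell out --- invoke the first conclusion with accuracy $\eps'$ for inequalities one and two, rerun the identical Chernoff computation with $f\equiv 1$ (equivalently, drop the event $f=1$) for inequalities three and four, and combine via a union bound. Your observation that the marginal inequalities are the $f\equiv 1$ specialization is precisely the right way to see why no new work is required.
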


	\noindent
	Now we assume Claim~\ref{claim:lm} holds whose success probability is at least $1-4e^{-\frac{\eps^2(1-\eta_0-\eta_1)^2\lambda^2 n}{2400}}$ since $\eps' = \frac{\eps (1-\eta_0-\eta_1) \lambda}{20}$.
	Consequently, we have
	\begin{eqnarray}
	\small
	\label{ineq:lm3}
	\begin{split}
    (1-\eta_1)\Pr\left[f=1,\widehat{Z}=0\right] - \eta_1 \Pr\left[f=1, \widehat{Z}=1\right]
	& \geq &&(1-\eta_0-\eta_1)\Pr\left[f=1,Z=0\right] -\eps' \quad (\text{Claim~\ref{claim:lm}})\\
	& \geq && \frac{(1-\eta_0-\eta_1)\lambda}{2} -\eps' \quad (\text{by assumption}) \\
	& \geq &&  0.45 \cdot (1-\eta_0-\eta_1)\lambda. \\
	& && (\eps'=\frac{\eps (1-\eta_0-\eta_1)\lambda}{20})
	\end{split} 
	\end{eqnarray}
	Similarly, we can also argue that
	\begin{align}
	\label{ineq:lm4}
	(1-\eta_1)\widehat{\mu}_0 - \eta_1 \widehat{\mu}_1 \geq  0.45 \cdot (1-\eta_0-\eta_1)\lambda.
	\end{align}
	Then we have
	\begin{eqnarray*}
	\small
		\label{ineq:lm5}
		\begin{split}
			\Pr\left[f=1\mid Z=0\right]
			& = && \frac{\Pr\left[f=1, Z=0\right]}{\mu_0} \\
			&\in && \frac{(1-\eta_1)\Pr\left[f=1,\widehat{Z}=0\right] - \eta_1 \Pr\left[f=1, \widehat{Z}=1\right] \pm \eps'}{(1-\eta_1)\widehat{\mu}_0 - \eta_1 \widehat{\mu}_1 \pm \eps'} \\
			& && (\text{Claim~\ref{claim:lm}})\\
			& \in && \frac{\left((1-\eta_1)\Pr\left[f=1,\widehat{Z}=0\right] - \eta_1 \Pr\left[f=1, \widehat{Z}=1\right]\right)}{(1\pm \frac{\eps'}{ 0.45 \cdot (1-\eta_0-\eta_1)\lambda})\left((1-\eta_1)\widehat{\mu}_0 - \eta_1 \widehat{\mu}_1\right)} \\
			& && \times (1\pm \frac{\eps'}{ 0.45\cdot (1-\eta_0-\eta_1)\lambda})\quad (\text{Ineq.~\eqref{ineq:lm3}}) \\
			& \in && (1\pm \frac{\eps}{9})^2 \cdot\Gamma_0(f). \quad(\text{Defns. of $\Gamma_0(f)$ and $\eps'$})
		\end{split}
	\end{eqnarray*}
	Similarly, we can also prove that
	\begin{eqnarray*}
		\label{ineq:lm6}
		\Pr\left[f=1\mid Z=1\right] \in (1\pm \frac{\eps}{9})^2 \cdot\Gamma_1(f).
	\end{eqnarray*}
	By the above two inequalities, we have that with probability at least $1-4e^{-\frac{\eps^2(1-\eta_0-\eta_1)^2\lambda^2 n}{2400}}$,
	\begin{align*}
	\small
    \gamma^\Delta(f,S)
	& = && \min\left\{\frac{\Gamma_0(f)}{\Gamma_1(f)}, \frac{\Gamma_1(f)}{\Gamma_0(f)}\right\} \\
	& \in && (1\pm \eps) \times \min\left\{\frac{\Pr\left[f=1\mid Z=0\right]}{\Pr\left[f=1\mid Z=1\right]},\frac{\Pr\left[f=1\mid Z=1\right]}{\Pr\left[f=1\mid Z=0\right]}\right\} \\
	& \in && (1\pm \eps)\cdot \gamma(f,S). 
	\end{align*}
	Combining with Claim~\ref{claim:lm}, we complete the proof of the second conclusion.
\end{proof}

\subsection{Proof of Lemma~\ref{lm:denoised2}}
\label{sec:proof_lm_denoised2}

For preparation, we give the following definition.

	\begin{definition}[\bf{$\eps$-nets}]
		\label{def:net}
		Given a family $\calF\subseteq \left\{0,1\right\}^\calX$ of classifiers and $\eps\in (0,1)$, we say $F\subseteq \calF$ is an $\eps$-net of $\calF$ if for any $f,f'\in F$, $\Pr_D\left[f\neq f'\right]\geq \eps$; and for any $f\in \calF$, there exists $f'\in F$ such that $\Pr_D\left[f\neq f'\right]\leq \eps$. 
		We denote $M_\eps(\calF)$ as the smallest size of an $\eps$-net of $\calF$.
	\end{definition}
	
	\noindent
	It follows from basic coding theory~\cite{lint1998introduction} that $M_\eps(\left\{0,1\right\}^\calX) = \Omega(2^{N-O(\eps N \log N)})$.
	The size of an $\eps$-net usually depends exponentially on the VC-dimension.

	\begin{theorem}[\bf{Relation between VC-dimension and $\eps$-nets~\cite{haussler1995sphere}}]
		\label{thm:vc_net}
		Suppose the VC-dimension of $(S,\calF)$ is $t$.
		For any $\eps\in (0,1)$, $M_\eps(\calF) = O(\eps^{-t})$.
	\end{theorem}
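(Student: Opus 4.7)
The plan is to upper bound $M_\eps(\calF)$ by the maximum size $N$ of an $\eps$-separated subset $F=\{f_1,\dots,f_N\}\subseteq \calF$ with respect to the pseudometric $d(f,f') \eqdef \Pr_D[f\neq f']$, via a random projection to finitely many samples combined with the Sauer-Shelah lemma. This reduction is justified because any minimum $\eps$-net in the sense of Definition~\ref{def:net} is, in particular, an $\eps$-separated set, and conversely any maximal $\eps$-separated set automatically covers $\calF$ within radius $\eps$ (else a missed element could be added without violating separation).

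The core step is to draw $m$ i.i.d.\ samples $T=(t_1,\ldots,t_m)\sim D^m$ and consider the projection $\phi_T(f)\eqdef (f(t_1),\ldots,f(t_m))\in\{0,1\}^m$. On one hand, Sauer-Shelah applied to a class of VC-dimension $t$ gives $|\phi_T(\calF)|\leq (em/t)^t$ for $m\geq t$. On the other hand, for any two distinct $f_i,f_j\in F$, the separation $d(f_i,f_j)\geq \eps$ yields $\Pr_T[\phi_T(f_i)=\phi_T(f_j)]\leq(1-\eps)^m\leq e^{-\eps m}$. A union bound over the $\binom{N}{2}$ pairs shows that for $m=\Theta(\ln N/\eps)$ there exists a realization of $T$ on which all $\phi_T(f_i)$ are distinct, so $N\leq(em/t)^t$. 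Solving this self-referential inequality for $N$ already yields $N=O((\log(1/\eps)/\eps)^t)$, which establishes the bound up to a logarithmic factor.

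The main obstacle is shaving the spurious $\log(1/\eps)$ to reach the sharp $O(\eps^{-t})$ stated in the theorem. This is precisely the content of Haussler's 1995 packing lemma, which replaces the naive pair-by-pair union bound by a more delicate double-sampling / chaining argument. Concretely, I would track the expected total number of $\phi_T$-collisions for pairs in $F$ directly rather than bounding each pair separately, and iteratively refine the packing scale so that the fraction of surviving $\eps$-close pairs decays geometrically at each scale, absorbing the $\log(1/\eps)$ factor across the telescoping. The VC dimension enters only through Sauer-Shelah applied to the residual projected class at every stage, so the final bound remains $O(\eps^{-t})$ with constants depending only on $t$.
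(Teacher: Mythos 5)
The paper does not prove Theorem~\ref{thm:vc_net} at all --- it is imported verbatim as a black box from \cite{haussler1995sphere} --- so there is no internal proof to match against; your proposal is an attempt to reconstruct the literature argument. The first two steps of your plan are sound: the observation that a minimum $\eps$-net in the sense of Definition~\ref{def:net} is in particular $\eps$-separated (so $M_\eps(\calF)$ is bounded by the maximum packing number), and the random-projection argument combining $\Pr_T[\phi_T(f_i)=\phi_T(f_j)]\leq(1-\eps)^m$ with Sauer--Shelah, are both correct and rigorously yield $M_\eps(\calF)=O\bigl((\log(1/\eps)/\eps)^t\bigr)$. It is worth noting that this weaker bound already suffices for every use of Theorem~\ref{thm:vc_net} in the paper: it enters only through $\ln M_\eps$ in the exponent of $\Phi(\calF)$, where the extra factor would merely change $t\ln(50M/\lambda\delta)$ by an additive $t\ln\ln(\cdot)$ term. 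The one place your write-up overreaches is the final paragraph: the mechanism you describe for removing the $\log(1/\eps)$ --- a multi-scale chaining in which the fraction of surviving close pairs decays geometrically --- is not how Haussler's packing lemma actually works, and as stated it is a plan rather than a proof. Haussler's argument instead bounds the edge density of the one-inclusion (unit-distance) graph of a projected VC class by $t$ and then conditions on the last coordinate of a random sample of size $\Theta(t/\eps)$ to control the expected number of distinct projections; no iteration over scales is involved. Since you correctly attribute the sharp $O(\eps^{-t})$ bound to Haussler's lemma --- exactly as the paper does --- this is a citation rather than a gap, but the impressionistic description of that lemma's proof should not be mistaken for a derivation of it.
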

	
	\noindent
	We define the capacity of bad classifiers based on $\eps$-nets.

	\begin{definition}[\bf{Capacity of bad classifiers}]
		\label{def:capacity}
		Let $\eps_0 = \frac{(1-\eta_0-\eta_1)\lambda-2\delta}{5}$.
		Let $\eps_i = \frac{1.01^{2^{i-1}} \delta}{5}$ for $i\in [T]$ where $T=\lceil 232\log\log \frac{2(\tau-3\delta)}{\lambda} \rceil$.
		Given $\calF\subseteq \left\{0,1\right\}^\calX$, we denote the capacity of bad classifiers by
		$
		\textstyle \Phi(\calF) := 2e^{-\eps_0^2 n/6} M_{\eps_0}(\calG_0)+ 4\sum_{i\in [T]} e^{-\frac{\eps_i^2(1-\eta_0-\eta_1)^2\lambda^2 n}{2400}}  M_{\eps_i (1-\eta_0-\eta_1)\lambda/10}(\calG_i).
		$
	\end{definition}
	
	\noindent
	Actually, we can prove $\Phi(\calF)$ is an upper bound for the probability that there exists a bad classifier that is feasible for Program~\eqref{eq:progdenoised}, which is a generalized version of Lemma~\ref{lm:denoised2}.
	Roughly, the factor $2e^{-\eps_0^2 n/6}$ is an upper bound of the probability that a bad classifier $f\in \calG_0$ violates Constraint~(\eqref{eq:denoised}), and the factor $4e^{-\eps_i^2\lambda^2 \delta^2 n}$ is an upper bound of the probability that a bad classifier $f\in \calG_i$ violates Constraint~(\eqref{eq:denoised}).
	We prove if all bad classifiers in the nets of $\calG_i$ ($0\leq i\leq T$) are not feasible for Program~\eqref{eq:progdenoised}, then all bad classifiers should violate Constraint~(\eqref{eq:denoised}). 
	Note that the scale of $\Phi(\calF)$ depends on the size of $\eps$-nets of $\calF$, which can be upper bounded by Theorem~\ref{thm:vc_net} and leads to the success probability of Theorem~\ref{thm:denoised}.

\begin{proof}
    We first claim that Lemma~\ref{lm:denoised2} holds with probability at least $1-\Phi(\calF)$.
	We discuss $\calG_0$ and $\calG_i$ ($i\in [T]$) separately.

	\paragraph{Bad classifiers in $\calG_0$.}
	Let $G_0$ be an $\eps_0$-net of $\calG_0$ of size $M_{\eps_0}(\calG_0)$.
	Consider an arbitrary classifier $g\in G_0$.
	By Lemma~\ref{lm:denoised1}, with probability at least $1-2e^{-\eps_0^2 n/6}$, we have
	\begin{eqnarray}
	\small
	\label{eq:lm_denoised2_1}
	\begin{split}
	 (1-\eta_1)\Pr\left[g=1,\widehat{Z}=0\right] -\eta_1 \Pr\left[g=1, \widehat{Z}=1\right] 
	& \leq&& (1-\eta_0-\eta_1)\Pr\left[g=1,Z=0\right] + \eps_0  \\
	& < && 	\frac{(1-\eta_0-\eta_1)\lambda}{2}+ \eps_0, \quad (\text{Defn. of $\calG_0$})
	\end{split}
	\end{eqnarray}
	and
	\begin{eqnarray}
	\small
	\label{eq:lm_denoised2_1'}
	\begin{split}
    (1-\eta_0)\Pr\left[g=1,\widehat{Z}=1\right] -\eta_0 \Pr\left[g=1, \widehat{Z}=0\right] 
	& < && 	\frac{(1-\eta_0-\eta_1)\lambda}{2}+ \eps_0.
	\end{split}
	\end{eqnarray}
	By the union bound, all classifiers $g\in G_0$ satisfy Inequalities~\eqref{eq:lm_denoised2_1} and~\eqref{eq:lm_denoised2_1'} with probability at least $1-2e^{-\eps_0^2 n/6} M_{\eps_0}(\calG_0)$.
	Suppose this event happens.
	We consider an arbitry classifier $f\in \calG_0$. 
	W.l.o.g., we assume $\Pr\left[f=1,Z=0\right]<\frac{\lambda}{2}$.
	By Definition~\ref{def:net}, there must exist a classifier $g \in G_0$ such that $\Pr\left[f\neq g\right]\leq \eps_0$.
	Then we have
	\begin{align*}
	\small
	& && (1-\eta_1)\Pr\left[f=1,\widehat{Z}=0\right] -  \eta_1 \Pr\left[f=1, \widehat{Z}=1\right] & \\
	& \leq && (1-\eta_1)(\Pr\left[g=1,\widehat{Z}=0\right] +\eps_0)  -\eta_1 (\Pr\left[g=1, \widehat{Z}=1\right]-\eps_0) \quad (\Pr\left[f\neq g\right]\leq \eps_0) \\
	& \leq && \frac{(1-\eta_0-\eta_1)\lambda}{2}+ 2\eps_0 \quad (\text{Ineq.~\eqref{eq:lm_denoised2_1}}) \\
	& \leq && \frac{(1-\eta_0-\eta_1)\lambda}{2} + \frac{(1-\eta_0-\eta_1)\lambda-2\delta}{2} (\text{Defn. of $\eps_0$}) \\
	& = && (1-\eta_0-\eta_1)\lambda - \delta,
	\end{align*}
	Thus, we conclude that all classifiers $f\in \calG_0$ violate Constraint~\eqref{eq:denoised} with probability at least $1-2e^{-\eps_0^2 n/6} M_{\eps_0}(\calG_0)$.

	\paragraph{Bad classifiers in $\calG_i$ for $i\in [T]$.}
	We can assume that $\tau - 3\delta \geq \lambda/2$.
	Otherwise, all $\calG_i$ for $i\in [T]$ are empty, and hence, we complete the proof.
	Consider an arbitry $i\in [T]$ and let $G_i$ be an $\eps_i$-net of $\calG_i$ of size $M_{\eps_i (1-\eta_0-\eta_1)\lambda/10}(\calG_i)$.
	Consider an arbitrary classifier $g\in G_i$.
	By the proof of Lemma~\ref{lm:denoised1}, with probability at least $1-4e^{-\frac{\eps_i^2(1-\eta_0-\eta_1)^2\lambda^2 n}{2400}}$, we have
	\begin{eqnarray}
	\small
	\label{eq:lm_denoised2_6}
	\begin{cases}
	&(1-\eta_1)\Pr\left[g=1,\widehat{Z}=0\right] - \eta_1 \Pr\left[g=1, \widehat{Z}=1\right]  \in (1-\eta_0-\eta_1)\Pr\left[g=1,Z=0\right] \pm \frac{\eps_i(1-\eta_0-\eta_1)\lambda}{20}, \\
	&(1-\eta_0)\Pr\left[g=1,\widehat{Z}=1\right] - \eta_0 \Pr\left[g=1, \widehat{Z}=0\right]  \in (1-\eta_0-\eta_1)\Pr\left[g=1,Z=1\right] \pm \frac{\eps_i(1-\eta_0-\eta_1)\lambda}{20}, \\
	& \gamma^\Delta(f,\widehat{S}) \in (1\pm \eps_i)\cdot \gamma(f,S).
	\end{cases}
	\end{eqnarray}
	Moreover, we have
	\begin{eqnarray}
	\small
	\label{eq:lm_denoised2_2}
	\begin{split}
	\gamma^\Delta(g, \widehat{S}) &\leq && (1+\eps_i)\cdot \gamma(g,S) < (1+\eps_i)\cdot \frac{\tau-3\delta }{1.01^{2^i-1}}. (\text{Defn. of $\calG_i$})
	\end{split}
	\end{eqnarray}
	By the union bound, all classifiers $g\in G_i$ satisfy Inequality~\eqref{eq:lm_denoised2_2} with probability at least 
	\[
	1-4e^{-\frac{\eps_i^2(1-\eta_0-\eta_1)^2\lambda^2 n}{2400}} M_{\eps_i (1-\eta_0-\eta_1)\lambda/10}(\calG_i).
	\]
	Suppose this event happens.
	We consider an arbitry classifier $f\in \calG_i$. 
	By Definition~\ref{def:net}, there must exist a classifier $g \in G_i$ such that $\Pr\left[f\neq g\right]\leq \eps_i (1-\eta_0-\eta_1)\lambda/10$.
	By Inequality~\eqref{eq:lm_denoised2_6} and a similar argument as that for Inequality~\eqref{ineq:lm3}, we have
	\begin{eqnarray}
	\small
	\label{eq:lm_denoised2_7}
	\begin{split}
	(1-\eta_1)\Pr\left[g=1, \widehat{Z}=0\right] - \eta_1 \Pr\left[g=1, \widehat{Z}=1\right]
	\geq 0.45 \cdot (1-\eta_0-\eta_1)\lambda.
	\end{split}
	\end{eqnarray}
	\begin{eqnarray}
	\small
	\label{eq:lm_denoised2_4}
	\begin{split}
	\Gamma_0(f) &
	= \frac{(1-\eta_1)\Pr\left[f=1, \widehat{Z}=0\right] - \eta_1 \Pr\left[f=1, \widehat{Z}=1\right]}{(1-\eta)\widehat{\mu}_0-\eta \widehat{\mu}_1} \\
	\in & \frac{(1-\eta_1)\left(\Pr\left[g=1, \widehat{Z}=0\right] \pm \frac{\eps_i (1-\eta_0-\eta_1)\lambda }{10}\right) }{(1-\eta)\widehat{\mu}_0-\eta \widehat{\mu}_1}  - \frac{\eta_1 \left(\Pr\left[g=1, \widehat{Z}=1\right]  \pm \frac{\eps_i (1-\eta_0-\eta_1)\lambda}{10} \right)}{(1-\eta)\widehat{\mu}_0-\eta \widehat{\mu}_1} \\
	& (\Pr\left[f\neq g\right]\leq \eps_i (1-\eta_0-\eta_1)\lambda/10) \\
	\in & \frac{(1-\eta_1)\Pr\left[g=1, \widehat{Z}=0\right] - \eta_1 \Pr\left[g=1, \widehat{Z}=1\right]  }{(1-\eta)\widehat{\mu}_0-\eta \widehat{\mu}_1}  \pm  \frac{\frac{\eps_i (1-\eta_1-\eta_1)\lambda}{5}}{(1-\eta)\widehat{\mu}_0-\eta \widehat{\mu}_1} \\
	\in & \frac{(1-\eta)\Pr\left[g=1, \widehat{Z}=0\right] - \eta \Pr\left[g=1, \widehat{Z}=1\right] }{(1-\eta)\widehat{\mu}_0-\eta \widehat{\mu}_1}\times   (1\pm  0.45\eps_i)\quad (\text{Ineq.~\eqref{eq:lm_denoised2_7}}) \\
	\in & (1\pm  0.45\eps_i)\cdot\Gamma_0(g). 
	\end{split}
	\end{eqnarray}
	Similarly, we can also prove
	\begin{eqnarray}
	\label{eq:lm_denoised2_5}
	\Gamma_1(f)\in (1\pm  0.45\eps_i)\cdot\Gamma_1(g).
	\end{eqnarray}
	Thus, we conclude that
	\begin{eqnarray*}
	\small
		\begin{split}
			\gamma^\Delta(f,\widehat{S}) & = && \min\left\{\frac{\Gamma_0(f)}{\Gamma_1(f)}, \frac{\Gamma_1(f)}{\Gamma_0(f)} \right\} && \\
			& \leq && \frac{1+ 0.45\eps_i}{1- 0.45\eps_i} \cdot \min\left\{\frac{\Gamma_0(g)}{\Gamma_1(g)}, \frac{\Gamma_1(g)}{\Gamma_0(g)} \right\} && (\text{Ineqs.~\eqref{eq:lm_denoised2_4} and~\eqref{eq:lm_denoised2_5}}) \\
			& < &&\frac{1+ 0.45\eps_i}{1- 0.45\eps_i} \cdot (1+\eps_i)\cdot \frac{\tau-3\delta }{1.01^{2^i-1}}&& (\text{Ineq.~\eqref{eq:lm_denoised2_2}}) \\
			& \leq && \frac{1+ 0.45\eps_1}{1- 0.45\eps_1} \cdot (1+\eps_1)\cdot (\tau-3\delta) && (\text{Defn. of $\eps_i$})\\
			&\leq && \tau-\delta. \quad (\eps_1 = \frac{1.01 \delta}{5})
		\end{split}
	\end{eqnarray*}
	It implies that all classifiers $f\in \calG_i$ violate Constraint~\eqref{eq:denoised} with probability at least 
	\[
	1-4e^{-\frac{\eps_i^2(1-\eta_0-\eta_1)^2\lambda^2 n}{2400}} M_{\eps_i (1-\eta_0-\eta_1)\lambda/10}(\calG_i).
	\]
	By the union bound, we complete the proof of Lemma~\ref{lm:denoised2} for $\delta\in (0, 0.1\lambda)$.

	For general $\delta\in (0,1)$, each bad classifier violates Constraint~\eqref{eq:denoised} with probability at most $4e^{-\frac{\eps_1^2(1-\eta_0-\eta_1)^2\lambda^2 n}{2400}}$ by the above argument.
	By Definition~\ref{def:bad},
	$
	|M_{\eps_0}(\calG_0)| + \sum_{i\in [T]} |M_{\eps_i (1-\eta_0-\eta_1)\lambda/10}(\calG_i)|\leq |M_{\eps_1(1-\eta_0-\eta_1)\lambda/10}(\calF)|.
	$
	Then by the definition of $\Phi(\calF)$ and Theorem~\ref{thm:vc_net}, the probability that there exists a bad classifier violating Constraint~\eqref{eq:denoised} is at most
	$
	\Phi(\calF) = O\left(e^{-\frac{(1-\eta_0-\eta_1)^2 \lambda^2 \delta^2 n}{60000}+ t\ln(\frac{50}{(1-\eta_0-\eta_1)\lambda\delta})}\right).
	$
	This completes the proof of Lemma~\ref{lm:denoised2}.
\end{proof}

\subsection{Proof of Theorem~\ref{thm:denoised} for $p=2$ and statistical rate}
\label{sec:proof_denoised}

	\begin{proof}
	\sloppy
		We first upper bound the probability that $\gamma^\Delta(f^\Delta, \widehat{S})\geq \tau-3\delta$.
		Let $\calF_b = \left\{f\in \calF: \gamma(f,S)< \tau-3\delta\right\}$.
		If all classifiers in $\calF_b$ violate Constraint~(\eqref{eq:denoised}), we have that $\gamma^\Delta(f^\Delta, \widehat{S})\geq \tau-3\delta$.
		Note that if
		$
		\min_{i\in \left\{0,1\right\}}\Pr\left[f=1,Z=i\right]\geq \frac{\lambda}{2},
		$ 
		then $\gamma(f,S) \geq \frac{\lambda}{2}$ holds by definition.
		Also, $\frac{\lambda-3\delta}{1.01^{2^{T+1}-1}}\leq \frac{\lambda}{2}$.
		Thus, we conclude that
		$
		\calF_b \subseteq \cup_{i=0}^{T} \calG_i. 
		$
		Then if all bad classifiers violate Constraint~(\eqref{eq:denoised}), we have $\gamma^\Delta(f^\Delta, \widehat{S})\geq \tau-3\delta$.
		By Lemma~\ref{lm:denoised2}, $\gamma^\Delta(f^\Delta, \widehat{S})\geq \tau-3\delta$ holds with probability at least $1-O\left(e^{-\frac{(1-\eta_0-\eta_1)^2 \lambda^2 \delta^2 n}{60000}+ t\ln(\frac{50}{(1-\eta_0-\eta_1)\lambda\delta})}\right)$.

		Next, we upper bound the probability that $f^\star$ is feasible for Program~\eqref{eq:progdenoised}, which implies $\frac{1}{N} \sum_{a\in [N]} L(f^\Delta, s_a) \leq \frac{1}{N} \sum_{a\in [N]} L(f^\star, s_a)$.
		Letting $\eps = \delta$ in Lemma~\ref{lm:denoised1}, we have that with probability at least $1-2e^{-\delta^2 n/6}-4e^{-\frac{(1-\eta_0-\eta_1)^2 \lambda^2\delta^2 n}{2400}}$,
		\begin{eqnarray*}
			\begin{cases}
				&(1-\eta)\Pr\left[f^\star=1,\widehat{Z}=0\right] - \eta \Pr\left[f^\star=1, \widehat{Z}=1\right] \geq (1-\eta_0-\eta_1)\Pr\left[f^\star=1,Z=0\right] - \delta, \\
				&(1-\eta )\Pr\left[f^\star=1,\widehat{Z}=1\right] - \eta \Pr\left[f^\star=1, \widehat{Z}=0\right] \geq (1-\eta_0-\eta_1)\Pr\left[f^\star=1,Z=1\right] - \delta, \\
				&\gamma^\Delta(f^\star,\widehat{S}) \geq (1-\delta)\gamma(f,S)\geq \gamma(f,S) -\delta.
			\end{cases}
		\end{eqnarray*}
		It implies that $f^\star$ is feasible for Program~\eqref{eq:progdenoised} with probability at least $1-2e^{-\delta^2 n/6}-4e^{-\frac{(1-\eta_0-\eta_1)^2 \lambda^2\delta^2 n}{2400}}$.
		This completes the proof. 
	\end{proof}

\section{Proof of Theorem~\ref{thm:denoised} for multiple protected attributes and general fairness constraints}
	\label{sec:complete_generalization}

	In this section, we prove Theorem~\ref{thm:denoised} and show how to extend the theorem to multiple protected attributes and multiple fairness constraints (Remark~\ref{remark:multiple}).
	Denote $\calQ_{\mathrm{linf} }$ to be the collection of all group performance functions.
	Denote 
	$\textstyle{\calQ_{\mathrm{lin}}\subseteq \calQ_{\mathrm{linf} }}$ to be the collection of linear group performance functions.

	\color{black}
	\begin{remark} \label{rem:metric_comparison}
	The fairness metric considered in \cite{awasthi2020equalized}, i.e., equalized odds, can also be captured by $\textstyle{\calQ_{\mathrm{linf} }}$; equalized odds simply requires equal false positive and true positive rates across the protected types.
	The fairness metrics used in \cite{lamy2019noise}, on the other hand, are somewhat different; they work with statistical parity and equalized odds for binary protected attributes, however, 
	while we define disparity $\Omega_q$ as the ratio between the minimum and maximum $q_i$, \cite{lamy2019noise} define the disparity using the additive difference of $q_i$ across the protected types.
	{
	It is not apparent how to extend their method for improving additive metrics to linear-fractional fairness metrics as they counter the noise by scaling the tolerance of their constraints, and it is unclear how to compute these scaling parameters prior to the optimization step when the group performance function $q$ is conditioned on the classifier prediction.
	On the other hand, our method can handle additive metrics by using the difference of altered $q_i$ across the noisy protected attribute to form fairness constraints.
	}
	\end{remark}
	\color{black}

	Similar to Eq~(\eqref{eq:lm1}), we first have for each $i\in [p]$
	\begin{eqnarray*}
	\begin{split} 
	\textstyle \Pr\left[\xi'(f), \widehat{Z}=i\right] =\sum_{j\in [p]} \Pr\left[\widehat{Z}=i\mid \xi'(f), Z=j\right] \Pr\left[\xi'(f), Z=j\right].
	\end{split}
	\end{eqnarray*}
	By Definition~\ref{def:flippingnoise} and a similar argument as in the proof of Lemma~\ref{lm:denoised1}, we have the following lemma.

	\begin{lemma}[\bf{Relation between $\Pr\left[\xi'(f), \widehat{Z}=i\right]$ and $\Pr\left[\xi'(f), Z=j\right]$}]
		\label{lm:general1}
		Let $\eps\in (0,1)$ be a fixed constant.
		With probability at least $1-2p e^{-\eps^2 n/6}$, we have for each $i\in [p]$,
		\[
		\textstyle \Pr\left[\xi'(f), \widehat{Z}=i\right] \in \sum_{j\in [p]} H_{ji}\cdot \Pr\left[\xi'(f), Z=j\right] \pm \eps.
		\]
	\end{lemma}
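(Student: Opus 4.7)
The plan is to generalize the first-part argument of Lemma~\ref{lm:denoised1} from a binary protected attribute to the multi-valued setting; no new probabilistic device is required, only careful bookkeeping over $p$ summands. First I would expand the left-hand side via the law of total probability,
\[
\Pr[\xi'(f),\widehat{Z}=i] \;=\; \sum_{j\in[p]} \Pr[\widehat{Z}=i\mid \xi'(f),Z=j]\cdot\Pr[\xi'(f),Z=j],
\]
and observe that, by Definition~\ref{def:flippingnoise}, the noisy label $\widehat{Z}$ for each sample with $Z=j$ is drawn independently with $\Pr[\widehat{Z}=i\mid Z=j]=H_{ji}$. Hence the expectation (over the flipping randomness) of the empirical joint probability $\Pr[\widehat{Z}=i,\xi'(f),Z=j]$ equals $H_{ji}\cdot\Pr[\xi'(f),Z=j]$, setting up a standard concentration argument around this mean.

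Next, I would apply the multiplicative Chernoff bound to each of the $p$ empirical joint probabilities $\Pr[\widehat{Z}=i,\xi'(f),Z=j]$ in exactly the form used to derive Inequalities~(\ref{ineq:lm1}) and~(\ref{ineq:lm2}) in the proof of Lemma~\ref{lm:denoised1}. Allocating a per-summand additive budget of $\eps/p$ and using $H_{ji}\leq 1$ together with $\Pr[\xi'(f),Z=j]\leq 1$ to absorb constants into the exponent (the analogue of the $\eta\leq 0.5$ step in Lemma~\ref{lm:denoised1}), I obtain
\[
\bigl|\Pr[\widehat{Z}=i,\xi'(f),Z=j]-H_{ji}\Pr[\xi'(f),Z=j]\bigr| \;\leq\; \eps/p
\]
for each fixed $j\in[p]$, with failure probability at most $2e^{-\eps^2 n/6}$ per term.

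Finally, a union bound over $j\in[p]$ yields the advertised failure probability of $2p\,e^{-\eps^2 n/6}$, and a single application of the triangle inequality over the $p$ summands collapses the per-term errors into an aggregate additive error of $\sum_{j\in[p]}\eps/p=\eps$, which is precisely the bound in the statement. I do not foresee a real obstacle here: the main task is simply to track the $p$-dependence when distributing the additive budget among the $p$ summands and when taking the union bound, and this is what produces the $2p$ pre-factor in place of the $2$ pre-factor appearing in the binary case of Lemma~\ref{lm:denoised1}.
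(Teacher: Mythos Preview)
Your decomposition matches the paper, but the claim that each per-summand failure probability is $2e^{-\eps^2 n/6}$ under an $\eps/p$ budget does not hold. Writing $n_j := n\Pr[\xi'(f),Z=j]$, the empirical joint $\Pr[\widehat Z=i,\xi'(f),Z=j]$ equals $\tfrac{1}{n}$ times a sum of $n_j$ independent $\mathrm{Bernoulli}(H_{ji})$ variables; asking for additive error $\eps/p$ on this joint is asking for deviation $n\eps/p$ on a sum with mean $n_j H_{ji}$, and Chernoff yields failure probability
\[
2\exp\!\Bigl(-\tfrac{(n\eps/p)^2}{3\,n_j H_{ji}}\Bigr)\;=\;2\exp\!\Bigl(-\tfrac{n\eps^2}{3p^{2}\, H_{ji}\,\Pr[\xi'(f),Z=j]}\Bigr).
\]
The bounds $H_{ji}\le 1$ and $\Pr[\xi'(f),Z=j]\le 1$ only reduce this to $2e^{-\eps^2 n/(3p^{2})}$; the $p^{2}$ incurred by the $\eps/p$ split cannot be ``absorbed.'' In the binary proof of Lemma~\ref{lm:denoised1} the analogous factor $p^{2}=4$ is cancelled by $\eta\le 0.5$, but no such analogue is available here since the diagonal entries satisfy $H_{jj}>0.5$.

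The fix is to avoid the per-$j$ split altogether. For each fixed $i$, observe that $\Pr[\xi'(f),\widehat Z=i]=\tfrac{1}{n}\sum_{a\in[n]}\mathbf{1}[\xi'(f(x_a)),\,\widehat z_a=i]$ is already an average of $n$ independent $\{0,1\}$ random variables with mean $\sum_{j} H_{ji}\Pr[\xi'(f),Z=j]$; a single additive Chernoff/Hoeffding application gives deviation at most $\eps$ with failure probability $2e^{-2n\eps^{2}}\le 2e^{-\eps^{2} n/6}$. A union bound over $i\in[p]$ (rather than over $j$) then produces the stated $2p\,e^{-\eps^{2} n/6}$.
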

	
	\noindent
	Define 
	\[
	\textstyle w(f) := \left(\Pr\left[\xi'(f),Z=1\right], \ldots, \Pr\left[\xi'(f),Z=p\right]\right),
	\] 
	and recall that
	\[
	\textstyle \widehat{w}(f) := \left(\Pr\left[\xi'(f), \widehat{Z}=1\right], \ldots, \Pr\left[\xi'(f), \widehat{Z}=p\right]\right).
	\]
	By Lemma~\ref{lm:general1}, we directly obtain the following lemma.
	
	\begin{lemma}[\bf{Approximation of $\Pr\left[\xi'(f), Z=i\right]$}]
		\label{lm:general2}
		With probability at least $1-2p e^{-\eps^2 n/6}$, for each $i\in [p]$,
		\[
		\text{\footnotesize $ w(f)_i \in (H^\top)^{-1}_i \widehat{w}(f) \pm \eps \|(H^\top)^{-1}_i\|_1 \in (H^\top)^{-1}_i \widehat{w}(f) \pm \eps M.$}
		\]
	\end{lemma}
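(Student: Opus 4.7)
The plan is to reformulate Lemma~\ref{lm:general1} as a vector-valued identity and then invert $H^\top$. Specifically, on the event of probability at least $1 - 2p e^{-\eps^2 n/6}$ guaranteed by Lemma~\ref{lm:general1}, the componentwise bounds $\Pr[\xi'(f), \widehat{Z}=i] - \sum_{j\in [p]} H_{ji}\cdot \Pr[\xi'(f), Z=j] \in [-\eps, \eps]$ for all $i\in [p]$ can be packaged as $\widehat{w}(f) = H^\top w(f) + \epsilon$ for some error vector $\epsilon \in \R^p$ with $\|\epsilon\|_\infty \leq \eps$. Note that the union bound over the $p$ coordinates is already absorbed into the failure probability of Lemma~\ref{lm:general1}, so no additional probabilistic argument is needed here.

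Next, I would invoke the fact stated right after Definition~\ref{def:flippingnoise} that $H$ is diagonally dominant (since $H_{ii} > 0.5 > \sum_{j\neq i} H_{ij}$) and hence nonsingular, so $(H^\top)^{-1}$ exists. Left-multiplying the identity $\widehat{w}(f) = H^\top w(f) + \epsilon$ by $(H^\top)^{-1}$ yields $(H^\top)^{-1}\widehat{w}(f) = w(f) + (H^\top)^{-1}\epsilon$, and reading off the $i$-th coordinate gives
\[
w(f)_i - (H^\top)^{-1}_i\,\widehat{w}(f) = -\,(H^\top)^{-1}_i\,\epsilon.
\]

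I would then bound the right-hand side via Hölder's inequality: $|(H^\top)^{-1}_i \epsilon| \leq \|(H^\top)^{-1}_i\|_1 \cdot \|\epsilon\|_\infty \leq \eps \,\|(H^\top)^{-1}_i\|_1$. This yields the first inclusion in the lemma statement. The second inclusion is immediate from the definition $M := \max_{i\in [p]} \|(H^\top)^{-1}_i\|_1$, which upper-bounds every row's $\ell_1$-norm uniformly.

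I do not anticipate a serious obstacle here: the lemma is essentially a routine transfer of the componentwise Chernoff-type bound of Lemma~\ref{lm:general1} through the linear map $(H^\top)^{-1}$, with the error inflated by the appropriate $\ell_\infty \to \ell_\infty$ operator norm (which is the maximum $\ell_1$-row norm). The only point worth double-checking is that the same high-probability event suffices for \emph{all} $i \in [p]$ simultaneously, which follows because Lemma~\ref{lm:general1}'s probability bound is itself stated uniformly in $i$.
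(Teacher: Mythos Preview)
Your proposal is correct and matches the paper's approach: the paper states that Lemma~\ref{lm:general2} follows directly from Lemma~\ref{lm:general1}, and what you have written is precisely that direct derivation---rewriting the coordinatewise bounds as $\widehat{w}(f) = H^\top w(f) + \epsilon$ with $\|\epsilon\|_\infty \le \eps$, inverting, and applying H\"older. There is nothing to add.
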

	
	\noindent
	Thus, we use $(H^\top)^{-1}_i \widehat{w}(f)$ to estimate $\Pr\left[\xi'(f),Z=i\right]$.
	Similarly, we define 
    \begin{align*} 
    u(f) := \left(\Pr\left[\xi(f), \xi'(f),Z=i\right]\right)_{i\in [p]},
    \end{align*}
    and recall that
	\begin{align*}\widehat{u}(f) := \left(\Pr\left[\xi(f), \xi'(f), \widehat{Z}=i\right]\right)_{i\in [p]}.
	\end{align*}
	Once again, we use $(H^\top)^{-1}_i \widehat{u}(f)$ to estimate $\Pr\left[\xi(f), \xi'(f),Z=i\right]$ and to estimate constraint 
	$$
	\textstyle \min_{i\in [p]}\Pr\left[\xi(f), \xi(f), \xi'(f), Z=i \right]\geq \lambda,
	$$ 
	we construct the following constraint:
	\begin{align}
	\label{eq:gencon2}
	\textstyle (H^\top)^{-1} \widehat{u}(f) \geq (\lambda-\eps M) \mathbf{1},
	\end{align}
	which is the first constraint of Program~(\eqref{eq:progdenoised}).

	\eat{
	\begin{remark}
		\label{remark:4}
		The first two constraints of Program~\eqref{eq:progdenoised} are a special case of Constraint~(\eqref{eq:gencon1}). 
		By Definition~\ref{def:flippingnoise}, we have that
		\[
	    H = \textstyle{\begin{bmatrix}
		1-\eta_0 & \eta_0 \\
		\eta_1 & 1-\eta_1
		\end{bmatrix}} \text{ and }
		\]
		\[		\textstyle (H^\top)^{-1} = \begin{bmatrix}
		\frac{1-\eta_1}{1-\eta_0-\eta_1} & -\frac{\eta_1}{1-\eta_0-\eta_1} \\
		-\frac{\eta_0}{1-\eta_0-\eta_1} & \frac{1-\eta_0}{1-\eta_0-\eta_1}
		\end{bmatrix}.\]
		Then $M = \frac{1}{1-\eta_0-\eta_1}$ and we can verify that the first two constraints of Program~\eqref{eq:progdenoised} are equivalent to Constraint~(\eqref{eq:gencon1}) when $\xi'(f) = (f=1)$.
	\end{remark}
	}

	\noindent
	To provide the performance guarantees on the solution of the above program, once again we define the following general notions of bad classifiers and the corresponding capacity.

	\begin{definition}[\bf{Bad classifiers in general}]
		\label{def:bad_general}
		Given a family $\calF\subseteq \left\{0,1\right\}^\calX$, we call $f\in \calF$ a bad classifier if $f$ belongs to at least one of the following sub-families:
		\begin{itemize}
			\item \small $\calG_0:= \left\{f\in \calF: \min_{i\in [p]}\Pr\left[\xi(f), \xi'(f),Z=i\right] < \frac{\lambda }{2} \right\}$;
			\item Let $\textstyle{T=\lceil 232\log\log \frac{2(\tau-3\delta)}{\lambda} \rceil}$. 
			For $i\in [T]$, define
			\[
			\small
			\textstyle \calG_i:= \left\{f\in \calF\setminus \calG_0: \Omega_q(f,S) \in [\frac{\tau-3\delta }{1.01^{2^{i+1}-1}}, \frac{\tau-3\delta }{1.01^{2^i-1}} ) \right\}.
			\]
		\end{itemize}
	\end{definition}
	
	\noindent
	Note that Definition~\ref{def:bad} is a special case of the above definition by letting $p=2$, $M=10$, $\xi(f) = (f=1)$ and $\xi'(f)=\emptyset$.
	We next propose the following definition of the capacity of bad classifiers.

	\begin{definition}[\bf{Capacity of bad classifiers in general}]
		\label{def:capacity_general}
		Let $\eps_0 = \frac{\lambda-2\delta}{5M}$.
		Let $\eps_i = \frac{1.01^{2^{i-1}} \delta}{5}$ for $i\in [T]$ where $T=\lceil 232\log\log \frac{2(\tau-3\delta)}{\lambda} \rceil$.
		Given a family $\calF\subseteq \left\{0,1\right\}^\calX$, we denote the capacity of bad classifiers by
		\begin{align*} 
	\Phi(\calF):=  2p e^{-\eps_0^2 n/6} M_{\eps_0}(\calG_0) + 4p\sum_{i\in [T]} e^{-\frac{\eps_i^2\lambda^2 n}{2400M^2}} \cdot M_{\eps_i\lambda/10M}(\calG_i).
		\end{align*}
	\end{definition}
	
	\noindent
	By a similar argument as in Lemma~\ref{lm:denoised2}, we can prove that $\Phi(\calF)$ is an upper bound of the probability that there exists a bad classifier feasible for Program~\eqref{eq:progdenoised}.
	Now we are ready to prove Theorem~\ref{thm:denoised}.
	Actually, we prove the following generalized version.

	\begin{theorem}[\bf{Performance of Program~\eqref{eq:progdenoised}}]
		\label{thm:denoised2}	
		Suppose the VC-dimension of $(S,\calF)$ is $t\geq 1$.
		Given any non-singular matrix $H\in [0,1]^{p\times p}$ with $\sum_{j\in [p]} H_{ij} = 1$ for each $i\in [p]$ and $\lambda\in (0,  0.5)$, let $f^\Delta\in \calF$ denote an optimal fair classifier of Program~\eqref{eq:progdenoised}.
		With probability at least $1-\Phi(\calF)-4p e^{-\frac{\lambda^2 \delta^2 n}{2400 M^2}}$, the following properties hold
		\begin{itemize}
			\item $\frac{1}{N} \sum_{a\in [N]} L(f^\Delta, s_a) \leq \frac{1}{N} \sum_{a\in [N]} L(f^\star, s_a)$;
			\item $\Omega_q(f^\Delta,S)\geq \tau-3\delta$.
		\end{itemize}
		Specifically, if the VC-dimension of $(S,\calF)$ is $t$ and $\delta\in (0,1)$, the success probability is at least $1-O(p e^{-\frac{\lambda^2 \delta^2 n}{60000 M^2}+ t\ln(50M/\lambda \delta)})$.
	\end{theorem}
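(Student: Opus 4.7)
The plan is to mirror the proof strategy used for the special case ($p=2$, statistical rate) in Section~\ref{sec:proof}, but now leveraging the general approximations already developed in Lemmas~\ref{lm:general1} and \ref{lm:general2}. Concretely, I would establish two analogs of Lemmas~\ref{lm:denoised1} and \ref{lm:denoised2}, then combine them to control both the accuracy and fairness guarantees of $f^\Delta$.

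For the accuracy side, I would first show that $f^\star$ is a feasible solution of Program~\eqref{eq:progdenoised} with high probability. Applying Lemma~\ref{lm:general2} (with $\eps = \delta$) to both $u(f^\star)$ and $w(f^\star)$ gives, for each $i\in[p]$, $(H^\top)^{-1}_i\widehat{u}(f^\star)\in u(f^\star)_i \pm M\delta$ and $(H^\top)^{-1}_i\widehat{w}(f^\star)\in w(f^\star)_i \pm M\delta$, each with failure probability $2p e^{-\delta^2 n/6}$. Combined with Assumption~\ref{assumption:ratio}, these yield the first constraint $(H^\top)^{-1}\widehat{u}(f^\star)\geq(\lambda-M\delta)\mathbf{1}$. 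For the second constraint, I would argue that each $\Gamma_i(f^\star)$ is within a $(1\pm O(\delta))$ multiplicative factor of $q_i(f^\star)$ — here one uses that the denominator $w(f^\star)_i\geq\lambda$ is bounded away from zero (an immediate consequence of Assumption~\ref{assumption:ratio} since $\Pr[\xi'(f^\star),Z{=}i]\geq\Pr[\xi(f^\star),\xi'(f^\star),Z{=}i]$), so a $\pm M\delta$ additive error translates into a multiplicative error of order $M\delta/\lambda$. This implies $\min_i\Gamma_i(f^\star)/\max_i\Gamma_i(f^\star)\geq (\tau-\delta)$, so $f^\star$ is feasible; hence the empirical risk inequality follows immediately from optimality of $f^\Delta$.

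For the fairness side, the task is to show that every bad classifier (Definition~\ref{def:bad_general}) is infeasible for Program~\eqref{eq:progdenoised} with high probability, so $f^\Delta\notin \bigcup_{i=0}^T \calG_i$, which immediately yields $\Omega_q(f^\Delta,S)\geq \tau-3\delta$. I would treat $\calG_0$ and $\calG_i$ ($i\in[T]$) separately. For $\calG_0$, pick a $\eps_0$-net $G_0$ with $\eps_0=\frac{\lambda-2\delta}{5M}$; for each $g\in G_0$, Lemma~\ref{lm:general2} implies $(H^\top)^{-1}_j\widehat{u}(g)\leq \lambda/2 + M\eps_0$ for some $j$ (since by definition some $u(g)_j<\lambda/2$), and by the choice of $\eps_0$ this falls below $\lambda-M\delta$; then extending from $G_0$ to all of $\calG_0$ via the net approximation costs at most another $2M\eps_0$ slack, which is still below threshold. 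For $\calG_i$ ($i\in[T]$), pick an $\eps_i\lambda/(10M)$-net; each net point $g$ has $\Omega_q(g,S)<\frac{\tau-3\delta}{1.01^{2^i-1}}$, and a Chernoff-plus-Lemma~\ref{lm:general2} argument shows that $\min_j\Gamma_j(g)/\max_j\Gamma_j(g)\leq(1+\eps_i)\Omega_q(g,S)$, after which the $\eps_i$-net approximation inflates the ratio by a further factor $\frac{1+0.45\eps_i}{1-0.45\eps_i}$. By the choice of $\eps_i$, this composite bound stays strictly below $\tau-\delta$. Summing the per-net failure probabilities $2pe^{-\eps_0^2 n/6}M_{\eps_0}(\calG_0)$ and $4pe^{-\eps_i^2\lambda^2 n/(2400M^2)}M_{\eps_i\lambda/(10M)}(\calG_i)$ yields the bound $\Phi(\calF)$.

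The main obstacle will be the propagation of errors through the multiplicative/linear-fractional constraint when $q$ is linear-fractional: unlike the linear statistical-rate case, both numerator $\widehat{u}(f)$ and denominator $\widehat{w}(f)$ depend on $f$, so one must control both simultaneously while ensuring the denominator is bounded below. The key is to leverage Assumption~\ref{assumption:ratio} to guarantee $w(f^\star)_i\geq\lambda$, and restrict attention on the bad-classifier side to $\calF\setminus\calG_0$, where the denominator is likewise bounded below by $\lambda/2 - $ (net error), so that the additive $O(M\eps)$ error from Lemma~\ref{lm:general2} translates into a controllable multiplicative perturbation. The remaining accounting is routine: apply Theorem~\ref{thm:vc_net} to bound $M_{\eps_i\lambda/(10M)}(\calG_i)=O((10M/\eps_i\lambda)^t)$, take logs, and union-bound over $i\in\{0,1,\dots,T\}$ together with the probability that $f^\star$ fails to be feasible, to obtain the final bound $1-O(pe^{-\lambda^2\delta^2 n/(60000 M^2)+t\ln(50M/\lambda\delta)})$.
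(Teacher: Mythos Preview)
Your approach is essentially identical to the paper's: show $f^\star$ is feasible for Program~\eqref{eq:progdenoised} via Lemma~\ref{lm:general2}, then argue (as in Lemma~\ref{lm:denoised2}, with $\tfrac{1}{1-\eta_0-\eta_1}$ replaced by $M$) that every bad classifier in Definition~\ref{def:bad_general} is infeasible, so $\Omega_q(f^\Delta,S)\geq\tau-3\delta$; finally, bound $\Phi(\calF)$ via Theorem~\ref{thm:vc_net}.

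There is one parameter slip in your accuracy argument. Applying Lemma~\ref{lm:general2} with $\eps=\delta$ gives an additive error $M\delta$ on both $(H^\top)^{-1}_i\widehat{u}(f^\star)$ and $(H^\top)^{-1}_i\widehat{w}(f^\star)$; as you yourself note, this converts to a multiplicative error of order $M\delta/\lambda$ on $\Gamma_i(f^\star)$, \emph{not} $O(\delta)$. With that error you cannot conclude $\min_i\Gamma_i(f^\star)\geq(\tau-\delta)\max_i\Gamma_i(f^\star)$ unless $M/\lambda=O(1)$, which is not assumed. The fix is exactly what the paper does: take $\eps=\tfrac{\lambda\delta}{20M}$ in Lemma~\ref{lm:general2}, so that the additive error $M\eps=\tfrac{\lambda\delta}{20}$ is small relative to $u(f^\star)_i,w(f^\star)_i\geq\lambda$, yielding $\Gamma_i(f^\star)\in(1\pm\tfrac{\delta}{10})^2 q_i(f^\star)$ and hence feasibility for the second constraint. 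This choice is also what produces the $4pe^{-\lambda^2\delta^2 n/(2400M^2)}$ term in the theorem's probability bound (your $2pe^{-\delta^2 n/6}$ does not match). Once you correct $\eps$, the rest of your plan goes through as written.
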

	
	
	\noindent
	The proof is almost the same as in Theorem~\ref{thm:denoised}: we just need to replace $\frac{1}{1-\eta_0-\eta_1}$ by $M$ everywhere.
	For multiple fairness constraints, the success probability of Theorem~\ref{thm:denoised} changes to be 
	\[
	\textstyle 1-O(kp e^{-\frac{\lambda^2 \delta^2 n}{60000 M^2}+ t\ln(50M/\lambda \delta)}).
	\]

\begin{proof}
	Note that the term $4p e^{-\frac{\lambda^2 \delta^2 n}{2400 M^2}}$ is an upper bound of the probability that $f^\star$ is not feasible for Program~\eqref{eq:progdenoised}.
	The idea comes from Lemma~\ref{lm:general2} by letting $\eps = \frac{\lambda \delta}{20 M}$ such that for each $i\in [p]$,
	\[
	w(f^\star)_i \in (1\pm \frac{\delta}{10}) (H^\top)^{-1}_i \widehat{w}(f^\star) \text{ and } 
	\]
	\[
	u(f^\star)_i \in (1\pm \frac{\delta}{10}) (H^\top)^{-1}_i \widehat{u}(f^\star).
	\]
	Consequently,  $\frac{1}{N} \sum_{a\in [N]} L(f^\Delta, s_a) \leq \frac{1}{N} \sum_{a\in [N]} L(f^\star, s_a)$.
	Since $\Phi(\calF)$ is an upper bound of the probability that there exists a bad classifier feasible for Program~\eqref{eq:progdenoised}, we complete the proof.
\end{proof}

\begin{remark}[\bf{Generalization to multiple protected attributes and multiple fairness metrics}]
    \label{remark:multiple}
    For the general case that $m,k\geq 1$, i.e., there exists $m$ protected attributes $Z_1\in [p_1],\ldots, Z_m\in [p_m]$ and $k$ group performance functions $q^{(1)}, \ldots, q^{(l)}$ together with a threshold vector $\tau\in [0,1]^k$ where each $q^{(l)}$ is on some protected attribute. 
    In this case, we need to make a generalized assumption of Assumption~\ref{assumption:ratio}, i.e., there exists constant $\lambda\in (0, 0.5)$ such that for any $l\in [k]$,
    \[
    \textstyle{\min_{i\in [p]}\Pr_D\left[\xi^{(l)}(f^\star), (\xi')^{(l)}(f^\star), Z=i \right]\geq \lambda}.
    \]
    The arguments are almost the same except that for each group performance function $q^{(i)}$, we need to construct corresponding denoised constraints and have an individual capacity $\phi^{(i)}(\calF)$.
    Consequently, the success probability of Theorem~\ref{thm:denoised2} becomes $1-O\left(\sum_{i\in [m]} \phi^{(i)}(\calF)\right)$.
\end{remark}

\eat{

\section{Extension to general $p\geq 2$ and multiple fairness constraints}
	\label{sec:generalization}
	
	In this section, we show how to solve Problem~\ref{problem:simple} for multiple, non-binary protected attributes and multiple fairness constraints.
	We consider a general class of fairness metrics defined in~\cite{celis2019classification}, based on the following definition.

	\begin{definition}[\bf{Linear-fractional/Linear group performance functions}]
		\label{def:performance}
		Given a classifier $f\in \calF$ and $i\in [p]$, we call $q_i(f)$ the group performance of $Z=i$ if $\textstyle{q_i(f)=\Pr\left[\xi(f)\mid \xi'(f), Z=i\right]}$ for some events $\xi(f), \xi'(f)$ that might depend on the choice of $f$.
		Define a group performance function $\textstyle{q:\calF\rightarrow [0,1]^p}$ for any classifier $f\in \calF$ as $q(f) = (q_1(f), \ldots, q_p(f))$.
		Denote $\calQ_{\mathrm{linf} }$ to be the collection of all group performance functions.
		If $\xi'$ does not depend on the choice of $f$, $q$ is said to be \textbf{linear}.
		Denote 
		$\textstyle{\calQ_{\mathrm{lin}}\subseteq \calQ_{\mathrm{linf} }}$ to be the collection of linear group performance functions.
	\end{definition}
	
	\noindent
	At a high level, a classifier $f$ is considered to be fair w.r.t. to $q$ if $q_1(f)\approx \cdots \approx q_p(f)$.
	Definition~\ref{def:performance} is general and contains many fairness metrics.
	For instance, if $\xi := (f=1)$ and $\xi':= (Y=0)$, we have $q_i(f)= \Pr\left[f=1\mid Y=0,Z=i\right]$ which is linear and called the false positive rate.
	If $\xi:=(Y=0)$ and $\xi':= (f=1)$, we have $q_i(f)= \Pr\left[Y=0\mid f=1, Z=i \right]$ which is linear-fractional and called the false discovery rate.
	See ~\cite[Table 1]{celis2019classification} for more examples.
	Given a group performance function $q$, we define $\Omega_q$ to be
	\[
	\textstyle \Omega_q (f, S) := \min_{i\in [p]} q_i(f)/ \max_{i\in [p]} q_i(f).
	\]
	\begin{remark} \label{rem:metric_comparison}
	%
	The fairness metric considered in \cite{awasthi2020equalized}, i.e., equalized odds, can also be captured using the above definition; equalized odds simply requires equal false positive and true positive rates across the protected types.
	The fairness metrics used in \cite{lamy2019noise}, on the other hand, are somewhat different; they work with statistical parity and equalized odds for binary protected attributes, however, 
	while we define disparity $\Omega_q$ as the ratio between the minimum and maximum $q_i$, \cite{lamy2019noise} define the disparity using the additive difference of $q_i$ across the protected types.
	%
	{
	It is not apparent how to extend their method for improving additive metrics to linear-fractional fairness metrics as they counter the noise by scaling the tolerance of their constraints, and it is unclear how to compute these scaling parameters prior to the optimization step when the group performance function $q$ is conditioned on the classifier prediction.
	On the other hand, our method can handle additive metrics by using the difference of altered $q_i$ across the noisy protected attribute to form fairness constraints.
	}
	\end{remark}
	\noindent
	Next, we extend the flipping noises to general $p\geq 2$.

	\begin{definition}[\bf{Flipping noises in general}]
		\label{def:flipping_general}
		Let $H\in [0,1]^{p\times p}$ be a 
		matrix satisfying that $\sum_{j\in [p]}H_{ij}=1$ for any $i\in [p]$.
		For each $i\in [N]$, we assume that the protected attribute of the $i$-th sample $z_i$ is observed as 
		$\widehat{z}_i = j$ with probability $H_{z_ij}$, for any $j\in [p]$.
	\end{definition}
	
	\noindent
	Note that $H$ can be non-symmetric, i.e., it is possible that $H_{ij}\neq H_{ji}$ for $i\neq j$.
	Definition~\ref{def:flippingnoise} is also a special case of Definition~\ref{def:flipping_general} by letting $ \textstyle{H = \begin{bmatrix}
	1-\eta_0 & \eta_0 \\
	\eta_1 & 1-\eta_1
	\end{bmatrix}}$.
	%
	%
	
	{In the binary setting, we assumed that $\eta_0, \eta_1 \in (0,0.5)$, i.e., the probability that a protected attribute is not flipped is strictly greater than the probability that it is flipped. 
	As stated earlier, when the noise parameter is high, we cannot learn any information about $Z$ from $\hat{Z}$.
	Similar argument holds for the case of non-binary protected attribute, and so the sum of non-diagonal entries in each row is assumed to be strictly less than the diagonal entry, implying that probability of not flipping is greater than the probability of flipping for every protected attribute type.
	A useful property of such a \textit{diagonally-dominant} matrix is that it is always non-singular \cite{horn2012matrix}.
	}
	
	With the above definitions, we are ready to propose the extension of Problem~\ref{problem:simple} to general $p\geq 2$ and multiple protected attributes.

	\begin{problem}[\bf{Fair classification with noisy protected attributes}]
		\label{problem:general}
		Given $m$ protected attributes, $k$ group performance functions $q^{(1)}, \ldots, q^{(k)}$ where each one is based on some protected attribute, a threshold vector $\tau\in [0,1]^k$ and a noisy dataset $\widehat{S}$ with noise matrix $H$, the goal is to learn an (approximate) optimal fair classifier $f\in \calF$ of the following program:
		\begin{tcolorbox}
			\begin{equation} \tag{Gen-TargetFair}
			\label{eq:progtarget_general}
			\begin{split}
			& \textstyle{\min_{f\in \calF} \frac{1}{N}\sum_{i\in [N]} L(f, s_i) \quad s.t.} \\
			& ~ \textstyle{\Omega_{q^{(i)}}(f, S)\geq \tau_i, \quad \forall i\in [k]}.
			\end{split}
			\end{equation}
		\end{tcolorbox}
	\end{problem}
	
	\noindent
	We slightly abuse the notation by letting $f^\star$ also denote an optimal fair classifier of Program~\eqref{eq:progtarget_general}.
	We will now design a denoised program for Problem~\ref{problem:general}.
	Note that we only need to show how to design denoised fairness constraints for an arbitrary group performance function $q$,
	and it can be naturally extended to multiple fairness constraints.
	Thus, we consider the case that $k=1$ in the following, i.e., Program~\eqref{eq:progtarget_general} for a given function $q$.
	Accordingly, Assumption~\ref{assumption:ratio} changes to the following.
	
	\begin{assumption}[\bf{Lower bound for events of $f^\star$}]
		\label{assumption:ratio_general}
		Suppose there exists constant $\lambda\in (0,0.5)$ such that
		$
		\textstyle{\min_{i\in [p]}\Pr\left[\xi(f^\star), \xi'(f^\star), Z=i \right]\geq \lambda}.
		$
	\end{assumption}

	\noindent
	By definition, we know that for any $i\in [p]$,
	\[
	\textstyle q_i(f) = \frac{\Pr\left[\xi(f), \xi'(f), Z=i\right]}{\Pr\left[\xi'(f), Z=i\right]}.
	\]
	As in Program~\eqref{eq:progdenoised}, the main idea is to represent $\Pr\left[\xi(f), \xi'(f), Z=i\right]$ or $\Pr\left[\xi'(f), Z=i\right]$ by a linear combination of $\left\{\Pr\left[\xi(f), \xi'(f), Z=j\right]\right\}_{j\in [p]}$ or $\left\{\Pr\left[\xi'(f), Z=j\right]\right\}_{j\in [p]}$ respectively.
	For $\Pr\left[\xi'(f), Z=i\right]$, we only need to replace $f=1$ in the argument of statistical rate by $\xi'(f)$,
	and replace $f=1$ by $(\xi(f), \xi'(f))$ in $\Pr\left[\xi(f), \xi'(f), Z=i\right]$.

	Next, we show how to compute $\Pr\left[\xi'(f), Z=i\right]$ (the argument for $\Pr\left[\xi(f), \xi'(f), Z=i\right]$ is similar)
	Recall that $\pi_{ij}:=\Pr\left[\widehat{Z}=i\mid Z=j\right]$ for $i,j\in [p]$, $\mu_i:= \Pr\left[Z=i\right]$ and $\widehat{\mu}_i:= \Pr\left[\widehat{Z}=i\right]$ for $i\in [p]$.
	Similar to Eq~(\eqref{eq:lm1}), we have for each $i\in [p]$
	\begin{eqnarray*}
	\begin{split} &\textstyle{\Pr\left[\xi'(f), \widehat{Z}=i\right] =\\ &\sum_{j\in [p]} \Pr\left[\widehat{Z}=i\mid \xi'(f), Z=j\right] \Pr\left[\xi'(f), Z=j\right].}
	\end{split}
	\end{eqnarray*}
	By Definition~\ref{def:flipping_general} and a similar argument as in the proof of Lemma~\ref{lm:denoised1}, we have the following lemma.

	\begin{lemma}[\bf{Relation between $\Pr\left[\xi'(f), \widehat{Z}=i\right]$ and $\Pr\left[\xi'(f), Z=j\right]$}]
		\label{lm:general1}
		Let $\eps\in (0,1)$ be a fixed constant.
		With probability at least $1-2p e^{-2\eps^2 n}$, we have for each $i\in [p]$,
		\[
		\textstyle \Pr\left[\xi'(f), \widehat{Z}=i\right] \in \sum_{j\in [p]} H_{ji}\cdot \Pr\left[\xi'(f), Z=j\right] \pm \eps.
		\]
	\end{lemma}
	
	\noindent
	We define 
	\[
	\textstyle w(f) := \left(\Pr\left[\xi'(f),Z=1\right], \ldots, \Pr\left[\xi'(f),Z=p\right]\right), \text{ and }
	\] 
	\[
	\textstyle \widehat{w}(f) := \left(\Pr\left[\xi'(f), \widehat{Z}=1\right], \ldots, \Pr\left[\xi'(f), \widehat{Z}=p\right]\right).
	\]
	Since $H$ is non-singular, $(H^\top)^{-1}$ exists. 
	Let $M:= \max_{i\in [p]} \|(H^\top)^{-1}_i\|_1$ denote the maximum $\ell_1$-norm of a row of $(H^\top)^{-1}$.
	By Lemma~\ref{lm:general1}, we directly obtain the following lemma.
	
	\begin{lemma}[\bf{Approximation of $\Pr\left[\xi'(f), Z=i\right]$}]
		\label{lm:general2}
		With probability at least $1-2p e^{-2\eps^2 n}$, for each $i\in [p]$,
		\[
		\text{\footnotesize $ w(f)_i \in (H^\top)^{-1}_i \widehat{w}(f) \pm \eps \|(H^\top)^{-1}_i\|_1 \in (H^\top)^{-1}_i \widehat{w}(f) \pm \eps M.$}
		\]
	\end{lemma}
	
	\noindent
	Thus, we use $(H^\top)^{-1}_i \widehat{w}(f)$ to estimate $\Pr\left[\xi'(f),Z=i\right]$,
	and to estimate constraint 
	$
	\textstyle \min_{i\in [p]}\Pr\left[\xi(f), \xi'(f), Z=i \right]\geq \lambda,
	$
	we construct the following constraint:
	\begin{align}
	\label{eq:gencon1}
	\textstyle (H^\top)^{-1} \widehat{w}(f) \geq (\lambda-\eps M) \mathbf{1}.
	\end{align}

	\noindent
	Similarly, we define 
    \begin{align*}
	 \text{\footnotesize $u(f) := \left(\Pr\left[\xi(f), \xi'(f),Z=1\right], \ldots, \Pr\left[\xi(f), \xi'(f),Z=p\right]\right)$,} 
    \end{align*}
	\[
	\text{\footnotesize $ \widehat{u}(f) := \left(\Pr\left[\xi(f), \xi'(f), \widehat{Z}=1\right], \ldots, \Pr\left[\xi(f), \xi'(f), \widehat{Z}=p\right]\right)$}.
	\]
	Once again, we use $(H^\top)^{-1}_i \widehat{u}(f)$ to estimate $\Pr\left[\xi(f), \xi'(f),Z=i\right]$ and to estimate constraint 
	$
	\textstyle \min_{i\in [p]}\Pr\left[\xi(f), \xi(f), \xi'(f), Z=i \right]\geq \lambda,
	$ 
	we construct the following constraint:
	\begin{align}
	\label{eq:gencon2}
	\textstyle (H^\top)^{-1} \widehat{u}(f) \geq (\lambda-\eps M) \mathbf{1}.
	\end{align}
	Note that $\widehat{u}(f)\leq \widehat{w}(f)$ by definition, and
	Inequality~(\eqref{eq:gencon2}) is a sufficient condition for Inequality~(\eqref{eq:gencon1}).
	\begin{remark}
		\label{remark:4}
		The first two constraints of Program~\eqref{eq:progdenoised} are a special case of Constraint~(\eqref{eq:gencon1}). 
		By Definition~\ref{def:flippingnoise}, we have that
		\[
	    H = \textstyle{\begin{bmatrix}
		1-\eta_0 & \eta_0 \\
		\eta_1 & 1-\eta_1
		\end{bmatrix}} \text{ and }
		\]
		\[		\textstyle (H^\top)^{-1} = \begin{bmatrix}
		\frac{1-\eta_1}{1-\eta_0-\eta_1} & -\frac{\eta_1}{1-\eta_0-\eta_1} \\
		-\frac{\eta_0}{1-\eta_0-\eta_1} & \frac{1-\eta_0}{1-\eta_0-\eta_1}
		\end{bmatrix}.\]
		Then $M = \frac{1}{1-\eta_0-\eta_1}$ and we can verify that the first two constraints of Program~\eqref{eq:progdenoised} are equivalent to Constraint~(\eqref{eq:gencon1}) when $\xi'(f) = (f=1)$.
	\end{remark}
	
	\noindent
	Given $\delta\in (0,1)$,
	%
	we can now define the general denoised fair program as follows.
	
		\begin{equation} \tag{Gen-DenoisedFair}
		\label{eq:progdenoised_gen}
		\begin{split}
		&\min_{f\in \calF} \textstyle{\frac{1}{N}\sum_{i\in [N]} L(f, \widehat{s}_i) \quad s.t.} \\
		&\textstyle{~(H^\top)^{-1} \widehat{u}(f) \geq (\lambda-\delta) \mathbf{1}}, \\
		&\textstyle{ ~ \min_{i\in [p]} \frac{(H^\top)^{-1} \widehat{u}(f)}{(H^\top)^{-1} \widehat{w}(f)}   \geq (\tau-\delta)\cdot \max_{i\in [p]} \frac{(H^\top)^{-1} \widehat{u}(f)}{(H^\top)^{-1} \widehat{w}(f)}}.
		\end{split}
		\end{equation}
		%
	
	\noindent
	To provide the performance guarantees on the solution of the above program, once again we define the following general notions of bad classifiers and the corresponding capacity.

	\begin{definition}[\bf{Bad classifiers in general}]
		\label{def:bad_general}
		Given a family $\calF\subseteq \left\{0,1\right\}^\calX$, we call $f\in \calF$ a bad classifier if $f$ belongs to at least one of the following sub-families:
		\begin{itemize}
			\item $\calG_0:= \left\{f\in \calF: \min_{i\in [p]}\Pr\left[\xi(f), \xi'(f),Z=i\right] < \frac{\lambda }{2} \right\}$;
			\item Let $\textstyle{T=\lceil 232\log\log \frac{2(\tau-3\delta)}{\lambda} \rceil}$. 
			For $i\in [T]$, define
			\[
			\textstyle \calG_i:= \left\{f\in \calF\setminus \calG_0: \Omega_q(f,S) \in [\frac{\tau-3\delta }{1.01^{2^{i+1}-1}}, \frac{\tau-3\delta }{1.01^{2^i-1}} ) \right\}.
			\]
		\end{itemize}
	\end{definition}
	
	\noindent
	Note that Definition~\ref{def:bad} is a special case of the above definition by letting $p=2$, $M=10$, $\xi(f) = (f=1)$ and $\xi'(f)=\emptyset$.
	We next propose the following definition of capacity of bad classifiers.

	\begin{definition}[\bf{Capacity of bad classifiers in general}]
		\label{def:capacity_general}
		Let $\eps_0 = \frac{\lambda-2\delta}{5M}$.
		Let $\eps_i = \frac{1.01^{2^{i-1}} \delta}{5}$ for $i\in [T]$ where $T=\lceil 232\log\log \frac{2(\tau-3\delta)}{\lambda} \rceil$.
		Given a family $\calF\subseteq \left\{0,1\right\}^\calX$, we denote the capacity of bad classifiers by
		\[
		\text{\footnotesize $ \Phi(\calF):= 2p e^{-2\eps_0^2 n} M_{\eps_0}(\calG_0) + 4p\sum_{i\in [T]} e^{-\frac{\eps_i^2\lambda^2 n}{200M^2}} \cdot M_{\eps_i\lambda/10M}(\calG_i).$}
		\]
	\end{definition}
	
	\noindent
	By a similar argument as in Lemma~\ref{lm:denoised2}, we can prove that $\Phi(\calF)$ is an upper bound of the probability that there exists a bad classifier feasible for Program~\eqref{eq:progdenoised_gen}.
	Consequently, we obtain the following theorem as an extension of Theorem~\ref{thm:denoised}.

	\begin{theorem}[\bf{Performance of Program~\eqref{eq:progdenoised_gen}}]
		\label{thm:denoised_gen}	
		Suppose the VC-dimension of $(S,\calF)$ is $t\geq 1$.
		Given any non-singular matrix $H\in [0,1]^{p\times p}$ with $\sum_{j\in [p]} H_{ij} = 1$ for each $i\in [p]$, $\lambda\in (0, 0.5)$ and $\delta\in (0,0.1\lambda)$, let $f^\Delta\in \calF$ denote an optimal fair classifier of Program~\eqref{eq:progdenoised_gen}.
		With probability at least $1-\Phi(\calF)-4p e^{-\frac{\lambda^2 \delta^2 n}{200 M^2}}$, the following properties hold
		\begin{itemize}
			\item $\frac{1}{N} \sum_{i\in [N]} L(f^\Delta, s_i) \leq \frac{1}{N} \sum_{i\in [N]} L(f^\star, s_i)$;
			\item $\Omega_q(f^\Delta,S)\geq \tau-3\delta$.
		\end{itemize}
		Specifically, if the VC-dimension of $(S,\calF)$ is $t$ and $\delta\in (0,1)$, the success probability is at least $1-O(p e^{-\frac{\lambda^2 \delta^2 n}{5000 M^2}+ t\ln(50M/\lambda \delta)})$.
	\end{theorem}
	
	
	\noindent
	The proof is similar to that of Theorem~\ref{thm:denoised} and is presented in Section~\ref{sec:denoised_gen_proof} in the Supplementary Material.
	For multiple fairness constraints, the success probability of Theorem~\ref{thm:denoised_gen} changes to be 
	\[
	\textstyle 1-O(kp e^{-\frac{\lambda^2 \delta^2 n}{5000 M^2}+ t\ln(50M/\lambda \delta)}).
	\]
	%
	%
	
}
	
	\section{Conclusion, limitations \& future work}
	\label{sec:conclusion}
	
	In this paper, we study fair classification with noisy protected attributes.
    We consider flipping noises and propose a unified framework that constructs an approximate optimal fair classifier over the underlying dataset for multiple, non-binary protected attributes and multiple linear-fractional fairness constraints.
	Our framework outputs a classifier that is guaranteed to be both fair and accurate.
	Empirically, our denoised algorithm can achieve the high fairness values at a small cost to accuracy.
    Thus this work broadens the class of settings where fair classification techniques can be applied by working even when the information about protected attributes is noisy.

    Our framework can be applied to a wide class of fairness metrics, and hence may be suitable in many domains. 
    However, it is not apriori clear which fairness metrics should be used in any given setting, and the answers will be very context-dependent; the effectiveness of our framework towards mitigating bias will depend crucially on whether the appropriate choice of features and parameters are selected.
    An ideal implementation of our framework would involve an active dialogue between the users and designers, a careful assessment of impact both pre and post-deployment. 
    This would in particular benefit from regular public audits of fairness constraints,  as well as ways to obtain and incorporate community feedback from stakeholders \cite{sassaman2020creating,chancellor2019relationships}.

	Our work leaves several interesting future directions.
	{One is to consider other noise models for non-binary attributes that are not independent, e.g., settings where the noise follows a general mutually contaminated model \cite{scott2013classification} or when the noise on the protected type also depends on other features, such as, when imputing the protected attributes. Our framework can still be employed in these settings (e.g., given group prediction error rates); however, methods that take into account the protected attribute prediction model could potentially further improve the performance.}
	There exist several works that also design fair classifiers with noisy labels~\cite{blum2020recovering,Biswas2020EnsuringFU} and another direction is to consider joint noises over both protected attributes and labels.
	Our model is also related to the setting in which each protected attribute follows a known distribution; whether our methods can be adapted to this setting can be investigated as part of future work.

    \section*{Acknowledgements}
    
    This research was supported in part by a J.P. Morgan Faculty Award and an AWS MLRA grant.

\bibliography{references}
\bibliographystyle{plainnat}
	
\clearpage
\appendix
	
\eat{
\section{Detailed comparison to prior work}

\subsection{\citet{lamy2019noise}}
\citet{lamy2019noise} propose an optimization approach for noisy fair classification problem.
Their algorithm satisfies the following properties:
\begin{itemize}
    \item can handle multiple, binary protected attributes;
    \item can handle noise in test samples;
    \item can handle linear fairness metrics, such as statistical rate and equalized odds metric, and fairness constraints are formed by taking the additive difference between the group-specific rates;
    \item generated classifier always satisfies input additive fairness constraints;
    \item for a specified $\varepsilon > 0$, the error of their generated classifier is within additive $\varepsilon$ of the optimal fair classifier \footnote{Since they employ the fair classification algorithm of \citet{agarwal2018reductions} as the base classifier, the accuracy guarantees of their approach follows from the results of \citet{agarwal2018reductions}.}.
\end{itemize}

\citet{lamy2019noise} study the setting where the noise in the binary protected attribute follows a mutually contaminated model \cite{scott2013classification}; the setting of ``flipping noises'' where a (binary) protected type $Z=z$ may be flipped to $\hat{Z} = 1-z$ with some known fixed probability $\eta_z$ is an important example of this mutually contaminated model \cite{menon2015learning}. 
They formulated an optimization problem that minimizes a standard loss function subject to fairness constraints in the noisy setting.

However, \citet{lamy2019noise} primarily work with SR and/or equalized odds metrics for binary protected attributes, and it is unclear how to extend their results to the class of  linear-fractional fairness metrics (e.g., false discovery rate which is employed when there are large costs associated with positive classification) and to non-binary protected attributes.

{The main difference between the constrained program of \citet{lamy2019noise} and our approach is that \citet{lamy2019noise} down-scale the ``fairness tolerance'' parameter in the constraints to adjust for the noise (the scaling is computed as a pre-processing step),
	while our framework adapts the fairness metric over the noisy attribute in the constraints so that it reflects the true metric in the uncorrupted setting.
	Due to this difference, our approach can handle linear-fractional metrics (which measure the performance disparity across the protected types conditioned on the classifier prediction) and non-binary attributes.
    \citet{lamy2019noise} is unable to handle linear-fractional metrics, such as false discovery or false omission rate, since the scaling parameter in their fairness constraints cannot be computed in the pre-processing step which depends on the conditional event and is a function of the classifier prediction in this case.}
	Our approach, instead, estimates the altered form of the linear-fractional fairness metrics in the noisy setting and uses this to form the constraints for these metrics.
	Since we show how to alter the general class of fairness metrics considered in \cite{celis2019classification}, our framework can handle multiple, non-binary protected attributes as well; it is unclear whether the scaling method of \cite{lamy2019noise} can be employed for noise in non-binary protected attributes, and \cite{awasthi2020equalized} do not provide extensions of their conditions under which 
	post-processing \cite{hardt2016equality} reduces bias even for non-binary protected attributes.
    Another difference between our work and \cite{lamy2019noise} is that 
	we define performance disparity across protected attribute values as the ratio of the ``performance'' for worst and best performing groups, while \cite{lamy2019noise} define the disparity using the additive difference across the protected attribute values (see Remark~\ref{rem:metric_comparison}).

\subsection{\citet{awasthi2020equalized}}
	\citet{awasthi2020equalized} study the performance of the equalized odds post-processing method of \citet{hardt2016equality} in the setting of noisy binary protected attribute. 
    Their paper satisfies the following properties:
    \begin{itemize}
        \item can handle only a single, binary protected attribute;
        \item cannot handle noise in test samples;
        \item can handle only equalized odds metric, and fairness constraints are that classifier should have parity with respect to false positive and true positive rates across the protected attribute values;
        \item under assumption that there is no noise in test samples, generated classifier always satisfies input fairness constraints;
        \item under assumption that there is no noise in test samples, error of generated classifier is less than or equal to the error of optimal fair classifier.
    \end{itemize}
    	
	The noise in their model manifests itself in the form of incorrect estimates of the joint $(\Pr[f, Y,\hat{Z}])$ and conditional $(\Pr[f \mid Y,\hat{Z}])$ probabilities of classifier predictions $f$ given class label $Y$ and noisy protected attribute $\hat{Z}$; once again ``flipping noises'' can cause such corruption.
	Their primary contribution is the characterization of the conditions on this noise in training data samples and predictions under which the bias of a classifier learned using the method of \cite{hardt2016equality} is reduced even when using the noisy protected attribute; they further show that, under these conditions, the loss in accuracy can also be bounded.
    However, while the fairness guarantee of \citet{awasthi2020equalized} assures that the post-processed classifier is relatively more fair than the original, but it is not apparent if it can be used to achieve any level of user-desired fairness.
	Moreover, 
	the protected attributes of only the training samples are assumed to be corrupted, and that test/future samples have uncorrupted protected attributes; this assumption rules out the real-world settings where train, test, and future data arise from the same corrupted source, for example, erroneous protected attribute prediction models \cite{muthukumar2018understanding}.
	Secondly, their paper only tackles equalized odds metrics for binary protected attributes, and it is not clear how to extend their results to other fairness metrics and non-binary attributes.

\subsection{\citet{wang2020robust}}
\citet{wang2020robust} also propose an optimization approach for noisy fair classification problem.
Their algorithm satisfies the following properties:
\begin{itemize}
    \item can handle multiple, binary and non-binary protected attributes;
    \item can handle noise in test samples;
    \item they discuss fair classification with respect to  statistical rate and equalized odds metric, and fairness constraints are formed by taking the \emph{additive} difference between the group-specific rate and average rate \footnote{We observe that their approach can also be used to handle linear-fractional metrics.};
    \item provide a provable guarantee that the generated classifier satisfies input additive fairness constraints in expectation;
    \item guarantee that the error rate of the generated classifier is approximately optimal in expectation.
\end{itemize}

\citet{wang2020robust} propose two robust optimization approaches to solve the noisy fair classification problem with non-binary attributes; their approach for the setting of flipping noise constructs proxy-group assignments using audit mechanisms suggested by \citet{kallus2020assessing}, and employs them to form denoised fairness constraints.
Their output classifier is a stochastic one and is guaranteed to be near-optimal w.r.t. accuracy and near-feasible w.r.t. fairness constraints on the underlying dataset, in expectation.
However, there is no guarantee that they can achieve a deterministic classifier that achieves both near-optimal accuracy and satisfies fairness constraints on the underlying dataset, which makes it difficult to use in practice~\cite{wang2020robust}.
Instead, our denoised fairness program ensures that the optimal classifier is deterministic, near-optimal w.r.t. accuracy and near-feasible w.r.t. fairness constraints on the underlying dataset, with high probability.
 }

\eat{
	\begin{table*}[t!]
		\setlength\fboxsep{0pt}
        \setlength{\tabcolsep}{6pt}

		\caption{\small{Comparison of our paper with prior work with respect to types of protected attributes, fairness constraints, and theoretical guarantees.
		Types of fairness constraints are defined in Defn~\ref{def:performance}.
		“SR/FP/FN/TP/TN/ACC” represents statistical/false positive/false negative/true positive/true negative/accuracy rates respectively, “EO” represents equalized odds and “FD/FO/PP/NP” represents false discovery/false omission/positive predictive/negative predictive rates respectively. 
		%
		{$\surd$ indicates that the paper satisfies that property, $\star$ indicates that the method in the paper can be used to satisfy the property, but is not explicitly discussed, and $\bullet$ indicates that the property is satisfied under certain ideal conditions.
		\cite{lamy2019noise, awasthi2020equalized} consider a binary protected attribute together with linear fairness constraints. 
		\cite{awasthi2020equalized} provide accuracy guarantees under certain specific conditions, but cannot handle noise in test samples, while \cite{wang2020robust} do not provide accuracy guarantees for the practical implementation of their proposed algorithm.
		In contrast, our algorithm can handle both both linear and linear-fractional fairness constraints, and provides both accuracy and fairness guarantees.}}
		}
		\centering
		\scriptsize
		\begin{tabular}{|c|c|c|c|c|c|c|c|c|c|c|c|c|c|c|c|c|} \hline
			\multirow{3}{*}{} & \multicolumn{3}{c|}{Protected attributes}  & \multicolumn{11}{c|}{Fainess constraints (Definition~\ref{def:performance})} & \multicolumn{2}{c|}{Theoretical guarantees} \\ \cline{1-17}
			& \multirow{2}{*}{multiple} & \multirow{2}{*}{\specialcell{non-\\binary}}& \multirow{1}{*}{noise in} &\multicolumn{7}{c|}{Linear} & \multicolumn{4}{c|}{Linear-fractional} & \multirow{2}{*}{accuracy} & \multirow{2}{*}{fairness} \\ 
			& & & \specialcell{test\\samples} & SR & FP & FN & TP & TN & ACC & EO & FD & FO & PP & NP  & & \\ \cline{1-17}
			\cite{lamy2019noise}  & $\star$ & & $\surd$  & $\surd$ & $\surd$ & $\surd$ & $\surd$ & $\surd$ & $\star$ & $\surd$ & &  & &  & $\surd$ & $\surd$ \\ \cline{1-17}
			\cite{awasthi2020equalized} &  &  & &  & $\surd$ & $\star$ & $\surd$ & $\star$ &  & $\surd$ & &  &  &  & $\bullet$ & $\surd$ \\ \cline{1-17}
			\cite{wang2020robust}  & $\surd$ & $\surd$ & $\surd$ & $\surd$ & $\surd$ & $\surd$ & $\surd$ & $\surd$ & $\surd$ & $\surd$ & $\surd$ & $\surd$ & $\surd$ & $\surd$ & $\bullet$ & $\surd$ \\ \cline{1-17}
			Ours  & $\surd$ & $\surd$ & $\surd$ & $\surd$ & $\surd$ & $\surd$ & $\surd$ & $\surd$ & $\surd$ & $\surd$ & $\surd$ & $\surd$ & $\surd$ & $\surd$ & $\star$ &  \\ \cline{1-17}
		\end{tabular}
		\label{tab:comparison}
	\end{table*}
	
}

\section{Analysis of the influences of estimation errors}
\label{sec:influence}
        We discuss the influences of estimation errors by considering a simple setting as in Section~\ref{sec:proof}, say $p=2$ with statistical rate.
		Recall that we assume $\eta_0$ and $\eta_1$ are given in Theorem~\ref{thm:denoised}.
		However, we may only have estimations for $\eta_0$ and $\eta_1$ in practice, say $\eta'_0$ and $\eta'_1$ respectively.
		Define $\zeta := \max\left\{|\eta_0 - \eta'_0|, |\eta_1 - \eta'_1|\right\}$ to be the additive estimation error.
		We want to understand the influences of $\zeta$ on the performance of our denoised program.

		Since $\eta_0$ and $\eta_1$ are unknown now, we can not directly compute $\Gamma_0(f)$ and $\Gamma_1(f)$ in Definition~\ref{def:denoised}.
		Instead, we can compute
		\begin{align*}
	\small
		\Gamma'_0(f) :=
		\frac{(1-\eta'_1)\Pr\left[f=1, \widehat{Z}=0\right]-\eta'_1\Pr\left[f=1, \widehat{Z}=1\right]}{(1-\eta'_1)\widehat{\mu}_0-\eta'_1 \widehat{\mu}_1},
		\end{align*}
		\begin{align*}
	\small
		\Gamma_1(f):=
		\frac{(1-\eta'_0)\Pr\left[f=1, \widehat{Z}=1\right]-\eta'_0\Pr\left[f=1, \widehat{Z}=0\right]}{(1-\eta'_0)\widehat{\mu}_1-\eta'_0\widehat{\mu}_0}.
		\end{align*}
		Then we have
		\begin{eqnarray}
	\small
		\label{eq:remark1}
		\begin{split}
		\Gamma'_0(f)
		& = && \frac{(1-\eta'_1)\Pr\left[f=1, \widehat{Z}=0\right]-\eta'_1\Pr\left[f=1, \widehat{Z}=1\right]}{(1-\eta'_1)\widehat{\mu}_0-\eta'_1 \widehat{\mu}_1} \\
		& = && \frac{(1-\eta_1)\Pr\left[f=1, \widehat{Z}=0\right]-\eta_1\Pr\left[f=1, \widehat{Z}=1\right] }{(1-\eta_1)\widehat{\mu}_0-\eta_1 \widehat{\mu}_1 + (\eta_1 - \eta'_1) }  + \frac{(\eta_1 - \eta'_1) \Pr\left[f=1\right]}{(1-\eta_1)\widehat{\mu}_0-\eta_1 \widehat{\mu}_1 + (\eta_1 - \eta'_1)} \\
		&\in && \frac{(1-\eta_1)\Pr\left[f=1, \widehat{Z}=0\right]-\eta_1\Pr\left[f=1, \widehat{Z}=1\right]}{(1-\eta_1)\widehat{\mu}_0-\eta_1 \widehat{\mu}_1}  \pm \frac{\zeta \cdot \Pr\left[f=1\right]}{(1-\eta_1)\widehat{\mu}_0-\eta_1 \widehat{\mu}_1} \quad  (\text{Defn. of $\zeta$}) \\
		& \in && \Gamma_0(f) \pm \frac{\zeta \cdot \Pr\left[f=1\right]}{(1-\eta_1)\widehat{\mu}_0-\eta_1 \widehat{\mu}_1}. \quad  (\text{Defn. of $\Gamma_0(f)$}) 
		\end{split}
		\end{eqnarray}
		Symmetrically, we have
		\begin{eqnarray}
		\label{eq:remark2}
		\Gamma'_1(f) \in \Gamma_1(f) \pm \frac{\zeta \cdot \Pr\left[f=1\right]}{(1-\eta_0)\widehat{\mu}_1-\eta_0 \widehat{\mu}_0}.
		\end{eqnarray}
		By a similar argument, we can also prove that
		\begin{eqnarray}
	\small
		\label{eq:remark3}
		\begin{split}
		\frac{1}{\Gamma'_0(f)} \in \frac{1}{\Gamma_0(f)}
		\pm \frac{\zeta }{(1-\eta_1)\Pr\left[f=1, \widehat{Z}=0\right]-\eta_1 \Pr\left[f=1, \widehat{Z}=1\right]}.
		\end{split}
		\end{eqnarray}
		and
		\begin{eqnarray}
	\small
		\label{eq:remark4}
		\begin{split}
		\frac{1}{\Gamma'_1(f)} \in \frac{1}{\Gamma_1(f)}
		\pm \frac{\zeta }{(1-\eta_0)\Pr\left[f=1, \widehat{Z}=1\right]-\eta_0 \Pr\left[f=1, \widehat{Z}=0\right]}.
		\end{split}
		\end{eqnarray}
		Then by the denoised constraint on $\eta'_0$ and $\eta'_1$, i.e.,
		\begin{align}
		\label{eq:remark5}
		\min\left\{\frac{\Gamma'_1(f)}{\Gamma'_0(f)}, \frac{\Gamma'_0(f)}{\Gamma'_1(f)}\right\} \geq \tau - \delta,
		\end{align}
		we conclude that
		\begin{align*}
	    \small
		\frac{\Gamma_1(f)}{\Gamma_0(f)} 
		& && \geq  \left(\Gamma'_1(f) - \frac{\zeta \cdot \Pr\left[f=1\right]}{(1-\eta_0)\widehat{\mu}_1-\eta_0 \widehat{\mu}_0}\right)\times \big(\frac{1}{\Gamma'_0(f)}  -  
		 \frac{\zeta }{(1-\eta_1)\Pr\left[f=1, \widehat{Z}=0\right]-\eta_1 \Pr\left[f=1, \widehat{Z}=1\right]}\big) \\
		& && (\text{Ineqs.~\eqref{eq:remark2} and~\eqref{eq:remark3}}) \\
		& && \geq \frac{\Gamma'_1(f)}{\Gamma'_0(f)} - \zeta\big(\frac{\Pr\left[f=1\right]}{\Gamma'_0(f)\left((1-\eta_0)\widehat{\mu}_1-\eta_0 \widehat{\mu}_0\right)} +\frac{\Gamma'_1(f)}{(1-\eta_1)\Pr\left[f=1, \widehat{Z}=0\right]-\eta_1 \Pr\left[f=1, \widehat{Z}=1\right]}\big) \\
		& && \geq \tau - \delta - \zeta \alpha_1, \quad (\text{Ineqs.~\eqref{eq:remark5}})
		\end{align*}
		where $\alpha_1 = \frac{\Pr\left[f=1\right]}{\Gamma'_0(f)\left((1-\eta_0)\widehat{\mu}_1-\eta_0 \widehat{\mu}_0\right)}+ \frac{\Gamma'_1(f)}{(1-\eta_1)\Pr\left[f=1, \widehat{Z}=0\right]-\eta_1 \Pr\left[f=1, \widehat{Z}=1\right]}$.
		Similarly, by Inequalities~\eqref{eq:remark1} and~\eqref{eq:remark4}, we have
		\[
		\frac{\Gamma_0(f)}{\Gamma_1(f)} \geq \tau - \delta - \zeta \alpha_0,
		\]
		where $\alpha_0 = \frac{\Pr\left[f=1\right]}{\Gamma'_1(f)\left((1-\eta_1)\widehat{\mu}_0-\eta_1 \widehat{\mu}_1\right)}+ \frac{\Gamma'_0(f)}{(1-\eta_0)\Pr\left[f=1, \widehat{Z}=1\right]-\eta_0 \Pr\left[f=1, \widehat{Z}=0\right]}$.
		Thus, we have 
		\[
		\gamma^\Delta(f, \widehat{S}) \geq \tau - \delta - \zeta\cdot \max\left\{\alpha_0, \alpha_1\right\}.
		\]
		The influence of the above inequality is that the fairness guarantee of Theorem~\ref{thm:denoised} changes to be
		\[
		\gamma(f^\Delta,S) \geq \tau - 3(\delta + \zeta\cdot \max\left\{\alpha_0, \alpha_1\right\}),
		\]
		i.e., the estimation errors will weaken the fairness guarantee of our denoised program.
		Also, observe that the influence becomes smaller as $\zeta$ goes to  0.

	\color{black}

\eat{
\section{Comparison with theoretical guarantees of~\cite{awasthi2020equalized}}
\label{sec:comparison}

Theorem~\ref{thm:denoised} can also be generalized to handle equalized odds, the primary fairness metric of \cite{awasthi2020equalized}; see Theorem~\ref{thm:denoised}.
Recall that \cite{awasthi2020equalized} first computes an unconstrained optimal classifier $\tilde{f}$ and then apply the post-processing algorithm in~\cite{hardt2016equality} to achieve a classifier $\widehat{f}$.
By Theorem 1 in~\cite{awasthi2020equalized}, $\widehat{f}$ must have a smaller bias than $\tilde{f}$ for any fixed noise parameters $\eta_0$ and $\eta_1$ satisfying certain assumptions.
There is no guarantee that $\widehat{f}$ achieves a comparable fairness guarantee as $f^\star$; as our denoised program.

     We provide a simple example that shows the potential bias of~\cite{awasthi2020equalized}.
     Suppose $\mu_0 =  0.8$, $\mu_1 =  0.2$, $\Pr\left[Y=1\mid Z=0\right] = 1$, and $\Pr\left[Y=1\mid Z=0\right] =  0.5$, i.e., $Z=0$ represents the majority group which always achieves $Y=1$ and $Z=1$ represents the minority group with a half members achieving $Y=1$.
     Suppose $\tilde{f} \equiv 1$ is an unconstrained optimal classifier.
     By Program (2) in~\cite{awasthi2020equalized}, the post-processing approach will output $\widehat{f} \equiv 1$

    More concretely, the post-processing approach of \cite{awasthi2020equalized}, given predictions from a base classifier, class labels, protected attribute values for training samples, formulate a linear program to solve for four variables: each variable $p_{\hat{y},z}$ represents the probability that final prediction should be 1 given that the original prediction is  $\hat{y} \in \set{0,1}$ and protected attribute value is $z \in \set{0,1}$.
    In many settings, the output of this linear program is non-unique. For example, suppose that original prediction is random whenever original class label $Y=1$, and is always 1 when $Y=0, Z=0$ and always 0 when $Y=0, Z=1$, i.e., the false positives are high for $Z=0$ group. In this case, the optimal solution is non-unique and, for any $c >0$, $p_{0,0}^* = c$, is part of an optimal solution (as long as total probability is less than 1). %
    While this is not an issue in the normal fair classification scenario, it is problematic when the protected attribute is noisy. The fairness guarantee of the post-processing algorithm (see proof of Thm 1 in \cite{awasthi2020equalized}) depends on the values $\set{p_{\hat{y},z}}$. Concretely, it depends on $\eta_0 \cdot p_{0,0}^* = \eta_0 \cdot c$ and so the bias guarantee depends on $c$. Therefore, in common settings where the solution can be non-unique, the bias guarantee can vary across the solutions and it is not clear if there is a principled way to select the solution that achieves the user-desired fairness guarantee.
}

\color{black}
	
\begin{figure*}
\centering
\includegraphics[width=\linewidth]{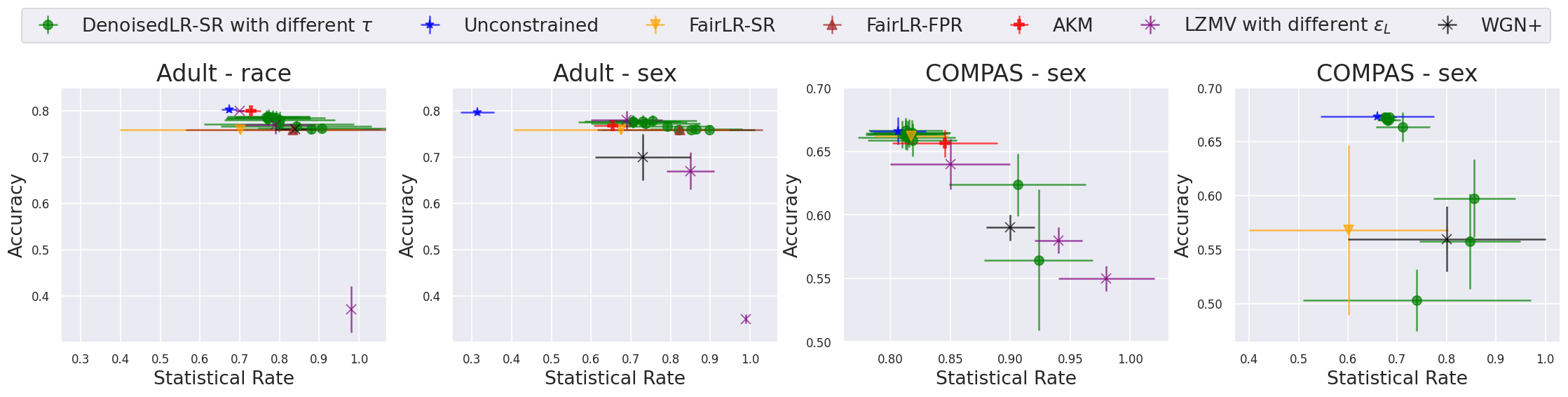} 
\caption{\small{Performance of \textbf{DLR-SR} and baselines with respect to statistical rate and accuracy for different combinations of dataset and protected attribute. For \textbf{DLR-SR}, the performance for different $\tau$ is presented, while
for \textbf{LZMV} the input parameter $\epsilon_L$ is varied. The plots shows that for all settings  \textbf{DLR-SR} can attain a high statistical rate, often with minimal loss in accuracy.}}
\label{fig:stat_rate_plot}
\end{figure*}

\begin{figure*}
\centering
\includegraphics[width=\linewidth]{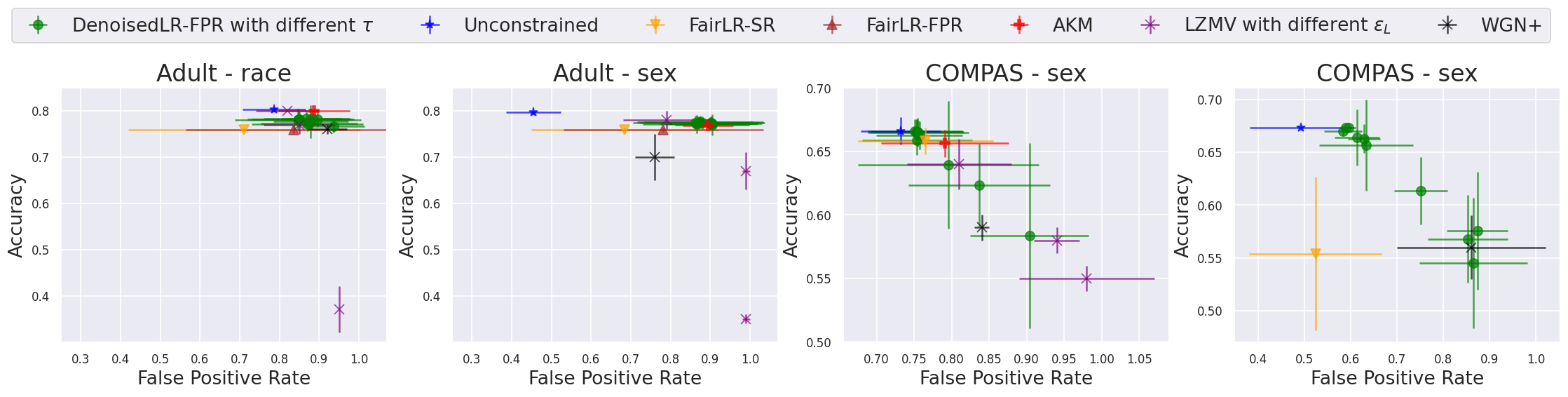} 
\caption{Performance of \textbf{DLR-FPR} and baselines with respect to false positive rate and accuracy for different combinations of dataset and protected attribute. For \textbf{DLR-FPR}, the performance for different $\tau$ is plotted to present the entire fairness-accuracy tradeoff picture. Similarly, for \textbf{LZMV} the input parameter $\epsilon_L$ is varied. The plots shows that for all settings  \textbf{FPR} can attain a high false positive rate, often with minimal loss in accuracy.}
\label{fig:fpr_plot}
\end{figure*}	
	
	\setlength\fboxsep{0pt}
    \setlength{\tabcolsep}{8pt}

	\begin{table*}[t]
		\centering
		\caption{\small The performance of all algorithms over test datasets with respect to false discovery rate fairness metric - average and standard error (in brackets) of accuracy and false discovery rate. 
		Our method \textbf{DLR-FDR}, with $\tau=0.9$, achieves higher false discovery rate than baselines in almost every setting, at a minimal cost to accuracy.
		}
		\label{tab:fdr_results}
		\scriptsize
		
		\begin{tabular}{p{2cm}cc|cc|cc|cc}
			\toprule
			\multirow{3}{*}{} & \multicolumn{4}{c|}{\textbf{Adult}} & \multicolumn{4}{c}{\textbf{COMPAS}} \\
			& \multicolumn{2}{c|}{sex (binary)} & \multicolumn{2}{c|}{race (binary)} &\multicolumn{2}{c|}{sex (binary)} &\multicolumn{2}{c}{race (non-binary)} \\
			& acc & FDR & acc  & FDR & acc  & FDR & acc & FDR \\
			\midrule
			\scriptsize{\textbf{LR-SR}} & .76 (.01) & .55 (.45) & .76 (.01) & .56 (.46) & .67 (.01) & .66 (0) & .58 (.05) & .73 (.06)\\
			\scriptsize{\textbf{LR-FPR}} & .76 (.01) & .54 (.45) & .76 (0) & .35 (.43) & .67 (.01)  & .75 (.09) &  .56 (.05) & .72 (.05)  \\
			\scriptsize{\specialcell{\textbf{LZMV} $\varepsilon_L=.01$}} & .35 (.01)   & 0 (0) & .37 (.05) & 0 (0)  & .55 (.01) & .74 (.04) & -  & - \\
			\scriptsize{\specialcell{\textbf{LZMV} $\varepsilon_L=.04$}} &  .67 (.04) & 0 (0)  & .77 (.03)  & 0 (0)  &  .58 (.01) & .74 (.04) & -   & -\\
			\scriptsize{\specialcell{\textbf{LZMV} $\varepsilon_L=.10$}} & .78 (.02)   & .47 (.01)  & .80 (0)  & .76 (.05)  & .64 (.02)  & .83 (.04) &  - & - \\
			\scriptsize{\textbf{AKM}} &  .77 (0) & .55 (.17)  &  {{.80} (0)} & .71 (.01) & .69 (.01) & .75 (.03) & -  & -  \\
			\scriptsize{\textbf{WGN+}} & .59 (0)  & .54 (.02)  &  .67 (0)   & .65 (.01)  &  .54 (.01)  & .72 (.05)  &  .56 (.03)  &  .68 (.07)  \\
			\midrule
			\scriptsize{\textbf{DLR-FDR} $\scriptsize \tau=.7$} &  .73 (.04) & .66 (.07) & .80 (.02) & .76 (.06) & .64 (.03) & .75 (.11) & .67 (.02) & .79 (.03)     \\
			\scriptsize{\textbf{DLR-FDR} $\scriptsize \tau=.9$} &  .75 (.01) & .87 (.08) & .76 (.02) & .89 (.09) & .60 (.07) & .77 (.10) & .54 (.13) & {.79} (.07)     \\
			\bottomrule
		\end{tabular}	
	\end{table*}

\section{Other empirical details and results}
	\label{sec:other}
	
	We state the exact empirical form of the constraints used for our simulations in this section and then present additional empirical results.
	
	\subsection{Implementation of our denoised algorithm.}
	
	As a use case, 
	we solve Program~\eqref{eq:progdenoised} for logistic regression.
	Let $\calF' = \left\{f'_\theta\mid \theta\in \R^d\right\}$  be the family of logistic regression classifiers where for each sample $s=(x,z,y)$,
	$
	f'_\theta(x):= \frac{1}{1+e^{-\langle x,\theta\rangle}}.
	$
	We learn a classifier $f'_\theta\in \calF'$ and then round each $f'_\theta(\widehat{x}_i)$ to $f_\theta(\widehat{x}_i) := \I\left[f(\widehat{x}_i)\geq  0.5\right]$.

	We next show how to implement the Program~\eqref{eq:progdenoised} for any general fairness constraints.
	Let $\xi(f)$ and $\xi'(f)$ denote the relevant events to measure the group performances. 
	The constraints use the group-conditional probabilities of these events, i.e.
	 $ \widehat{u}(f) := \left(\Pr\left[\xi(f), \xi'(f), \widehat{Z}=i\right]\right)_{i\in [p]} $ and $	\textstyle \widehat{w}(f) := \left(\Pr\left[\xi'(f), \widehat{Z}=i\right]\right)_{i\in [p]}$.
	Let $N = |S|$ and let $u'(f)$, $w'(f)$ denote the empirical approximation of $ \widehat{u}(f)$, $ \widehat{w}(f)$ respectively; i.e.,
	\[u'(f) :=  \left(\frac{1}{N} \sum_{\alpha \in [N], \hat{Z} = i}{\mathbf{1}\left[\xi(f(x_\alpha)), \xi'(f(x_\alpha))\right]} \right)_{i \in [p]},\]
	\[w'(f) :=  \left(\frac{1}{N} \sum_{\alpha \in [N], \hat{Z} = i}{\mathbf{1}\left[\xi'(f(x_\alpha))\right]} \right)_{i \in [p]}.\]
	Let $\Gamma_i'(f) := \left((H^\top)^{-1}u'(f)\right)_i/\left((H^\top)^{-1}w'(f)\right)_i$, for each $i \in [p]$ and $M:= \max_{i\in [p]} \|(H^\top)^{-1}_i\|_1$.
	Then, given $\tau \in [0,1]$ and $\lambda, \delta > 0$, the empirical implementation in Program~\eqref{eq:progdenoised} use the following constraints. 
	\begin{align}
	\label{eq:denoised_relaxed}
	\begin{cases}
    & \Gamma_i'(f) \geq (\tau - \delta) \cdot \Gamma_j'(f), \forall i, j \in [p] \times [p], \\
    & \left((H^\top)^{-1}u'(f)\right)_i \geq (\lambda - M\delta), \forall i \in [p].
    \end{cases}
    \end{align}
	
    The program \textbf{DLR} simply implements the following optimization problem.
	\begin{equation} \tag{DLR}
	\label{eq:prog_denoisedLR}
	\begin{split}
	&\min_{\theta\in \R^d} -\frac{1}{N}\sum_{a\in [N]} \left(y_a \log f_\theta(x_a) + (1-y_a) \log (1-f_\theta(x_a))\right) \\  &
	s.t. ~\text{Constraints (\eqref{eq:denoised_relaxed}) are satisfied}.
	\end{split}
	\end{equation}

	\paragraph{Program~\eqref{eq:progdenoised} for statistical rate metric (\textbf{DLR-SR}). }
	For statistical rate metric, simply set $\xi(f_\theta(x_\alpha)) = (f_\theta(x_\alpha)=1)$ and $\xi'(f_\theta(x_\alpha)) = \emptyset$, and compute the empirical constraints in Eqns~\ref{eq:denoised_relaxed}.
	
	\paragraph{Program~\eqref{eq:progdenoised} for false positive rate metric (\textbf{DLR-FPR}). }
	For false positive rate metric, set $\xi(f_\theta(x_\alpha)) = (f_\theta(x_\alpha)=1)$ and $\xi'(f_\theta(x_\alpha)) = (Y=0)$, and compute the empirical constraints in Eqns~\ref{eq:denoised_relaxed}.
	
	\paragraph{Program~\eqref{eq:progdenoised} for false discovery rate metric (\textbf{DLR-FDR}).}
	For false discovery rate metric, simply set $\xi(f_\theta(x_\alpha)) = (Y=0)$ and $\xi'(f_\theta(x_\alpha)) = (f_\theta(x_\alpha)=1)$, and compute the empirical constraints in Eqns~\ref{eq:denoised_relaxed}.
	
	\noindent
	If required, one can also append a regularization term $C\cdot \|\theta\|_2^2$ to the above loss function where $C\geq 0$ is a given regularization parameter.

	\subsection{SLSQP parameters}
	We use standard constrained optimization packages to solve this program, such as SLSQP~\cite{kraft1988software} (implemented using python \textit{scipy} package).
	For each optimization problem, we run the SLSQP algorithm for 500 iterations, starting with a randomly chosen point and with parameters ftol=1e-3 and eps=1e-3.
	
    \subsection{Baselines' parameters}
    \textbf{LZMV:} For this algorithm of \citet{lamy2019noise}, we use the implementation from \url{https://github.com/AIasd/noise_fairlearn} .
    The constraints are with respect to additive statistical rate.
    The fairness tolerance parameter $\varepsilon$ (referred to as $\varepsilon_L$ in our empirical results to avoid confusion) are chosen to be $\set{0.01, 0.04, 0.10}$ to present the range of performance of the algorithm.
    See the paper \cite{lamy2019noise} for descriptions of these parameters.
    The base classifier used is the algorithm of \citet{agarwal2018reductions}, and the noise parameters are provided as input to the \textbf{LZMV} algorithm.
    
    \textbf{AKM: } For this algorithm, we use the implementation from\\ \url{https://github.com/matthklein/equalized_odds_under_perturbation}.
    The constraints are with respect to additive false positive rate parity.
    Once again, the algorithm takes noise parameters as input and uses the base classifier of \citet{hardt2016equality}.
    
    \textbf{WGN+:} For this algorithm, we use the implementation from \url{https://github.com/wenshuoguo/robust-fairness-code}.
    Once again, the constraints here are additive false positive rate constraints using the soft-group assignments.
    See the paper \cite{wang2020robust} for descriptions of these parameters.
    The learning rate parameters used for this algorithm are $\eta_\theta \in \{.001, 0.01,0.1 \}$, $\eta_\lambda \in \{0.5,1.0, 2.0 \}$, and $\eta_W \in \{0.01,0.1 \}$. These parameters are same as the one the authors suggest in their paper and code.
    We run their algorithm for all combinations of the above parameters and select and report the test performance of the model that has the best training objective value, while satisfying the program constraints.

\begin{figure*}
\centering
\includegraphics[width=\linewidth]{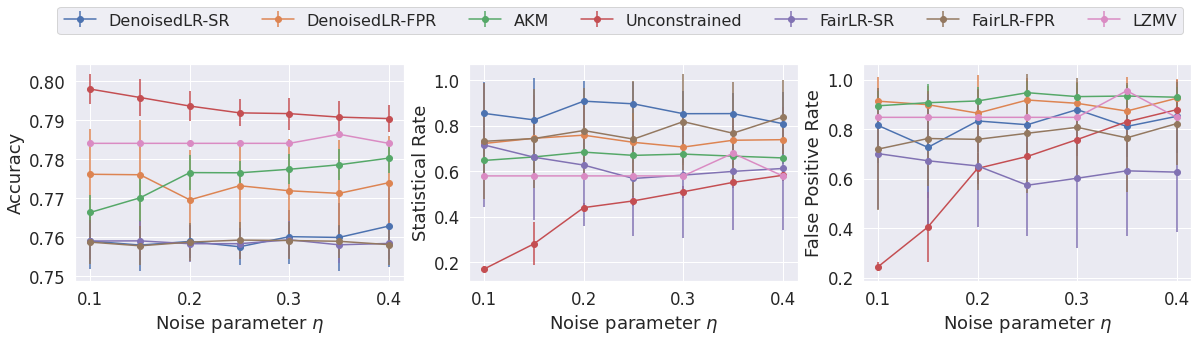} 
\subfloat[Accuracy vs $\eta$]{\hspace{.33\linewidth}}
\subfloat[Statistical Rate vs $\eta$]{\hspace{.33\linewidth}}
\subfloat[False Positive Rate vs $\eta$]{\hspace{.33\linewidth}}
\caption{Performance of \textbf{DLR-SR}, \textbf{DLR-FPR} ($\tau= 0.9$) and baselines with respect to statistical rate, false positive rate and accuracy for different noise parameters $\eta$. The dataset used is \textbf{Adult} and the protected attribute is sex.}
\label{fig:adult_sex_diff_eta}
\end{figure*}	
	
\begin{figure*}
\centering
\includegraphics[width=\linewidth]{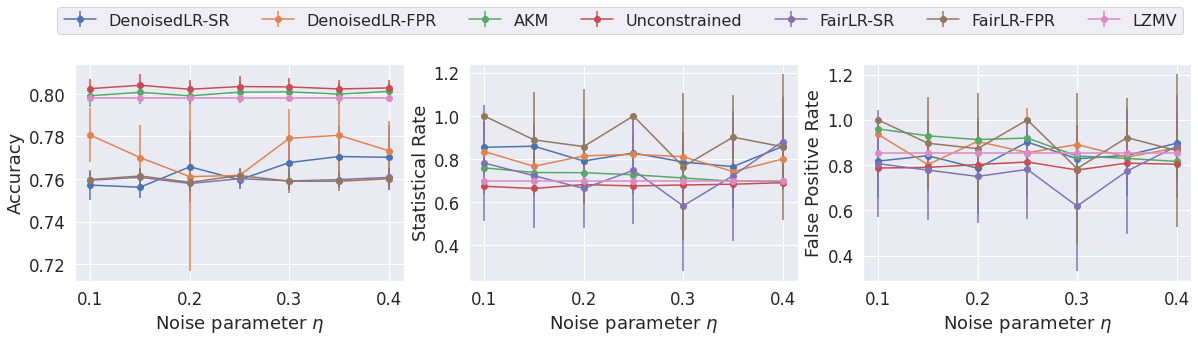} 
\subfloat[Accuracy vs $\eta$]{\hspace{.33\linewidth}}
\subfloat[Statistical Rate vs $\eta$]{\hspace{.33\linewidth}}
\subfloat[False Positive Rate vs $\eta$]{\hspace{.33\linewidth}}
\caption{Performance of \textbf{DLR-SR}, \textbf{DLR-FPR} ($\tau =  0.9$) and baselines with respect to statistical rate, false positive rate and accuracy for different noise parameters $\eta$. The dataset used is \textbf{Adult} and the protected attribute is race.}
\label{fig:adult_race_diff_eta}
\end{figure*}	
	
\begin{figure*}
\centering
\includegraphics[width=\linewidth]{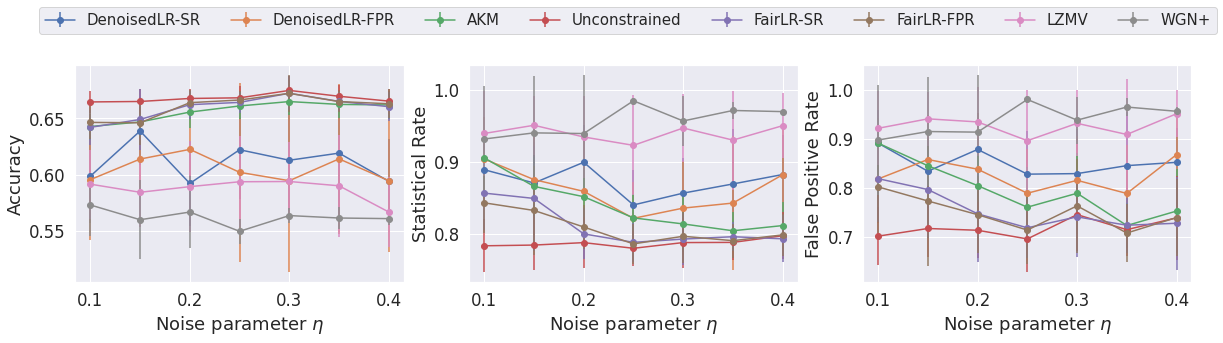} 
\subfloat[Accuracy vs $\eta$]{\hspace{.33\linewidth}}
\subfloat[Statistical Rate vs $\eta$]{\hspace{.33\linewidth}}
\subfloat[False Positive Rate vs $\eta$]{\hspace{.33\linewidth}}
\caption{Performance of \textbf{DLR-SR}, \textbf{DLR-FPR} ($\tau =  0.9$) and baselines with respect to statistical rate, false positive rate and accuracy for different noise parameters $\eta$. The dataset used is \textbf{COMPAS} and the protected attribute is sex.}
\label{fig:compas_sex_diff_eta}
\end{figure*}	
	
\begin{figure*}
\centering
\includegraphics[width=\linewidth]{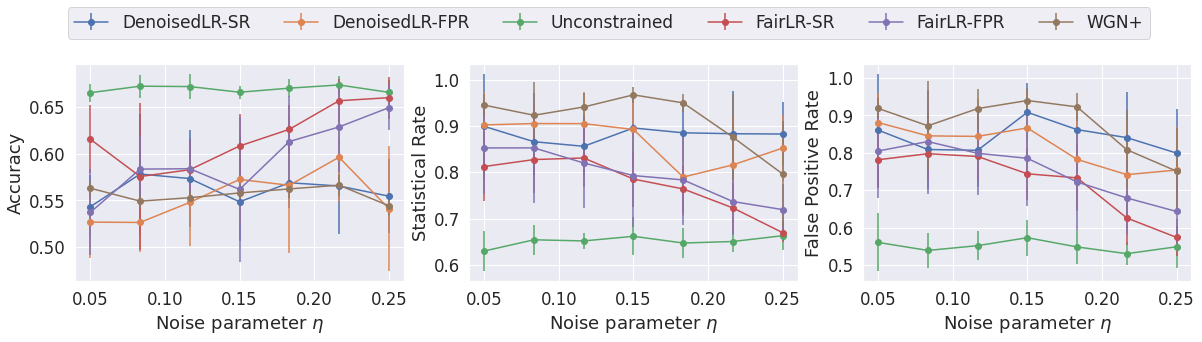} 
\subfloat[Accuracy vs $\eta$]{\hspace{.33\linewidth}}
\subfloat[Statistical Rate vs $\eta$]{\hspace{.33\linewidth}}
\subfloat[False Positive Rate vs $\eta$]{\hspace{.33\linewidth}}
\caption{Performance of \textbf{DLR-SR}, \textbf{DLR-FPR} ($\tau =  0.9$) and baselines with respect to statistical rate, false positive rate and accuracy for different noise parameters $\eta$. The dataset used is \textbf{COMPAS} and the protected attribute is race.}
\label{fig:compas_race_diff_eta}
\end{figure*}

	\subsection{Other results}
	In this section, we present other empirical results to complement the arguments made in Section~\ref{sec:empirical}.
	First, we present the plot for comparison of all methods with respect to statistical rate, Figure~\ref{fig:stat_rate_plot}, and false positive rate, Figure~\ref{fig:fpr_plot}.
	
	\subsubsection{Performance with respect to false discovery rate}
	We also present the empirical performance of our algorithm, compared to baselines, when the fairness metric in consideration is false discovery rate (a linear-fractional metric).
	Table~\ref{tab:fdr_results} presents the results.
	For most combinations of datasets and protected attributes, our method \textbf{DLR-FDR}, with $\tau=0.9$, achieves a higher false discovery rate than baselines, at a minimal cost to accuracy.
	
	\subsubsection{Variation of noise parameter}
	We also investigate the performances of algorithms w.r.t. varying $\eta_0, \eta_1$.
	We consider $\eta_0 = \eta_1 = \eta\in \left\{ 0.1,  0.15,  0.2,  0.25,  0.3,  0.35,  0.4\right\}$ for the binary case, and 
	$H_{i,j} \in \set{0.05, \cdots, 0.25}$, for $i \neq j$, in the non-binary case.
	Other settings are the same as in the main text.
	We select $\tau =  0.9$ for \textbf{FairLR} and \textbf{DLR}.
	The performance on Adult dataset is presented in Figure~\ref{fig:adult_sex_diff_eta} when sex is the protected attribute and in Figure~\ref{fig:adult_race_diff_eta} when race is the protected attribute.
	The performance on COMPAS dataset is presented in Figure~\ref{fig:compas_sex_diff_eta} when sex is the protected attribute and in Figure~\ref{fig:compas_race_diff_eta} when race is the protected attribute.

	\subsubsection{Error in noise parameter estimation}
	As discussed at the end of Section~\ref{sec:thm}, the scale of error in the noise parameter estimation can affect the fairness guarantees.
	In this section, we empirically look at the impact of estimation error on the statistical rate of the generated classifier.
	
	We set the true noise parameters $\eta_0 = \eta_1 =  0.3$. The estimated noise parameter ranges $\eta'$ ranges from  0.1 to  0.3. 
	The variation of accuracy and statistical rate with noise parameter estimate of \textbf{DenoisedLR-SR} for COMPAS and Adult datasets is presented in Figure~\ref{fig:all_noise_est}a,b.
    The plots show that, for both protected attributes, the best statistical rate (close to the desired guarantee of  0.90) is achieved when the estimate matches the true noise parameter value. 
    However, even for estimates that are considerably lower than the true estimate (for instance, $\eta' <  0.15$), the average statistical rate is still quite high ($\sim  0.80$). 

    The results show that if the error in the noise parameter estimate is reasonable, the framework ensures that the fairness of the generated classifier is still high.
    
    \begin{figure*}
    \centering
    \includegraphics[width=\linewidth]{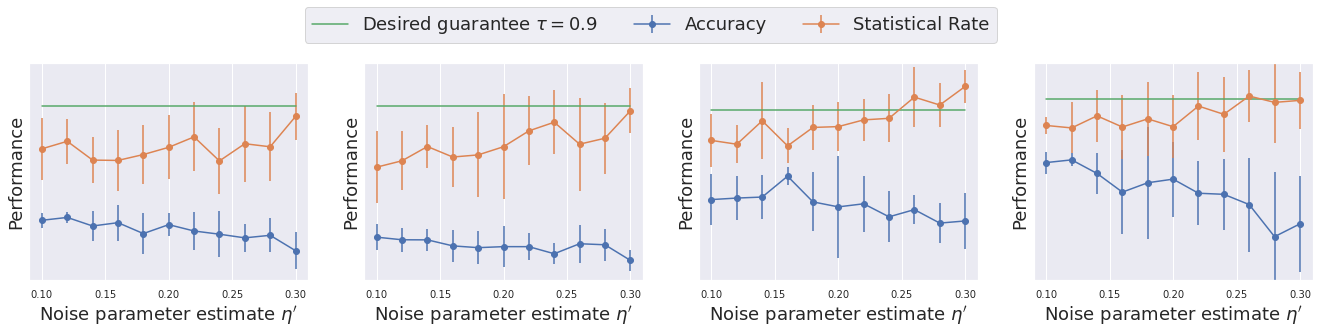} 
    \subfloat[COMPAS - sex]{\hspace{.25\linewidth}}
    \subfloat[COMPAS - race]{\hspace{.25\linewidth}}
    \subfloat[Adult - sex]{\hspace{.25\linewidth}}
    \subfloat[Adult - race]{\hspace{.25\linewidth}}
    \caption{Performance of \textbf{DLR-SR} ($\tau =  0.9$) with respect to statistical rate and accuracy for different noise parameter estimate $\eta'$. The true noise parameters are $\eta_0 = \eta_1 =  0.3$.}
    \label{fig:all_noise_est}
    \end{figure*}

\color{black}

\section{Discussion of initial attempts}
	\label{sec:discussion}
	
	We first discuss two natural ideas including randomized labeling (Section~\ref{sec:alg1}) and solving Program~\eqref{eq:progcon} that only depends on $\widehat{S}$ (Section~\ref{sec:alg2}). 
	For simplicity, we consider the same setting as in Section~\ref{sec:proof}: $p=2$ with statistical rate, and assume $\eta= \eta_1 = \eta_2\in (0, 0.4)$.
	We also discuss their weakness on either the empirical loss or the fairness constraints. 
	This section aims to show that directly applying the same fairness constraints on $\widehat{S}$ may introduce bias on $S$ and, hence, our modifications to the constraints (Definition~\ref{def:denoised}) are necessary.

	\subsection{Randomized labeling}
	\label{sec:alg1}
	
	A simple idea is that for each sample $s_a\in S$, i.i.d. draw the label $f(s_a)$ to be 0 with probability $\alpha$ and to be 1 with probability $1-\alpha$ ($\alpha\in [0,1]$).
	This simple idea leads to a fair classifier by the following lemma.

	\begin{lemma}[\bf{A random classifier is fair}]
		\label{lm:random}
		Let $f\in \left\{0,1\right\}^\calX$ be a classifier generated by randomized labeling.
		With probability at least $1-2e^{-\frac{\alpha \lambda N}{1.2\times10^5}}$, $\gamma(f,S)\geq  0.99$.
	\end{lemma}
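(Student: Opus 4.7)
The plan is to apply a multiplicative Chernoff bound to the per-group prediction rate, exploiting the independence of the labels $f(s_a)$ under randomized labeling. Let $N_i := |\{a \in [N] : z_a = i\}|$ for $i \in \{0,1\}$. Since Assumption~\ref{assumption:ratio} specialized to the statistical rate (i.e., $\xi(f) = (f{=}1)$, $\xi'(f) = \emptyset$) gives $\min_i \Pr_D[f^\star{=}1, Z{=}i] \geq \lambda$, and since $\Pr_D[f^\star{=}1, Z{=}i] \leq \mu_i = N_i/N$, we immediately deduce $N_i \geq \lambda N$ for both $i$.

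Next I would observe that $\Pr_D[f{=}1 \mid Z{=}i] = \frac{1}{N_i}\sum_{a : z_a = i} f(s_a)$ is the empirical average of $N_i$ independent Bernoulli variables with common mean $p$ (the probability assigned to label $1$ under randomized labeling). By the multiplicative Chernoff bound and a union bound over $i \in \{0,1\}$, all $\Pr_D[f{=}1 \mid Z{=}i]$ lie in $[(1-\epsilon)p, (1+\epsilon)p]$ with probability at least $1 - 4\exp(-\epsilon^2 p N_i/3) \geq 1 - 4\exp(-\epsilon^2 p \lambda N/3)$. Choosing $\epsilon = 1/200$ ensures $(1-\epsilon)/(1+\epsilon) \geq 0.99$, so on this concentration event
\[
\gamma(f,S) = \frac{\min_i \Pr_D[f{=}1 \mid Z{=}i]}{\max_i \Pr_D[f{=}1 \mid Z{=}i]} \geq \frac{1-\epsilon}{1+\epsilon} \geq 0.99.
\]
Since $\epsilon^2/3 = 1/(1.2\times 10^5)$, the failure probability becomes at most $4 \exp(-p\lambda N/(1.2\times 10^5))$, matching the statement up to the leading constant (and the identification of the $\alpha$ in the lemma with the probability of label $1$).

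The argument involves no real obstacle: independence of $\{f(s_a)\}_a$ is automatic from the randomized labeling construction, and the group sizes $N_i$ are deterministic functions of $S$ once conditioned on the protected attributes. The only delicate bookkeeping is calibrating $\epsilon$ so that multiplicative concentration around $p$ translates into a $0.99$ ratio bound — this forces $\epsilon \leq 0.01/1.99 \approx 0.005$, making $\epsilon = 1/200$ essentially tight and yielding the stated constant $1.2\times 10^5 = 3/\epsilon^2$.
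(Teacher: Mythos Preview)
Your proposal is correct and follows essentially the same approach as the paper: lower-bound each group size by $\lambda N$ via Assumption~\ref{assumption:ratio}, apply the multiplicative Chernoff bound with $\eps = 0.005$ to each group's empirical positive rate, and take a union bound to conclude $\gamma(f,S)\geq (1-\eps)/(1+\eps)\geq 0.99$. The only cosmetic difference is the leading constant ($4$ versus the paper's stated $2$), which you already flag.
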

	
	\begin{proof}
		Let $A = \left\{a\in [N]: z_a = 0 \right\}$ be the collection of samples with $Z=0$.
		By Assumption~\ref{assumption:ratio}, we know that $|A|\geq \lambda N$.
		For $a\in A$, let $X_a$ be the random variable where $X_a=f(s_a)$.
		By randomized labeling, we know that $\Pr\left[X_i = 1\right] = \alpha$.
		Also, 
		\begin{eqnarray}
		\label{eq:random1}
		\Pr\left[f=1\mid Z=0\right] = \frac{\sum_{i\in A} X_i}{|A|}.
		\end{eqnarray}
		Since all $X_i$ ($i\in A$) are independent, we have
		\begin{eqnarray}
		\label{eq:random2}
		\begin{split}
		 \Pr\left[\sum_{i\in A} X_i\in (1\pm  0.005)\cdot \alpha |A| \right] 
		&\geq && 1-2e^{-\frac{ 0.005^2 \alpha |A|}{3}} \quad (\text{Chernoff bound}) \\
		& \geq && 1-2e^{-\frac{\alpha \lambda N}{1.2\times 10^5}}. \quad (|A|\geq \lambda N)
		\end{split}
		\end{eqnarray}
		Thus, with probability at least $1-2e^{-\frac{\alpha \lambda N}{1.2\times 10^5}}$,
		\begin{eqnarray*}
			\label{eq:random3}
			\begin{split}
				\Pr\left[f=1\mid Z=0\right]
				& = && \frac{\sum_{i\in A} X_i}{|A|} & (\text{Eq.~\eqref{eq:random1}})\\
				& \in && (1\pm  0.005)\cdot \frac{\alpha |A|}{|A|} & (\text{Ineq.~\eqref{eq:random2}}) \\
				& \in && (1\pm  0.005) \alpha. &
			\end{split}
		\end{eqnarray*}
		Similarly, we have that with probability at least $1-2e^{-\frac{\alpha \lambda N}{1.2\times 10^5}}$,
		\[
		\Pr\left[f=1\mid Z=1\right]\in (1\pm  0.005) \alpha.
		\] 
		By the definition of $\gamma(f,S)$, we complete the proof.
	\end{proof}
	
	\noindent
	However, there is no guarantee for the empirical risk of randomized labeling.
	For instance, consider the loss function $L(f,s):= \I\left[f(s)=y\right]$ where $\I\left[\cdot\right]$ is the indicator function, and suppose there are $\frac{N}{2}$ samples with $y_a=0$.
	In this setting, the empirical risk of $f^\star$ may be close to 0, e.g., $f^\star = Y$.
	Meanwhile, the expected empirical risk of randomized labeling is
	\[
	\frac{1}{N} \left((1-\alpha)\cdot \frac{N}{2} +\alpha\cdot \frac{N}{2}\right) = \frac{1}{2},
	\]
	which is much larger than that of $f^\star$.

	\subsection{Replacing $S$ by $\widehat{S}$ in Program~\eqref{eq:progtarget}}
	\label{sec:alg2}
	
	Another idea is to solve the following program which only depends on $\widehat{S}$, i.e., simply replacing $S$ by $\widehat{S}$ in Program~\eqref{eq:progtarget}. 
	
	\begin{tcolorbox}
		\begin{equation} \tag{ConFair}
		\label{eq:progcon}
		\begin{split}
		& \min_{f\in \calF} \frac{1}{N}\sum_{a\in [N]} L(f, \widehat{s}_a) \quad s.t. \\
		& ~ \gamma(f, \widehat{S})\geq \tau.
		\end{split}
		\end{equation}
	\end{tcolorbox}
	
	\begin{remark}
		\label{remark:5}
		Similar to Section~\ref{sec:empirical}, we can design an algorithm that solves Program~\eqref{eq:progcon} by logistic regression.
		\begin{equation} \tag{FairLR}
		\small
		\label{eq:progLR}
		\begin{split}
		&\min_{\theta\in \R^d} -\frac{1}{N}\sum_{a\in [N]} \left(y_a \log f_\theta(s_a) + (1-y_a) \log (1-f_\theta(s_a))\right) \\ 
		s.t.  &~\widehat{\mu}_1\cdot \sum_{a\in [N]: \widehat{Z}=0} \I\left[\langle x_a, \theta\rangle \geq 0\right]
		\geq \tau \widehat{\mu}_0\cdot \sum_{a\in [N]: \widehat{Z}=1} \I\left[\langle x_a, \theta\rangle \geq 0\right], \\
		&~\widehat{\mu}_0\cdot \sum_{a\in [N]: \widehat{Z}=1} \I\left[\langle x_a, \theta\rangle \geq 0\right] 
		\geq  \tau \widehat{\mu}_1\cdot \sum_{a\in [N]: \widehat{Z}=0} \I\left[\langle x_a, \theta\rangle \geq 0\right].
		\end{split}
		\end{equation}
	\end{remark}
	
	\noindent
	Let $\widehat{f}^\star$ denote an optimal solution of Program~\eqref{eq:progcon}.
	Ideally, we want to use $\widehat{f}^\star$ to estimate $f^\star$.
	Since $Z$ is not used for prediction, we have that for any $f\in \calF$,
	\[
	\sum_{a\in [N]} L(f,s_a) = \sum_{a\in [N]} L(f,\widehat{s}_a).
	\]
	Then if $\widehat{f}^\star$ satisfies $\gamma(\widehat{f}^\star, S)\geq \tau$, we conclude that $\widehat{f}^\star$ is also an optimal solution of Program~\eqref{eq:progtarget}. 
	However, due to the flipping noises, $\widehat{f}^\star$ may be far from $f^\star$ (Example~\ref{example:1}).
	More concretely, it is possible that $\gamma(\widehat{f}^\star,S) \ll \tau$ (Lemma~\ref{lm:gap1}).
	Moreover, we discuss the range of $\Omega(f^\star, \widehat{S})$ (Lemma~\ref{lm:gap2}). 
	We find that $\Omega(f^\star, \widehat{S}) < \tau$ may hold which implies that $f^\star$ may not be feasible for Program~\eqref{eq:progcon}.
	We first give an example showing that $\widehat{f}^\star$ can perform very bad over $S$ with respect to the fairness metric.

	\begin{figure*}
		\caption{An example showing that $\gamma(f,S)$ and $\gamma(f,\widehat{S})$ can differ by a lot. The detailed explanation can be found in Example~\ref{example:1}.}
		\begin{center}
			\includegraphics[width = \textwidth ]{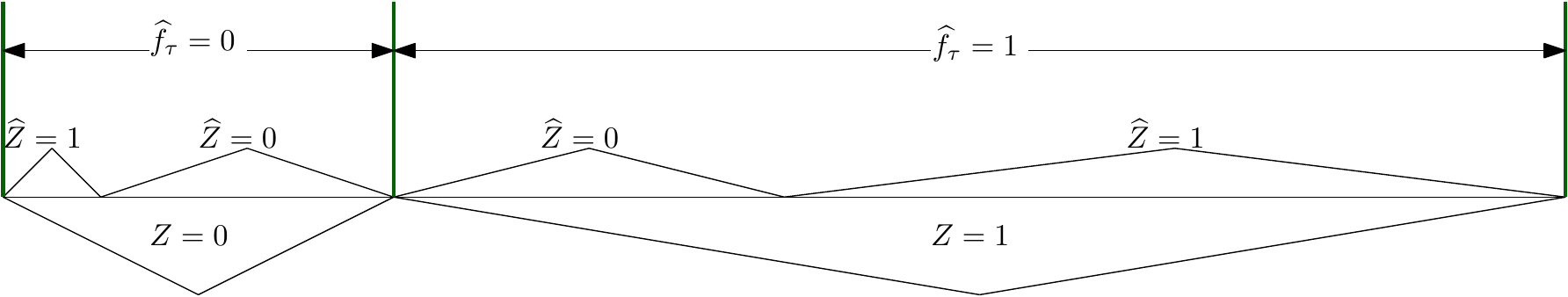}
		\end{center}
		\label{fig:worstcase}
	\end{figure*}
	
	\begin{example}
		\label{example:1}
		Our example is shown in Figure~\ref{fig:worstcase}.
		We assume that $\mu_0 = 1/3$ and $\mu_1 = 2/3$.
		Let $\eta = 1/3$ be the noise parameter and we assume
		$
		\pi_{20} = \pi_{01} = 1/3.
		$
		Consequently, we have that 
		\[
		\widehat{\mu}_0 = 1/3\times 2/3 + 2/3 *1/3 = 4/9.
		\]
		\sloppy
		Then we consider the following simple classifier $f\in \left\{0,1\right\}^\calX$: $\widehat{f}^\star=Z$.
		We directly have that $\Pr\left[\widehat{f}^\star=1\mid Z=0\right] = 0$ and  $\Pr\left[\widehat{f}^\star=1\mid Z=1\right] = 1$, which implies that $\gamma(\widehat{f}^\star,S) = 0$.
		We also have that
		\begin{align*}
		 \Pr\left[\widehat{f}^\star=1\mid \widehat{Z}=0\right] 
		&=  \Pr\left[Z=1\mid \widehat{Z}=0\right] \quad (\widehat{f}^\star=Z) \\
		&= \frac{\pi_{01}\cdot\mu_1 }{\widehat{\mu}_0} \quad (\text{Observation~\ref{observation:prob}}) = 0.5, 
		\end{align*}
		and
		\begin{align*}
		\Pr\left[\widehat{f}^\star=1\mid \widehat{Z}=1\right]
		&= \Pr\left[Z=1\mid \widehat{Z}=1\right] \quad (\widehat{f}^\star=Z) \\
		&= \frac{\pi_{11}\cdot\mu_1 }{\widehat{\mu}_1} \quad (\text{Observation~\ref{observation:prob}}) =   0.8, &
		\end{align*}
		which implies that $\gamma(\widehat{f}^\star,\widehat{S}) =  0.625$.
		Hence, there is a gap between $\gamma(\widehat{f}^\star,S)$ and $\gamma(\widehat{f}^\star,\widehat{S})$, say  0.625, in this example.
		Consequently, $\widehat{f}^\star$ can be very unfair over $S$, and hence, is far from $f^\star$.
	\end{example}
	
	\noindent
	Next, we give some theoretical results showing the weaknesses of Program~\eqref{eq:progcon}.

	\paragraph{An upper bound for $\gamma(f, S)$.}
	More generally, given a classifier $f\in \left\{0,1\right\}^\calX$, we provide an upper bound for $\gamma(f,S)$ that is represented by $\gamma(f,\widehat{S})$; see the following lemma.

	\begin{lemma}[\bf{An upper bound for $\gamma(f,S)$}]
		\label{lm:gap1}
		Suppose we have
		\begin{enumerate}
			\item $\Pr\left[f=1\mid \widehat{Z}= 0\right] \leq \Pr\left[f=1\mid \widehat{Z}=1\right]$;
			\item $\Pr\left[f=1, Z=0\mid \widehat{Z}=0\right]\leq \alpha_0\cdot \Pr\left[f=1, Z=1\mid \widehat{Z}=0\right]$ for some $\alpha_0\in [0,1]$;
			\item $\Pr\left[f=1, Z=0\mid \widehat{Z}=1 \right]\leq \alpha_1\cdot \Pr\left[f=1, Z=1\mid \widehat{Z}=1 \right]$ for some $\alpha_1\in [0,1]$.
		\end{enumerate}
		Let $\beta_{ij} = \frac{\widehat{\mu}_i}{\mu_j}$ for $i,j\in \left\{0,1\right\}$.
		The following inequality holds
		\begin{eqnarray*}
		\gamma(f,S)\leq \frac{\alpha_0(1+\alpha_1) \beta_{00}\cdot \gamma(f,\widehat{S}) + \alpha_1(1+\alpha_0)\beta_{10}}{(1+\alpha_1) \beta_{01}\cdot \gamma(f,\widehat{S}) + (1+\alpha_0)\beta_{11}}
		\leq \max\left\{\alpha_0, \alpha_1\right\}\cdot \frac{\mu_1}{\mu_0}.
		\end{eqnarray*}
	\end{lemma}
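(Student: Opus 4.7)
\proofsketch
The plan is to reduce everything to conditional probabilities given $\widehat{Z}$ via the law of total probability, and then use the two hypothesized $\alpha_j$--inequalities to bound the numerator $\Pr[f=1\mid Z=0]$ from above and the denominator $\Pr[f=1\mid Z=1]$ from below. Concretely, for each $i\in\{0,1\}$ I write
\[
\Pr[f=1, Z=i] \;=\; \Pr[f=1, Z=i\mid \widehat{Z}=0]\,\widehat{\mu}_0 \;+\; \Pr[f=1, Z=i\mid \widehat{Z}=1]\,\widehat{\mu}_1.
\]
Setting $a_{ij}:=\Pr[f=1, Z=i\mid \widehat{Z}=j]$ and $p_j:=\Pr[f=1\mid \widehat{Z}=j]=a_{0j}+a_{1j}$, assumptions~2 and~3 translate into $a_{0j}\le \alpha_j a_{1j}$, which is equivalent to the two-sided pinning $a_{0j}\le \tfrac{\alpha_j}{1+\alpha_j}p_j$ and $a_{1j}\ge \tfrac{1}{1+\alpha_j}p_j$ for $j=0,1$.

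Applying these bounds termwise and then dividing by $\mu_0$ and $\mu_1$ respectively yields
\[
\Pr[f=1\mid Z=0] \;\le\; \tfrac{1}{\mu_0}\bigl(\tfrac{\alpha_0}{1+\alpha_0}p_0\widehat{\mu}_0+\tfrac{\alpha_1}{1+\alpha_1}p_1\widehat{\mu}_1\bigr),
\]
\[
\Pr[f=1\mid Z=1] \;\ge\; \tfrac{1}{\mu_1}\bigl(\tfrac{1}{1+\alpha_0}p_0\widehat{\mu}_0+\tfrac{1}{1+\alpha_1}p_1\widehat{\mu}_1\bigr).
\]
Assumption~1 gives $p_0\le p_1$, so $\gamma(f,\widehat{S})=p_0/p_1$ and also (after I verify with the same computation applied to $\Pr[f=1,Z=i]$ directly, or by consistency of the bounds) $\Pr[f=1\mid Z=0]\le \Pr[f=1\mid Z=1]$, hence $\gamma(f,S)$ equals the ratio of the first to the second. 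Taking that ratio, clearing $(1+\alpha_0)(1+\alpha_1)$ from both sides, dividing numerator and denominator by $p_1$, and finally rewriting $\widehat{\mu}_i/\mu_j$ as $\beta_{ij}$ gives exactly the first displayed upper bound in the lemma.

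For the second inequality I plan to use the mediant/weighted-average trick: the expression
\[
\frac{\alpha_0(1+\alpha_1)\beta_{00}\gamma(f,\widehat{S})+\alpha_1(1+\alpha_0)\beta_{10}}{(1+\alpha_1)\beta_{01}\gamma(f,\widehat{S})+(1+\alpha_0)\beta_{11}}
\]
is of the form $(A_1+A_2)/(B_1+B_2)$ with $A_1/B_1=\alpha_0\beta_{00}/\beta_{01}$ and $A_2/B_2=\alpha_1\beta_{10}/\beta_{11}$. A direct check of the definition of $\beta_{ij}$ shows $\beta_{00}/\beta_{01}=\beta_{10}/\beta_{11}=\mu_1/\mu_0$, so both termwise ratios are of the form $\alpha_j\mu_1/\mu_0$; the standard inequality $(A_1+A_2)/(B_1+B_2)\le \max\{A_1/B_1,A_2/B_2\}$ then produces the bound $\max\{\alpha_0,\alpha_1\}\cdot \mu_1/\mu_0$.

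The only real subtlety I expect is the implicit orientation issue: I need $\gamma(f,S)$ to be realized as $\Pr[f=1\mid Z=0]/\Pr[f=1\mid Z=1]$ rather than the reciprocal, otherwise the bound I derive is on $1/\gamma(f,S)$ instead of on $\gamma(f,S)$. I will verify this from assumption~1 plus the obtained termwise inequalities (both ``$\le$'' chains line up so that the $Z{=}0$ marginal is at most the $Z{=}1$ marginal), which is the one place where I need to be careful rather than mechanical. The rest is algebraic manipulation.
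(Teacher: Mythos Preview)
Your proposal is correct and follows essentially the same route as the paper: decompose $\Pr[f=1,Z=i]$ over $\widehat{Z}$, convert assumptions~2 and~3 into the two-sided bounds $a_{0j}\le\frac{\alpha_j}{1+\alpha_j}p_j$ and $a_{1j}\ge\frac{1}{1+\alpha_j}p_j$, plug in, and finish with the mediant inequality. The one place you overcomplicate things is the ``orientation'' worry: you do not need to verify that $\Pr[f=1\mid Z=0]\le\Pr[f=1\mid Z=1]$, since by definition $\gamma(f,S)=\min\{A/B,B/A\}\le A/B$ for \emph{any} ordering of $A,B$; the paper simply invokes this and bounds $A/B$ directly, which is cleaner and avoids a step that is both unnecessary and not obviously derivable from the stated hypotheses.
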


	\noindent
	The intuition of the first assumption is that the statistical rate for $Z=0$ is at most that for $Z=1$ over the noisy dataset $\widehat{S}$.
	The second and the third assumptions require the classifier $f$ to be less positive when $Z=0$.
	Intuitively, $f$ is restricted to induce a smaller statistical rate for $Z=0$ over both $S$ and $\widehat{S}$.
	Specifically, if $\alpha_0=\alpha_1=0$ as in Example~\ref{example:1}, we have $\gamma(f,S)=0$.
	Even if $\alpha_0 = \alpha_1 = 1$, we have $\gamma(f,S)\leq \frac{\mu_1}{\mu_0}$ which does not depend on $\gamma(f, \widehat{S})$. 
	\begin{proof}[Proof of Lemma~\ref{lm:gap1}]
		By the first assumption, we have
		\begin{eqnarray}
		\label{eq:lower1}
		\gamma(f,\widehat{S}) = \frac{\Pr\left[f=1\mid \widehat{Z}= 0\right]}{\Pr\left[f=1\mid \widehat{Z}= 1\right]}.
		\end{eqnarray}
		By the second assumption, we have 
		\begin{eqnarray}
		\label{eq:lower2}
		\begin{split}
		\Pr\left[f=1, Z=1\mid \widehat{Z}=0\right]
		&= && \frac{(1+\alpha_0)\cdot\Pr\left[f=1, Z=1\mid \widehat{Z}=0\right]}{1+\alpha_0} & \\
		& \geq && \frac{\Pr\left[f=1, Z=1\mid \widehat{Z}=0\right]}{1+\alpha_0} + \frac{\Pr\left[f=1, Z=0\mid \widehat{Z}=0\right]}{1+\alpha_0} &\\
		& = && \frac{1}{1+\alpha_0}\cdot \Pr\left[f=1\mid \widehat{Z}=0\right].
		\end{split}
		\end{eqnarray}
		Similarly, we have the following
		\begin{eqnarray}
		\label{eq:lower3}
		\begin{split}
		\Pr\left[f=1, Z=0\mid \widehat{Z}=0\right] \leq \frac{\alpha_0}{1+\alpha_0} \Pr\left[f=1\mid \widehat{Z}=0\right].
		\end{split}
		\end{eqnarray}
		Also, by the third assumption, we have
		\begin{eqnarray}
		\label{eq:lower4}
		\begin{split}
		\Pr\left[f=1, Z=1\mid \widehat{Z}=1\right] 
		\geq \frac{1}{1+\alpha_1} \Pr\left[f=1\mid \widehat{Z}=1\right],
		\end{split}
		\end{eqnarray}
		and
		\begin{eqnarray}
		\label{eq:lower5}
		\begin{split}
		\Pr\left[f=1, Z=0\mid \widehat{Z}=1\right]
		\leq \frac{\alpha_1}{1+\alpha_1} \Pr\left[f=1\mid \widehat{Z}=1\right].
		\end{split}
		\end{eqnarray}
		Then
		\begin{eqnarray}
		\label{eq:lower6}
		\begin{split}
		\Pr\left[f=1 \mid Z=0\right] & = && \Pr\left[f=1,\widehat{Z}=0\mid Z=0\right] + \Pr\left[f=1,\widehat{Z}=1 \mid Z=0\right]  \\
		& = && \Pr\left[f=1,Z=0\mid \widehat{Z}=0\right]\cdot \frac{\widehat{\mu}_0}{\mu_0} + \Pr\left[f=1,Z=0\mid \widehat{Z}=1\right]\cdot \frac{\widehat{\mu}_1}{\mu_0}  \\
		& = && \Pr\left[f=1,Z=0\mid \widehat{Z}=0\right]\cdot \beta_{00}  + \Pr\left[f=1,Z=0\mid \widehat{Z}=1\right]\cdot \beta_{10} \\
		& && (\text{Defn. of $\beta_{00}$ and $\beta_{10}$}) \\
		& \leq && \frac{\alpha_0\beta_{00}}{1+\alpha_0}\cdot \Pr\left[f=1\mid \widehat{Z}=0\right]  + \frac{\alpha_1 \beta_{10}}{1+\alpha_1}\cdot \Pr\left[f=1\mid \widehat{Z}=1\right]. \\
		& && (\text{Ineqs.~\eqref{eq:lower3} and~\eqref{eq:lower5}})
		\end{split}
		\end{eqnarray}
		By a similar argument, we have
		\begin{eqnarray}
		\label{eq:lower7}
		\begin{split}
		\Pr\left[f=1 \mid Z=1\right]
		& = && \Pr\left[f=1,Z=1\mid \widehat{Z}=0\right]\cdot \beta_{01}  + \Pr\left[f=1,Z=1\mid \widehat{Z}=1\right]\cdot \beta_{11} \\
		& && (\text{Defn. of $\beta_{01}$ and $\beta_{11}$}) \\
		& \geq && \frac{\beta_{01}}{1+\alpha_0}\cdot \Pr\left[f=1\mid \widehat{Z}=0\right]  + \frac{\beta_{11}}{1+\alpha_1}\cdot \Pr\left[f=1\mid \widehat{Z}=1\right]. \\
		& && (\text{Ineqs.~\eqref{eq:lower2} and~\eqref{eq:lower4}})
		\end{split}
		\end{eqnarray}
		Thus, we have
		\begin{eqnarray*}
		\small
			\begin{split}
				 \gamma(f,S) 
				& \leq && \frac{\Pr\left[f=1 \mid Z=0\right]}{ \Pr\left[f=1 \mid Z=1\right]} \quad (\text{Defn. of $\gamma(f,S)$}) \\
				& \leq && \frac{\frac{\alpha_0\beta_{00}}{1+\alpha_0} \Pr\left[f=1\mid \widehat{Z}=0\right] + \frac{\alpha_1 \beta_{10}}{1+\alpha_1} \Pr\left[f=1\mid \widehat{Z}=1\right]}{\frac{\beta_{01}}{1+\alpha_0} \Pr\left[f=1\mid \widehat{Z}=0\right] + \frac{\beta_{11}}{1+\alpha_1} \Pr\left[f=1\mid \widehat{Z}=1\right]} \\
				& && (\text{Ineqs.~\eqref{eq:lower6} and~\eqref{eq:lower7}}) \\
				& = && \frac{\alpha_0(1+\alpha_1) \beta_{00}\cdot \gamma(f,\widehat{S}) + \alpha_1(1+\alpha_0)\beta_{10}}{(1+\alpha_1) \beta_{01}\cdot \gamma(f,\widehat{S}) + (1+\alpha_0)\beta_{11}} \quad (\text{Eq.~\eqref{eq:lower1}}) \\
				& \leq && \max\left\{\alpha_0\cdot \frac{\beta_{00}}{\beta_{01}}, \alpha_1\cdot \frac{\beta_{10}}{\beta_{11}}\right\} \\
				& = && \max\left\{\alpha_0, \alpha_1\right\}\cdot \frac{\mu_1}{\mu_0}, \quad (\text{Defn. of $\beta_{ij}$})
			\end{split}
		\end{eqnarray*}
		which completes the proof.
	\end{proof}
	
	\paragraph{$f^\star$ may not be feasible in Program~\eqref{eq:progcon}.}
	We consider a simple case that $\eta_1 = \eta_2 = \eta$.
	Without loss of generality, we assume that $\Pr\left[f^\star=1\mid Z = 0\right]\leq \Pr\left[f^\star=1\mid Z = 1\right]$, i.e., the statistical rate of $Z=0$ is smaller than that of $Z=1$ over $S$.
	Consequently, we have
	\[
	\gamma(f^\star, S) = \frac{\Pr\left[f^\star=1\mid Z = 0\right]}{\Pr\left[f^\star=1\mid Z = 1\right]}.
	\]
	
	\begin{lemma}[\bf{Range of $\Omega(f^\star, \widehat{S})$}]
		\label{lm:gap2}
		Let $\eps\in (0, 0.5)$ be a given constant and let
		\begin{align*}
		\small
		\Gamma = \frac{\eta\mu_0 + (1-\eta) (1-\mu_0)}{(1-\eta)\mu_0 + \eta (1-\mu_0)}\times \frac{(1-\eta)\mu_0 \gamma(f^\star, S)+\eta(1-\mu_0)}{\eta\mu_0 \gamma(f^\star,S)+(1-\eta)(1-\mu_0)}.
		\end{align*}
		With probability at least $1-4e^{-\frac{\eps^2 \eta \lambda N}{192}}$, the following holds
		\begin{eqnarray*}
			\gamma(f^\star, \widehat{S}) \in (1\pm \eps)\cdot \min\left\{\Gamma, \frac{1}{\Gamma} \right\}.
		\end{eqnarray*}
	\end{lemma}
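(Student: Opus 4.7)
The plan is to express everything in terms of four empirical quantities, concentrate each of them around its expectation by Chernoff, and then compute the ratio that defines $\gamma(f^\star,\widehat S)$ in closed form.

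Write $a:=\Pr_D[f^\star=1,Z=0]$ and $b:=\Pr_D[f^\star=1,Z=1]$; by Assumption~\ref{assumption:ratio} we have $a,b\geq\lambda$, and the hypothesis $\Pr_D[f^\star=1\mid Z=0]\leq \Pr_D[f^\star=1\mid Z=1]$ becomes $a/\mu_0\leq b/(1-\mu_0)$, so that $\gamma(f^\star,S)=(a/\mu_0)/(b/(1-\mu_0))$. Under the flipping noise with $\eta_0=\eta_1=\eta$, for each fixed sample the noisy label $\widehat Z$ is an independent Bernoulli perturbation, so
\begin{align*}
\E\bigl[\Pr_{\widehat D}[f^\star{=}1,\widehat Z{=}0]\bigr] &= (1-\eta)a+\eta b,\qquad \E\bigl[\Pr_{\widehat D}[f^\star{=}1,\widehat Z{=}1]\bigr] = \eta a+(1-\eta)b,\\
\E[\widehat\mu_0] &= (1-\eta)\mu_0+\eta(1-\mu_0),\qquad \E[\widehat\mu_1] = \eta\mu_0+(1-\eta)(1-\mu_0).
\end{align*}

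The first step is the concentration. Each of the four empirical quantities above is an average of $N$ independent $\{0,1\}$ random variables whose mean is at least $\min\{\eta\lambda,(1-\eta)\lambda\}\geq\eta\lambda$ (for the first two), respectively at least $\eta$ (for the last two). Applying the multiplicative Chernoff bound with relative error $\eps/8$ to each of the four, and taking a union bound, gives that with probability at least $1-4e^{-\eps^2\eta\lambda N/192}$ all four estimates lie within a $(1\pm\eps/8)$ multiplicative window of their expectations. This is where the $4$ and the $192=3\cdot 64$ in the stated bound originate.

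Conditioning on this good event, I would form the ratio
\[
r \;:=\; \frac{\Pr_{\widehat D}[f^\star{=}1\mid \widehat Z{=}0]}{\Pr_{\widehat D}[f^\star{=}1\mid \widehat Z{=}1]}
\;=\;\frac{\Pr_{\widehat D}[f^\star{=}1,\widehat Z{=}0]/\widehat\mu_0}{\Pr_{\widehat D}[f^\star{=}1,\widehat Z{=}1]/\widehat\mu_1},
\]
and substitute the four concentrated estimates. A direct computation using $a=(\gamma(f^\star,S)\mu_0/(1-\mu_0))\,b$ rewrites the population value of this ratio as
\[
\frac{\eta\mu_0+(1-\eta)(1-\mu_0)}{(1-\eta)\mu_0+\eta(1-\mu_0)}\cdot\frac{(1-\eta)\mu_0\,\gamma(f^\star,S)+\eta(1-\mu_0)}{\eta\mu_0\,\gamma(f^\star,S)+(1-\eta)(1-\mu_0)}\;=\;\Gamma,
\]
exactly matching the quantity defined in the statement. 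The $(1\pm\eps/8)^4$ multiplicative slop from the four Chernoff bounds then yields $r\in(1\pm\eps)\Gamma$ for $\eps<0.5$. Finally, since $\gamma(f^\star,\widehat S)=\min\{r,1/r\}$, the conclusion $\gamma(f^\star,\widehat S)\in(1\pm\eps)\min\{\Gamma,1/\Gamma\}$ follows.

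The main obstacle is bookkeeping the concentration so that additive Chernoff errors on the numerator and denominator become the desired multiplicative $(1\pm\eps)$ factor on the ratio: the denominators $(1-\eta)a+\eta b$ and $\eta a+(1-\eta)b$ are both $\Omega(\eta\lambda)$, which is what forces the $\eta\lambda$ factor inside the exponent. Choosing the per-quantity slack as $(\eps/8)\cdot\eta\lambda$ (rather than $\eps\lambda$) is what makes the ratio behave multiplicatively and simultaneously gives the constant $192$ in the stated failure probability. Once this balance is set, the rest is the algebraic identity above.
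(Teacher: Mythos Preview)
Your proposal is correct and follows essentially the same strategy as the paper: concentrate the relevant empirical frequencies via Chernoff, then do the algebra to identify the ratio $\Pr[f^\star{=}1\mid\widehat Z{=}0]/\Pr[f^\star{=}1\mid\widehat Z{=}1]$ with $\Gamma$. The only cosmetic difference is that the paper concentrates the conditional flip rates $\Pr[\widehat Z{=}i\mid f^\star{=}1,Z{=}j]$ and $\pi_{ij}$ on the subsets of size $\geq\lambda N$, whereas you concentrate the four joint quantities $\Pr[f^\star{=}1,\widehat Z{=}i]$ and $\widehat\mu_i$ over all $N$ samples directly; both decompositions yield the same exponent $\eps^2\eta\lambda N/192$ and the same closed-form $\Gamma$.
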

	
	\noindent
	For instance, if $\mu_0= 0.5$, $\gamma(f^\star,S)= 0.8=\tau$ and $\eta =  0.2$, we have
	\[
	\gamma(f^\star, \widehat{S}) \approx  0.69 < \tau.
	\]
	Then $f^\star$ is not a feasible solution of Program~\eqref{eq:progcon}.
	Before proving the lemma, we give some intuitions.
	
	\begin{discussion}
		\label{discussion:1}
		By assumption, we have that for a given classifier $f^\star\in \calF$, 
		\begin{eqnarray}
		\label{eq:discussion1}
		\Pr\left[\widehat{Z}=1\mid Z=0\right] \approx \Pr\left[\widehat{Z}=0\mid Z=1\right] \approx \eta
		\end{eqnarray}
		Moreover, the above property also holds when conditioned on a subset of samples with $Z=0$ or $Z=1$.
		Specifically, for $i\in \left\{0,1\right\}$,
		\begin{eqnarray}
		\label{eq:discussion2}
		\begin{split}
		& && \Pr\left[\widehat{Z}=1\mid f^\star=1,Z=0\right] \\
		&\approx &&\Pr\left[\widehat{Z}=0\mid f^\star=1,Z=1\right] 		\approx \eta
		\end{split}
		\end{eqnarray}
		Another consequence of Property~\eqref{eq:discussion1} is that for $i\in \left\{0,1\right\}$,
		\begin{eqnarray}
		\label{eq:discussion3}
		\begin{split}
		\widehat{\mu}_i &= && \pi_{i,i}\mu_i + \pi_{i,1-i}\mu_{1-i} & (\text{Observation~\ref{observation:prob}}) \\
		&\approx && (1-\eta)\mu_i + \eta\mu_{1-i}. & (\text{Property~\eqref{eq:discussion1}})
		\end{split}
		\end{eqnarray}

		Then we have
		\begin{eqnarray*}
		\small
			\begin{split}
			    & && \Pr\left[f^\star=1\mid \widehat{Z}=0\right] \\
				& = && \Pr\left[f^\star=1, Z = 0\mid \widehat{Z}=0\right]  +  \Pr\left[f^\star=1, Z = 1\mid \widehat{Z}=0\right] \\
				& = &&  \Pr\left[Z = 0\mid \widehat{Z}=0\right]\cdot  \Pr\left[f^\star=1\mid Z = 0, \widehat{Z}=0\right]  + \Pr\left[Z = 1\mid \widehat{Z}=0\right]\cdot  \Pr\left[f^\star=1\mid Z = 1, \widehat{Z}=0\right] \\
				& = &&  \frac{\pi_{00} \mu_0}{\widehat{\mu}_0}\cdot \Pr\left[f^\star=1\mid Z = 0, \widehat{Z}=0\right]  + \frac{\pi_{01} \mu_1}{\widehat{\mu}_0}\cdot \Pr\left[f^\star=1\mid Z = 1, \widehat{Z}=0\right] \\
				& && (\text{Observation~\ref{observation:prob}})\\
				& \approx && \frac{(1-\eta) \mu_0}{(1-\eta)\mu_0 + \eta \mu_1}\cdot \Pr\left[f^\star=1\mid Z = 0, \widehat{Z}=0\right] + \frac{\eta \mu_1}{(1-\eta)\mu_0 + \eta \mu_1}  \cdot \Pr\left[f^\star=1\mid Z = 1, \widehat{Z}=0\right]  \\ 
				& && (\text{Properties~\eqref{eq:discussion1} and~\eqref{eq:discussion3}}) \\
				& = && \frac{(1-\eta) \mu_0}{(1-\eta)\mu_0 + \eta (1-\mu_0)}\times \\  
				& && \frac{\Pr\left[f^\star=1\mid Z=0\right]\cdot \Pr\left[\widehat{Z}=0\mid f^\star=1,Z=0\right]}{\Pr\left[\widehat{Z}=0\mid Z=0\right]}   + \frac{\eta \mu_1}{(1-\eta)\mu_0 + \eta (1-\mu_0)}\times \\   
				& && \frac{\Pr\left[f^\star=1\mid Z=1\right]\cdot \Pr\left[\widehat{Z}=0\mid f^\star=1,Z=1\right]}{\Pr\left[\widehat{Z}=0\mid Z=1\right]}  \\
				& \approx && \frac{(1-\eta) \mu_0}{(1-\eta)\mu_0 + \eta (1-\mu_0)}\cdot \Pr\left[f^\star=1\mid Z=0\right] + \frac{\eta \mu_1}{(1-\eta)\mu_0 + \eta (1-\mu_0)}\cdot   \Pr\left[f^\star=1\mid Z=1\right]. \\ 
				& && (\text{Properties~\eqref{eq:discussion1} and~\eqref{eq:discussion2}})
			\end{split}
		\end{eqnarray*}
		Similarly, we can represent 
		\begin{eqnarray*}
			\begin{split}
				& &&\Pr\left[f^\star=1\mid \widehat{Z}=1\right] \\
				&\approx &&\frac{\eta \mu_0  }{\eta\mu_0 + (1-\eta) (1-\mu_0)} \Pr\left[f^\star=1\mid Z = 0\right]  + \frac{(1-\eta) \mu_1}{\eta\mu_0 + (1-\eta) (1-\mu_0)} \Pr\left[f^\star=1\mid Z = 1\right].
			\end{split}
		\end{eqnarray*}
		Applying the approximate values of $\Pr\left[f^\star=1\mid \widehat{Z}=0\right]$ and $\Pr\left[f^\star=1\mid \widehat{Z}=1\right]$ to compute $\gamma(f^\star,S)$, we have Lemma~\ref{lm:gap2}.
	\end{discussion}
	
	\begin{proof}[Proof of Lemma~\ref{lm:gap2}]
		By definition, we have
		\[
		\gamma(f^\star,\widehat{S}) \leq \frac{\Pr\left[f^\star=1\mid \widehat{Z}=0\right]}{\Pr\left[f^\star=1\mid \widehat{Z}=1\right]}.
		\]
		\sloppy
		Thus, it suffices to provide an upper bound for $\Pr\left[f^\star=1\mid \widehat{Z}=0\right]$ and a lower bound for $\Pr\left[f^\star=1\mid \widehat{Z}=1\right]$.
		Similar to Discussion~\ref{discussion:1}, we have
		\begin{eqnarray}
		\label{eq:gap2_1}
		\begin{split}
		Pr\left[f^\star=1\mid \widehat{Z}=0\right] 
		&=&& \frac{\Pr\left[Z=0\right]\cdot \Pr\left[f^\star=1\mid Z=0\right]}{\Pr\left[\widehat{Z}=0\right]} \times  \Pr\left[\widehat{Z}=0\mid f^\star=1,Z=0\right]\\  &+ && \frac{\Pr\left[Z=1\right]\cdot \Pr\left[f^\star=1\mid Z=1\right]}{\Pr\left[\widehat{Z}=0\right]}\times  \Pr\left[\widehat{Z}=0\mid f^\star=1,Z=1\right]\\
		& = && \frac{\mu_0\cdot \Pr\left[f^\star=1\mid Z=0\right]}{\pi_{00}\mu_0 + \pi_{01}(1-\mu_0)} \times  \Pr\left[\widehat{Z}=0\mid f^\star=1,Z=0\right]\\  &+&& \frac{\mu_1\cdot \Pr\left[f^\star=1\mid Z=1\right]}{\pi_{00}\mu_0 + \pi_{01}(1-\mu_0)} \times  \Pr\left[\widehat{Z}=0\mid f^\star=1,Z=1\right],
		\end{split}
		\end{eqnarray}
		and
		\begin{eqnarray}
		\label{eq:gap2_2}
		\begin{split}
		\Pr\left[f^\star=1\mid \widehat{Z}=1\right] 
		&=&& \frac{\Pr\left[Z=0\right]\cdot \Pr\left[f^\star=1\mid Z=0\right]}{\Pr\left[\widehat{Z}=1\right]} \times  \Pr\left[\widehat{Z}=1\mid f^\star=1,Z=0\right] \\
		& && + \frac{\Pr\left[Z=1\right]\cdot \Pr\left[f^\star=1\mid Z=1\right]}{\Pr\left[\widehat{Z}=1\right]} \times \Pr\left[\widehat{Z}=1\mid f^\star=1,Z=1\right]\\
		& = && \frac{\mu_0\cdot \Pr\left[f^\star=1\mid Z=0\right]}{\pi_{11}(1-\mu_0) + \pi_{20}\mu_0}\times   \Pr\left[\widehat{Z}=1\mid f^\star=1,Z=0\right] \\
		& && + \frac{\mu_1\cdot \Pr\left[f^\star=1\mid Z=1\right]}{\pi_{11}(1-\mu_0) + \pi_{20}\mu_0} \times  \Pr\left[\widehat{Z}=1\mid f^\star=1,Z=1\right],
		\end{split}
		\end{eqnarray}
		We then analyze the right side of the Equation~\eqref{eq:gap2_1}.
		We take the term $\Pr\left[\widehat{Z}=0\mid f^\star=1,Z=1\right]$ as an example.
		Let $A=\left\{a\in [N]: f^\star(s_a) = 1, z_a = 0\right\}$.
		By Assumption~\ref{assumption:ratio}, we have $|A|\geq \lambda N$.
		For $i\in A$, let $X_i$ be the random variable where $X_i=1-\widehat{z}_i$.
		By Definition~\ref{def:flippingnoise}, we know that $\Pr\left[X_i = 1\right] = \eta$.
		Also, 
		\begin{eqnarray}
		\label{eq:gap2_3}
		\Pr\left[\widehat{Z}=0\mid f^\star=1,Z=1\right] = \frac{\sum_{i\in A} X_i}{|A|}.
		\end{eqnarray}
		Since all $X_i$ ($i\in A$) are independent, we have
		\begin{eqnarray}
		\label{eq:gap2_4}
		\begin{split}
		\Pr\left[\sum_{i\in A} X_i\in (1\pm \frac{\eps}{8})\cdot\eta |A| \right] 
		&\geq && 1-2e^{-\frac{\eps^2 \eta |A|}{192}} \quad (\text{Chernoff bound}) \\
		& \geq && 1-2e^{-\frac{\eps^2 \eta \lambda N}{192}}. \quad (|A|\geq \lambda N)
		\end{split}
		\end{eqnarray}
		Thus, with probability at least $1-2e^{-\frac{\eps^2 \eta \lambda N}{192}}$,
		\begin{eqnarray*}
			\label{eq:gap2_5}
			\begin{split}
				\Pr\left[\widehat{Z}=0\mid f^\star=1,Z=1\right] 
				& = && \frac{\sum_{i\in A} X_i}{|A|} & (\text{Eq.~\eqref{eq:gap2_3}})\\
				& \in && (1\pm \frac{\eps}{8})\cdot \frac{\eta |A|}{|A|} & (\text{Ineq.~\eqref{eq:gap2_4}}) \\
				& \in && (1\pm \frac{\eps}{8}) \eta. &
			\end{split}
		\end{eqnarray*}
		Consequently, we have
		\begin{align*}
		\Pr\left[\widehat{Z}=1\mid f^\star=1,Z=1\right]
		& = && 1- \Pr\left[\widehat{Z}=0\mid f^\star=1,Z=1\right] & \\
		&\in && 1-(1\pm \frac{\eps}{8}) \eta & (\text{Ineq.~\eqref{eq:gap2_5}}) \\
		&\in && (1\pm \frac{\eps}{8}) (1-\eta) & (\eta< 0.5)
		\end{align*}
		Similarly, we can prove that with probability at least $1-4e^{-\frac{\eps^2 \eta \lambda N}{192}}$, 
		\begin{itemize}
			\item $\pi_{01},\pi_{20}, \Pr\left[\widehat{Z}=1\mid f^\star=1,Z=0\right],$ $ \Pr\left[\widehat{Z}=0\mid f^\star=1,Z=1\right] \in (1\pm \frac{\eps}{8})\eta$;
			\item $\pi_{00},\pi_{11}, \Pr\left[\widehat{Z}=0\mid f^\star=1,Z=0\right],$ $ \Pr\left[\widehat{Z}=1\mid f^\star=1,Z=1\right] \in (1\pm \frac{\eps}{8})(1-\eta)$.
		\end{itemize}
		Applying these inequalities to Equations~\eqref{eq:gap2_1} and~\eqref{eq:gap2_2}, we have that with probability at least $1-4e^{-\frac{\eps^2 \eta \lambda N}{192}}$, 
		\begin{eqnarray*}
			\begin{split}
				\frac{\Pr\left[f^\star=1\mid \widehat{Z}=0\right]}{\Pr\left[f^\star=1\mid \widehat{Z}=1\right]} 
				& \in && (1\pm \eps)\cdot \frac{\eta\mu_0 + (1-\eta) (1-\mu_0)}{(1-\eta)\mu_0 + \eta (1-\mu_0)}\times \frac{(1-\eta)\mu_0 \gamma(f^\star, S)+\eta(1-\mu_0)}{\eta\mu_0 \gamma(f^\star,S)+(1-\eta)(1-\mu_0)} \\
				& \in && (1\pm \eps)\cdot \Gamma, 
			\end{split}
		\end{eqnarray*}
		and
		\[
		\frac{\Pr\left[f^\star=1\mid \widehat{Z}=1\right]}{\Pr\left[f^\star=1\mid \widehat{Z}=0\right]} \in (1\pm \eps)\cdot \frac{1}{\Gamma}.
		\]
		By the definition of $\gamma(f^\star, \widehat{S})$, we complete the proof.
	\end{proof}

\end{document}